\definecolor{darkblue}{rgb}{0,0.22,0.66}
\definecolor{darkcyan}{RGB}{0, 139, 139}
\definecolor{darkgray}{HTML}{666666}
\setlist[enumerate]{
  topsep=3pt,
  itemsep=3pt,
  parsep=0pt,
  partopsep=0pt,
  leftmargin=*,   % 让 enumitem 自动按 label 宽度决定缩进（悬挂对齐）
  align=right,
  labelsep=0.600em,
  itemindent=0.00em , % 
  labelindent=0.6em,
}
\setlist[itemize]{
  topsep=3pt,
  itemsep=3pt,
  parsep=0pt,
  partopsep=0pt,
  leftmargin=*,   % 让 enumitem 自动按 label 宽度决定缩进（悬挂对齐）
  align=right,
  labelsep=0.600em,
  itemindent=0.00em , % 
  labelindent=0.6em,
}
\theoremstyle{plain}
\newtheorem{theorem}{Theorem}[section]%  
\newtheorem{corollary}[theorem]{Corollary} %
\newtheorem{lemma}[theorem]{Lemma}
\newtheorem*{lemma*}{Lemma}
\theoremstyle{definition}
\newtheorem{definition}[theorem]{Definition}
\theoremstyle{remark}
\newtheorem{remark}[theorem]{Remark}
\newtheorem*{remark*}{Remark}
\newcommand{\myMFabc}[4]{\expandafter#1\csname#3#4\endcsname{{#2{#4}}}}
\newcommand{\myMFcmd}[4]{\expandafter#1\csname#3#4\endcsname{{#2{\csname#4\endcsname}}}}
\newcommand{\MFabc}[3][\newcommand]{
    \def\doOld##1##2{\forcsvlist{\myMFabc{#1}{##1}{##2}}{#3}}
    \providecommand{\do}{do}
    \RenewDocumentCommand \do { >{\SplitList{,}} m } { \doOld##1 }
    \docsvlist{#2}
}
\newcommand{\MFcmd}[3][\newcommand]{
    \def\doOld##1##2{\forcsvlist{\myMFcmd{#1}{##1}{##2}}{#3}}
    \providecommand{\do}{do}
    \RenewDocumentCommand \do { >{\SplitList{,}} m } { \doOld##1 }
    \docsvlist{#2}
}
\newcommand{\hatbm}[1]{\widehat{\bm{#1}}}
\newcommand{\tildebm}[1]{\widetilde{\bm{#1}}}
\newcommand{\bmcal}[1]{\bm{\mathcal{#1}}}
\newcommand{\caltilde}[1]{\mathcal{\widetilde{#1}}}
\newcommand{\calhat}[1]{\mathcal{\widehat{#1}}} 
\newcommand{\scrtilde}[1]{\widetilde{\mathscr{#1}\mspace{1mu}\mspace{-1mu}}}% \mspace{1mu}\mspace{-1mu} to avoid spacing conflict \mathscr and \widetilde
\newcommand{\scrhat}[1]{\mathscr{\widehat{#1}\mspace{1mu}\mspace{-1mu}}}
\newcommand{\bmcalhat}[1]{\bm{\mathcal{\widehat{#1}}}}
\newcommand{\bmcaltilde}[1]{\bm{\mathcal{\widetilde{#1}}}}
\let\eps\varepsilon
\newcommand{\actdef}[1]{\expandafter\def\csname#1\endcsname{{\ensuremath{\mathtt{#1}}}}}
\forcsvlist{\actdef}{ReLU, LReLU, LeakyReLU, ELU, GELU, SiLU, Softplus, dGELU, dSiLU, dSoftplus, Tanh, Sigmoid, Arctan, Softsign, SRS, dSRS, Swish, dSwish, Mish, dMish, SELU, CELU, dSELU, Sin,SinLU, SinTU, PSinTU, sine, cosine, Sine, Cosine, EUAF}
\newlength{\myLength}
\newenvironment{keywords}{\par \noindent\textbf{Key words}.}{\par}
\newenvironment{MSCcodes}{\par\par \noindent\textbf{MSC codes}.}{\par}
\DeclareMathOperator*{\argmin}{arg\,min}
\definecolor{mylinenumbercolor}{HTML}{BEBEBE}
\newcommand{\mylineref}[1]{%
  \hyperref[#1zsj]{\textcolor{blue}{\ref{#1}}}%
}
\let\tilde\widetilde
\let\hat\widehat
\let\epsilon\varepsilon
\let\eps\varepsilon
\let\subset\subseteq
\let\tn\textnormal
\let\cdots\customcdots
\let\ldots\cdots
\let\dots\cdots
\let\myforall\forall
\def\forall{{\myforall\, }}
\let\myexists\exists
\def\exists{{\myexists\, }}
\let\emptyset\varnothing
\long\def\blue#1{{\color{blue}#1}\ignorespacesafterend}
\newenvironment{colorenv}[1][blue]
{%
  \begingroup
  \color{#1}%
  \ignorespaces
}
{%
  \endgroup
  \ignorespacesafterend
}
\definecolor{mygray}{RGB}{230,230,230}
\definecolor{myorange}{HTML}{ff7f0e}
\let\cite\citep
\def\doitext{DOI: }
\long\def\@makefntext#1{\@setpar{\@@par\@tempdima \hsize 
		\advance\@tempdima-15pt\parshape \@ne 15pt \@tempdima}\par
	\parindent 2em\noindent \hbox to \z@{\hss{\textsuperscript{\@thefnmark}} \hfil}#1}
\newlength\aftertitskip     \newlength\beforetitskip
\newlength\interauthorskip  \newlength\aftermaketitskip
\def\maketitle{\par
	\begingroup
	\def\thefootnote{\color{black}\fnsymbol{footnote}}
	\def\@makefnmark{\hbox to 0pt{$^{\@thefnmark}$\hss}}
	\@maketitle \@thanks
	\endgroup
	\setcounter{footnote}{0}
	\let\maketitle\relax \let\@maketitle\relax
	\gdef\@thanks{}\gdef\@author{}\gdef\@title{}\let\thanks\relax}
\def\@startauthor{\noindent \normalsize\bf}
\def\@endauthor{}
\def\@starteditor{\noindent \small {\bf Editor:~}}
\def\@endeditor{\normalsize}
\def\@maketitle{\vbox{\hsize\textwidth
		\linewidth\hsize \vskip \beforetitskip
		{\begin{center} \Large\bf \@title \par \end{center}} \vskip \aftertitskip
		{\def\and{\unskip\enspace{\rm and}\enspace}%
			\def\addr{\small\it}%
            \def\email{\hfill\small\ttfamily}%
			\def\name{\normalsize\bf}%
			\def\AND{\@endauthor\rm\hss \vskip \interauthorskip \@startauthor}
			\@startauthor \@author \@endauthor}
		\vskip \aftermaketitskip
}}
\newcommand{\ocomp}{\mathop{\scalerel*{\circledcirc}{\sum}}}
\def\aff{{\mathsf{Aff}}}
\colorlet{blue}{black}
\title{Multigrade Neural Network Approximation}
\author{\name   Shijun Zhang\thanks{Corresponding author
}
\email  \href{mailto:shijun.zhang@polyu.edu.hk}{shijun.zhang@polyu.edu.hk}\\
\addr  Department of Applied Mathematics\\
% The 
Hong Kong Polytechnic University
\AND
\name  Zuowei Shen
\email 
\href{mailto:matzuows@nus.edu.sg}{matzuows@nus.edu.sg}\\
\addr  Department of Mathematics\\
National University of Singapore
\AND  \name  Yuesheng Xu
\email \href{mailto:y1xu@odu.edu}{y1xu@odu.edu}
\\ 
\addr  Department of Mathematics and Statistics\\ Old Dominion University
    }
\begin{document}
\maketitle
%%%%%%%%%%%%%%%%%%%%%%%%%%%%%%%%%%%%%%%%%%%%%%

\begin{abstract}

We study multigrade deep learning (MGDL) as a principled framework for structured error refinement in deep neural networks. While the approximation power of neural networks is now relatively well understood, training very deep architectures remains challenging due to highly nonconvex and often ill-conditioned optimization landscapes. In contrast, for relatively shallow networks, \blue{most notably certain one-hidden-layer \ReLU\ models, training admits convex reformulations with global guarantees under appropriate settings,} motivating learning paradigms that improve stability while scaling to depth. 
\blue{MGDL builds on this insight by training deep networks grade by grade: previously learned grades are frozen, and each newly added grade-wise subnetwork is composed on top of the previously learned grades and trained to fit the residual left by the current approximation, yielding a structured and interpretable hierarchical refinement process.}
We develop an operator-theoretic foundation for MGDL and prove that, for any continuous target function defined on a hypercube, there exists a fixed-width multigrade \ReLU\ scheme whose residuals are \blue{pointwise nonincreasing in magnitude and converge uniformly to zero, with strict $L^p$-norm decay at every nontrivial grade for $p\in [1,\infty)$. To the best of our knowledge, this work provides the first rigorous constructive approximation guarantee showing that a grade-wise residual refinement scheme can achieve vanishing error in a fixed-width multigrade \ReLU\ architecture.}

\end{abstract}

% \tableofcontents

\begin{keywords}
    Multigrade Deep Learning, Structured Error Refinement, Balanced Contraction Operator, Deep Network Approximation
\end{keywords}

\vspace{12pt}
\begin{MSCcodes}
65D15, 41A46, 68T07.
\end{MSCcodes}

\section{Introduction}\label{sec:intro}

Deep neural networks have achieved remarkable empirical success in areas such as computer vision, natural language processing, and scientific computing, owing to their strong expressive power for modeling complex input-output relationships. From an approximation-theoretic perspective, this expressive capacity is now relatively well understood: sufficiently deep and wide networks can approximate broad classes of functions with high accuracy \cite{shijun:optimal:rate:in:width:and:depth,shijun:Characterized:by:Numer:Neurons,yarotsky2017}.
Despite these advances, the effective training of deep neural networks with many layers is still not fully understood.
% Despite these advances, training DNNs with many layers remains challenging. 
Standard end-to-end training leads to highly nonconvex and often ill-conditioned optimization landscapes, making performance sensitive to initialization, learning rates, and architectural choices. These challenges are further exacerbated by vanishing or exploding gradients, spectral bias toward low-frequency features \cite{pmlr-v97-rahaman19a, xu2019training}, and short-term oscillations near the edge of stability \cite{arora2022understanding, cohen2021gradient}. In contrast, the optimization landscape of relatively shallow architectures is fundamentally different; in particular, 
\blue{training certain regularized one-hidden-layer \ReLU{} networks can be reformulated as convex optimization problems under appropriate settings,}
yielding global optimality and strong theoretical guarantees for both training and generalization \cite{2020arXiv200210553P,FangXu2025}.

This contrast reveals a fundamental dilemma: deep networks offer strong approximation power but are difficult to optimize, whereas relatively shallow networks are easier to train but have limited expressive capacity. Motivated by this trade-off, multigrade deep learning (MGDL) was introduced in \cite{Xu2023}. The MGDL framework incrementally constructs network architectures via grade-wise training, aiming to combine the approximation advantages of deep networks with improved stability, accuracy, and interpretability.
Specifically, MGDL decomposes end-to-end optimization into a sequence of smaller subproblems, each training a relatively shallow network to approximate the residuals from previous grades. Networks learned at earlier grades are kept fixed and serve as adaptive basis functions or feature maps. This iterative refinement reduces optimization complexity while progressively improving approximation quality.
While MGDL has demonstrated promising empirical performance and has been shown to achieve grade-wise training error reduction, a fundamental theoretical question remains open: whether such grade-wise training guarantees that the approximation error converges to zero in the limit. Addressing this gap is essential for establishing a rigorous theoretical foundation for the MGDL framework. 
% The goal of this paper is to provide the first rigorous proof that, under suitable conditions, grade-wise training yields vanishing approximation error as grade number tends to infinity.
\blue{The main goal of this work is to show that, under appropriate architectural
conditions, this grade-wise residual-refinement mechanism admits a constructive
realization whose residuals decrease monotonically and converge to zero as the
number of grades tends to infinity.}

% \subsection{Multigrade deep learning}

% \red{compositions compared to linear combinations}
% To address these challenges, we consider the multigrade deep learning (MGDL) framework, which has recently been explored empirically in \cite{Xu2023}.
The central idea of MGDL is \emph{grade-by-grade learning}: the network is constructed and trained one grade at a time, with  only the newly added block optimized at each grade. All previously trained blocks are frozen and serve as adaptive basis components. Rather than learning the target function in a single global optimization, MGDL decomposes the task into a sequence of residual learning subproblems with progressively finer resolution. This recursive, coarse-to-fine structure promotes improved approximation and often results in more stable training dynamics.

\begin{figure}[!ht]
    \centering
    \includegraphics[width=0.98\textwidth]{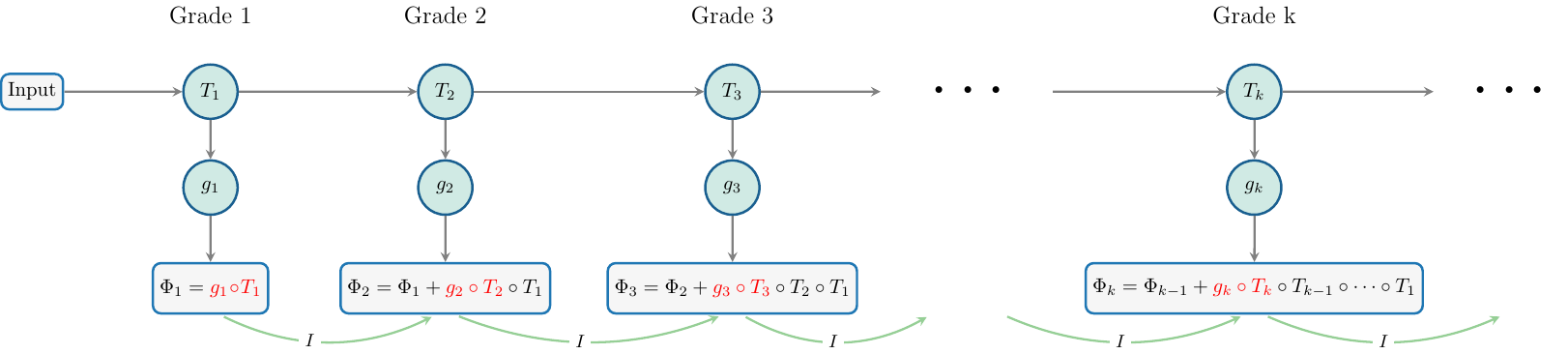}
    \caption{Overview of the  framework and its grade-wise training procedure. Training proceeds grade by grade: at each grade, only the newly added block and the output map are optimized, while all previously learned blocks are frozen and serve as adaptive basis components. The network \(\Phi_k\) approximates the target function via a recursive residual refinement process.}
    \label{fig:MGNN}
\end{figure}

As illustrated in Figure~\ref{fig:MGNN}, each grade introduces a new learnable block while keeping all previously trained components fixed. At grade \(k\), the module \(T_k\) serves as a feature transformation block, typically implemented as a shallow unit of the form \(\sigma \circ \mathcal{A}_k\), where \(\mathcal{A}_k\) is an affine map and \(\sigma\) is a fixed activation. The accompanying map \(g_k\) is a simple output operator, often affine, that converts the transformed features into a correction term for the current residual.

Concretely, at grade \(k\) we \emph{freeze} the previously learned blocks \(T_1,\dots,T_{k-1}\) and optimize only the new pair \((T_k, g_k)\). The objective is to train the composition
\[
g_k \circ T_k \circ T_{k-1} \circ \cdots \circ T_1
\]
to best approximate the residual 
\[R_{k-1} = f - \Phi_{k-1},
\] 
where \(\Phi_{k-1}\) denotes the approximation obtained from the first \(k-1\) grades. Once trained, the approximation is updated by
\[
\Phi_k = \Phi_{k-1} + g_k \circ T_k \circ \cdots \circ T_1.
\]
Thus, each grade solves a well-defined subproblem: \(T_k\) extracts new features based on all preceding transformations, and \(g_k\) fits the current residual using these features. 
MGDL adopts a stage-wise learning strategy, where each grade searches and optimizes only within a relatively low-complexity parameter (hypothesis) space; see Figure~\ref{fig:MGDL_training_path} for an illustration of this grade-wise training process. This design keeps each optimization step simple and interpretable, while raising a central theoretical issue about its overall approximation capability. The main goal of this work is to show that, under appropriate assumptions, MGDL’s grade-wise training produces an approximation error that decreases monotonically and converges to zero as the number of grades tends to infinity.
A detailed algorithmic formulation of this grade-wise optimization procedure is provided in Section~\ref{sec:mgdl-algorithm}.

\begin{figure}[!ht]
    \centering
    \includegraphics[width=0.70\textwidth]{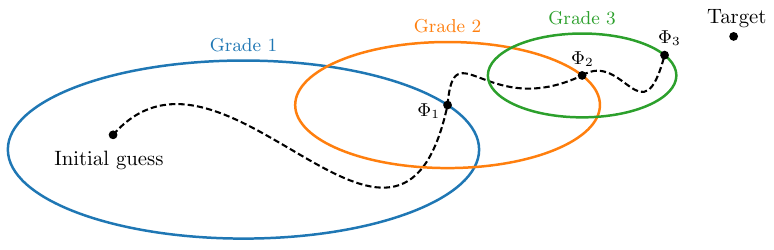}
    \caption{\color{blue} Overview of the MGDL grade-wise training procedure. }
    \label{fig:MGDL_training_path}
\end{figure}

% MGDL carries out learning in a stage-wise manner, where each grade only searches and optimizes over a relatively low-complexity parameter (hypothesis) space; see Figure~\ref{fig:MGDL_training_path} for an illustration of this grade-wise training process. This construction keeps each optimization step conceptually simple and structurally transparent. At the same time, it brings forth a central theoretical issue: can such strictly local, grade-by-grade updates, without end-to-end joint optimization, still guarantee that the overall network achieves approximation performance on par with standard deep training? The main goal of this work is to answer this question affirmatively by establishing that, under appropriate assumptions, MGDL’s grade-wise training drives the approximation error to zero as the number of grades tends to infinity.

In particular, when each block \(T_i\) is chosen as a one-hidden-layer transformation of the form
\(T_i = \sigma \circ \mathcal{A}_i\), and each output map \(g_i\) is affine, denoted by
\(\mathcal{A}_i^{\mathrm{out}}\), the grade-wise update takes the simplified form
\[
\Phi_k
=
\Phi_{k-1}
+
\mathcal{A}_k^{\mathrm{out}}
\circ
\sigma
\circ
\mathcal{A}_k
\circ
\cdots
\circ
\sigma
\circ
\mathcal{A}_1.
\]
\begin{colorenv}[blue]
Consequently, the network obtained after \(k\) grades admits the explicit representation
\begin{equation}
\label{eq:def:Phi:k}
    \Phi_k
=
\sum_{\ell=1}^k
\mathcal{A}_{\ell}^{\mathrm{out}}
\circ
\ocomp_{i=1}^{\ell}
\bigl(\sigma \circ \mathcal{A}_i\bigr).
\end{equation}
Here, \(\ocomp\) denotes ordered composition: for a collection of maps
\(\{f_i\}_{i=n}^m\), we define
    \[
\ocomp_{i=n}^m f_i
\;\coloneqq\;
f_m  \circ \cdots \circ f_{n+1} \circ f_n.
\]
% when $m<n,$ $\ocomp_{i=n}^m f_i$ degenerate to the identity map

\end{colorenv}

% \red{   
% add more comments,
% 1. orth basis, learn $\alpha_i e_i$ in order
% 2. sparse approx,  find largest $\alpha_i$
% 3. we learn adaptive basis
% $y_\ell=A_\ell^{out} \circ \ReLU \circ A_\ell \circ \ReLU\circ A_1$, $e_\ell= y_\ell/ \|y_\ell\|$, similar to sparse representation, but adaptive basis,
% 4. gradual improve, 
% not need to very low accuracy init, 
% end-to-end, possible, but hard to learn, too small error
% }

Viewed more broadly, MGDL departs fundamentally from standard end-to-end
training by organizing deep network construction as a sequence of
grade-wise residual refinements.
At each grade, the newly introduced block is composed with all previously
learned transformations, which remain frozen during training.
This design preserves the full compositional expressive power of deep
networks while decomposing learning into a sequence of controlled and
interpretable subproblems.
In contrast to classical approaches based on linear combinations of fixed basis functions, MGDL introduces adaptive, data-driven basis functions at each grade. These components are realized through additional layer compositions built upon frozen earlier transformations.
As a result, the representational capacity grows in a structured and hierarchical manner, enabling progressively richer feature extraction while maintaining training stability and interpretability.

From both optimization and approximation perspectives, MGDL implements a
divide-and-conquer strategy.
Each grade solves a relatively shallow residual learning problem, thereby reducing nonconvex
interactions across layers, improving conditioning, and inducing a natural
coarse-to-fine refinement mechanism in which dominant structures are
captured first and finer details are resolved progressively.
This hierarchical structure also facilitates theoretical analysis and, in certain
settings, allows multigrade training to be interpreted as a sequence of
simpler optimization problems, including \emph{convex} formulations for shallow
\ReLU{} networks~\cite{2020arXiv200210553P,FangXu2025}.

Empirically, MGDL has demonstrated faster convergence, improved numerical
stability, and enhanced approximation accuracy across regression, imaging, integral equations,
and PDE-based applications~\cite{Jiang-Xu2025,Xu2023,FangXu2024,FangXu2025,XuZeng2023}.
Despite these empirical successes, and several theoretical observations concerning monotone training-error decay and reduced sensitivity of optimization convergence to learning rates, existing studies of MGDL remain largely experimental from an approximation-theoretic perspective.
In particular, there is currently no rigorous theoretical framework that characterizes structural
conditions under which the residual converges uniformly to zero.
Establishing such guarantees is essential for elevating MGDL from a
practical training paradigm to a principled approximation framework.

\begin{colorenv}[blue]
The main objective of this paper is to establish a rigorous
approximation-theoretic foundation for MGDL. We formulate the grade-wise
refinement process from an operator-theoretic perspective and prove that, for a
concrete fixed-width multigrade \ReLU{} architecture, the resulting
approximants converge uniformly to the target function, while the residuals
decrease monotonically across grades.
More precisely, our main result, Theorem~\ref{thm:main:mgdl}, shows that for every
\(f\in C([0,1]^d)\), there exist affine maps \(\mathcal{A}_i\) and output
affine maps \(\mathcal{A}_i^{\mathrm{out}}\), with all intermediate dimensions
bounded by \(5d+1\), such that the multigrade approximants \(\Phi_k\) for
\(k=1,2,\ldots\), defined in \eqref{eq:def:Phi:k} with \(\sigma=\ReLU\), converge
uniformly to \(f\) on \([0,1]^d\). More concretely, if
\(R_k \coloneqq f-\Phi_k\)
denotes the residual after \(k\) grades, then \(R_k\to 0\) uniformly on
\([0,1]^d\). Moreover, the residuals are pointwise nonincreasing in magnitude: $|R_{k+1}(\bmx)| \le |R_k(\bmx)|$
for all $k$ and $\bmx\in[0,1]^d$,
and, for every \(p\in [1,\infty)\), their \(L^p\) norms decrease strictly at each
grade unless the residual is already identically zero.

\vspace{3pt}
The remainder of this paper is organized as follows.
Section~\ref{sec:mgdl-theory} develops the residual-refinement perspective underlying MGDL. It establishes the main approximation and contraction results, explains how successive grades reduce the residual left by the previous grades, and relates this viewpoint to dictionary-based approximation and existing neural-network approximation theory.
Section~\ref{sec:mgdl-framework} then describes
the practical MGDL algorithm: Section~\ref{sec:mgdl-algorithm} formulates the
grade-wise optimization procedure, while Section~\ref{sec:mgdl-experiments}
reports numerical experiments illustrating the residual-refinement mechanism.
Section~\ref{sec:proof:thm:main:mgdl} proves the main approximation theorem
by reducing it to the one-step contraction result
Theorem~\ref{thm:main:mgdl:one:op}. The proof of this auxiliary theorem is
given in Section~\ref{sec:proof:thm:main:mgdl:one:op}, where the contraction
operator is constructed and realized by fixed-width \ReLU{} networks.
Finally, Section~\ref{sec:conclusion} concludes the paper and discusses future
directions.
\end{colorenv}

\section{Residual refinement}\label{sec:mgdl-theory}

\begin{colorenv}[blue]
    Residual refinement refers to an iterative approximation strategy in which each
new grade is constructed to correct the residual left by the previous grades.
This section develops MGDL as such a structured residual-refinement framework.
Section~\ref{sec:main-results} states the main fixed-width approximation
theorem and shows that the resulting residuals converge uniformly to zero with
a monotone contraction property. Section~\ref{sec:mgdl-dictionary} interprets
this refinement process from the viewpoint of dictionary-based approximation,
where grade-wise learning corresponds to an adaptive expansion of a
compositional dictionary. Finally, Section~\ref{sec:related-work} reviews
related work on neural network approximation, frequency-based learnability,
and hierarchical training schemes, and clarifies how the present residual
refinement perspective differs from existing approaches.
\end{colorenv}
% This section develops the theoretical foundation of MGDL as a structured
% residual-refinement framework. Section~\ref{sec:main-results} states the main
% fixed-width approximation theorem and its quantitative contraction consequence.
% Section~\ref{sec:mgdl-dictionary} then gives an interpretation of MGDL through
% the lens of dictionary-based approximation, explaining how grade-wise learning
% can be viewed as an adaptive compositional dictionary expansion. Finally,
% Section~\ref{sec:related-work} reviews the most relevant literature on neural
% network approximation, frequency-based learnability, and hierarchical training
% schemes, and clarifies the position of our contribution.

\subsection{Monotone residual refinement in MGDL}
% \subsection{Main results}
\label{sec:main-results}

% The main objective of this paper is to establish a rigorous theoretical foundation for MGDL. We develop an operator-theoretic perspective on the grade-wise refinement process and prove that, for a concrete class of fixed-width multigrade \ReLU{} architectures, one can construct an MGDL scheme whose residuals decrease strictly at every grade and converge uniformly to the target function.

Throughout the paper, we use $\aff_{\le n}$ to denote the collection of all affine maps whose input and output dimensions are at most $n$, that is,
\begin{colorenv}
    \[
\aff_{\le n} \coloneqq 
\bigcup_{1 \le k \le n}  \, \bigcup_{1 \le m \le n}   \aff(k,m),
\]
where $\aff(k,m)$ denotes the set of affine transformations from $\mathbb{R}^k$ to $\mathbb{R}^m$, namely,
\[
\aff(k,m)
= \left\{
\bm{x} \mapsto \bm{W} \bm{x} + \bm{b} \ :\ \bm{W} \in \mathbb{R}^{m \times k},\ \bm{b} \in \mathbb{R}^m
\right\}.
\]
\end{colorenv}
% \[
% \aff_{\le n}
% \coloneqq 
% \bigcup_{1 \le k \le n}\ \bigcup_{1 \le m \le n}
% \bigl\{\bmx \mapsto \bmW\bmx + \bmb : \bmW \in \mathbb{R}^{m\times k},\ \bmb \in \mathbb{R}^m \bigr\}.
% \]
As usual, $\mathbb{N}$, \( \mathbb{Z} \), and $\mathbb{R}$ denote the sets of natural numbers (including $0$), integers, and real numbers, respectively, and we write
\(\mathbb{N}^+ \coloneqq \mathbb{N} \setminus \{0\} = \{1,2,3,\dots\}.\)
For a domain $\Omega \subseteq \mathbb{R}^d$, $C(\Omega)$ denotes the space of real-valued continuous functions on $\Omega$.
\begin{colorenv}[blue]
    For $\bmx=(x_1,\ldots,x_d)\in\R^d$, we write
$\bmx{[n:m]}=(x_n,\ldots,x_m)$ and $\bmx{[n]}=x_n$
for $1\le n\le m\le d$, e.g., if $\bmx\in\R^5$, then $(5\bmx){[2:3]}=(5x_2,5x_3)$.
\end{colorenv}
% This notation is applied after vector operations; for example,
% if $\bmx\in\R^3$, then
% \[
% (5\bmx){[2:3]}=(5x_2,5x_3),
% \qquad
% (6\bmx+1){[3]}=6x_3+1.
% \]
% For a family of maps $\{f_i\}_{i=1}^n$, we use  
% \(\ocomp_{i=1}^n f_i \coloneqq f_1 \circ f_2 \circ \cdots \circ f_n\)
% % \[
% % \ocomp_{i=1}^n f_i \coloneqq f_1 \circ f_2 \circ \cdots \circ f_n
% % \]
% to denote ordered composition.
%  Affine maps are typically denoted by
% $\mathcal{A}_i$ or $\mathcal{A}_i^{\mathrm{out}}$. 
The symbol
$\sigma:\mathbb{R}\to\mathbb{R}$ denotes a fixed activation function, with $\ReLU$
being the primary example considered in this work. To describe monotone convergence, we write $a_n \searrow a$ to
denote a decreasing sequence converging to $a$, and say the convergence is
``strictly'' if $a_{n+1}<a_n$ for all $n$.
Additional notation will be introduced as needed.

We now state our main approximation theorem for MGDL.

% \red{to sec 3.1}
% \begin{theorem}\label{thm:main:mgdl}
% Given $f \in C([0,1]^d)$, there exist affine maps 
% $\mathcal{A}_{k},\, \mathcal{A}_{k}^{\mathrm{out}} \in \aff_{\le 5d}$ for all
% $k\in\mathbb{N}^+$ such that, defining $f_1 \coloneqq f$ and the residual
% sequence
% \[
%    f_{k+1} \coloneqq f_k - \mathcal{A}_{k}^{\mathrm{out}} \circ 
%    \ocomp_{i=1}^k (\ReLU \circ \mathcal{A}_i), \quad  k\in\mathbb{N}^+,
% \]
% the following properties hold:
% \begin{enumerate}[label=(\roman*)]
%     \item \emph{Uniform accuracy:}
%     \(\|f_k\|_{L^\infty([0,1]^d)} \to 0\) as \(k\to\infty\).

%     \item \emph{Strict $L^p$ monotonicity} for any \(p\in[1,\infty)\):
%     \(\|f_{k+1}\|_{L^p([0,1]^d)} < \|f_k\|_{L^p([0,1]^d)}\) for all
%     \(k\in\mathbb{N}^+\).

%     \item \emph{Pointwise domination:}
%     \(|f_{k+1}(\bmx)| \le |f_k(\bmx)|\) for all \(\bmx\in[0,1]^d\) and all
%     \(k\in\mathbb{N}^+\).
% \end{enumerate}
% \end{theorem}

% The proof of Theorem~\ref{thm:main:mgdl} is provided in
% Section~\ref{sec:detailed:proof:thm:main:mgdl}.
% In particular, the strict $L^p$ monotonicity and pointwise domination of the
% residual sequence imply a corresponding decay behavior for the truncated
% network approximations.
% This observation leads immediately to the following corollary.

\begin{theorem}\label{thm:main:mgdl}
Given $f \in C([0,1]^d)$, there exist affine maps 
$\mathcal{A}_{i},\, \mathcal{A}_{i}^{\mathrm{out}} \in \aff_{\le 5d+1}$ for all
$i\in\mathbb{N}^+$ such that
the sequence of multigrade
approximations
\[
\Phi_k
\;\coloneqq\;
\sum_{\ell=1}^k
\mathcal{A}_{\ell}^{\mathrm{out}}
\circ
\ocomp_{i=1}^{\ell}
\bigl(\ReLU \circ \mathcal{A}_i\bigr) 
\quad \tn{for all}  \   k\in \N^+
 \]
generates residuals \(R_k \coloneqq f - \Phi_k\) with the following properties: 
% \red{Consider the case $R_k=0$}
\begin{colorenv}[blue]
\begin{enumerate}[label=(\roman*)]
\item 
For every $\bmx \in [0,1]^d$,  
$|R_k(\bmx)| \searrow 0$  and 
$\|R_k\|_{L^\infty([0,1]^d)} \searrow 0$ as $k \to \infty$.

\item 
Given any $p \in [1,\infty)$, $\|R_k\|_{L^p([0,1]^d)}\searrow 0$ strictly until it reaches zero as $k \to \infty$, i.e., 
 for any $k\in \N^+$, 
either $R_{k+1}\equiv R_k\equiv0$ on $[0,1]^d$  or $\|R_{k+1}\|_{L^p([0,1]^d)}< \|R_k\|_{L^p([0,1]^d)}$.

% \item  
% For any preassigned $p \in [1,\infty)$, it holds that either $R_k$ 恒等于 0 for some $k\in \N^+$  or
% $\|R_k\|_{L^p([0,1]^d)}\searrow 0$ strictly as $k \to \infty$.
% \item 
% For any prescribed $p \in [1,\infty)$,  
% either 
% $\|R_k\|_{L^p([0,1]^d)}\searrow 0$ strictly as $k \to \infty$ or 
% there exists
% $k_0 \in \mathbb{N}^+$ such that $R_{k_0} \equiv 0$ on $[0,1]^d$.
% \item 
% For any prescribed $p \in [1,\infty)$, $\|R_k\|_{L^p([0,1]^d)}\searrow 0$，it is strictly until 0
% as $k \to \infty$. i.e.,  for any $k$, either $\|R_{k+1}\|_{L^p([0,1]^d)}< \|R_k\|_{L^p([0,1]^d)}$ or $\|R_{k+1}\|_{L^p([0,1]^d)} = \|R_k\|_{L^p([0,1]^d)}=0$.
% $\|R_k\|_{L^p([0,1]^d)}\searrow 0$，it is strictly until 
% as $k \to \infty$ or 
% there exists
% $k_0 \in \mathbb{N}^+$ such that $R_{k_0} \equiv 0$ on $[0,1]^d$.
\end{enumerate}
\end{colorenv}

\end{theorem}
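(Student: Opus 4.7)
The plan is to construct the affine maps $\{\mathcal{A}_i, \mathcal{A}_i^{\mathrm{out}}\}$ inductively in grades, so that at each grade $\ell$ the newly added pair implements a \emph{balanced contraction} of the residual: $|R_\ell(\bmx)| \le |R_{\ell-1}(\bmx)|$ pointwise, with strict inequality in sup-norm whenever $R_{\ell-1}\not\equiv 0$. Since $\Phi_k = \sum_{\ell=1}^k\delta_\ell$ for $\delta_\ell \coloneqq \mathcal{A}_\ell^{\mathrm{out}}\circ\ocomp_{i=1}^\ell(\ReLU\circ\mathcal{A}_i)$, such a per-grade contraction immediately drives $R_k$ downward in a monotone way.

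First I would fix a pass-through architecture. Because $[0,1]^d\subseteq[0,\infty)^d$ and $\ReLU$ is the identity on the nonnegative orthant, one can allocate $d$ rows of each $\mathcal{A}_i$ to copy the first $d$ coordinates of its input unchanged, so that the first $d$ components of every feature map $F_\ell\coloneqq\ocomp_{i=1}^\ell(\ReLU\circ\mathcal{A}_i)$ equal $\bmx$ verbatim. The remaining at most $4d$ coordinates form a working register. The width budget $5d$ is dimensioned precisely so that, on top of the $\bmx$ pass-through, one can simultaneously carry shifted/negated copies of $\bmx$ for $\ReLU$-based absolute-value and clipping primitives and still leave room for the fresh ridge features supporting the new correction.

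The technical core is a per-grade balanced contraction lemma: for any nonzero continuous $R$ on $[0,1]^d$ and any feature map $F_{\ell-1}$ preserving $\bmx$ in its first $d$ coordinates, there exist $\mathcal{A}_\ell,\mathcal{A}_\ell^{\mathrm{out}}\in\aff_{\le 5d}$ such that the new increment
\[
\delta_\ell(\bmx) \;=\; \mathcal{A}_\ell^{\mathrm{out}}\circ\ReLU\circ\mathcal{A}_\ell\circ F_{\ell-1}(\bmx)
\]
satisfies $|R(\bmx)-\delta_\ell(\bmx)|\le |R(\bmx)|$ pointwise and $\|R-\delta_\ell\|_\infty<\|R\|_\infty$. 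I would prove it in two stages: (i) density of shallow $\ReLU$ ridge sums in $C([0,1]^d)$ yields a coarse approximation $\tilde R$ with $\|\tilde R-R\|_\infty\le\eta\|R\|_\infty$ for some fixed $\eta\in(0,1)$; (ii) a small collection of additional $\ReLU$ units, acting on the preserved $\bmx$ together with shifted copies used to synthesize $|R|$-envelope primitives, clips $\tilde R$ pointwise into the tight envelope $[\min(0,2R(\bmx)),\max(0,2R(\bmx))]$. The clipped output $\delta_\ell$ lies inside this envelope by construction (hence the pointwise bound) and, because clipping can only reduce a deviation that already overshoots, still attains a strict sup-norm gap on a neighborhood of the point where $|R|$ peaks.

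With the lemma in hand, I would iterate it with $R=R_{\ell-1}$ to obtain residuals $\{R_\ell\}$ satisfying $|R_\ell|\le|R_{\ell-1}|$ pointwise and $\|R_\ell\|_\infty$ strictly decreasing. Let $\mu\coloneqq\lim_\ell\|R_\ell\|_\infty\ge 0$. If $\mu>0$, the family $\{R_\ell\}$ lies in a uniformly bounded, uniformly equicontinuous subset of $C([0,1]^d)$ (the fixed target $f$ perturbed by a sequence whose per-grade modifications are bounded via the envelope), so Arzel\`a--Ascoli extracts a uniformly convergent subsequence $R_{\ell_k}\to R^\star$ with $\|R^\star\|_\infty=\mu>0$. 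Applying the contraction lemma to $R^\star$ produces a strict uniform gap for the near-tail and contradicts $\|R_\ell\|_\infty\searrow\mu$; hence $\mu=0$. This together with pointwise monotonicity delivers (i). For (ii), $|R_\ell|^p\le\|R_0\|_\infty^p$ is integrable on $[0,1]^d$, so dominated convergence yields $\|R_\ell\|_{L^p}\to 0$; and strictness follows since $\|R_{\ell+1}\|_{L^p}=\|R_\ell\|_{L^p}$ with $|R_{\ell+1}|\le|R_\ell|$ a.e.\ would force equality a.e., hence everywhere by continuity, contradicting $\|R_{\ell+1}\|_\infty<\|R_\ell\|_\infty$.

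The hard part is the balanced contraction lemma. Standard ridge or greedy shallow approximations easily give strict sup-norm decrease but do not respect the envelope $[\min(0,2R),\max(0,2R)]$, which collapses near the zero set of $R$. Designing a $\ReLU$-based clipping that is (a) realizable via a single new layer composed atop the frozen $F_{\ell-1}$, (b) fits within the width budget $5d$ while preserving the $\bmx$ pass-through, and (c) simultaneously guarantees strict sup-norm improvement together with pointwise non-overshoot for an arbitrary continuous residual is where the constant $5d$ and the operator-theoretic ``balanced contraction'' framework advertised in the abstract come together; this is the technical heart I expect to be the principal obstacle.
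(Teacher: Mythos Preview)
Your balanced-contraction lemma, as sketched, has a genuine gap at stage~(ii). The envelope $[\min(0,2R),\max(0,2R)]$ into which you want to clip $\tilde R$ is defined in terms of the unknown continuous function $R$ itself, and there is no way to ``synthesize $|R|$-envelope primitives'' from the pass-through coordinates $\bmx$ alone using finitely many $\ReLU$ units. In particular, the pointwise constraint $|R-\delta_\ell|\le|R|$ forces $\delta_\ell$ to vanish on the (generically curved) zero set of $R$; a piecewise-linear $\delta_\ell$ can do this only by vanishing on an open neighbourhood of that set, which makes exact enforcement of the envelope impossible for general $R$ and destroys any hope of simultaneously getting strict sup-norm improvement near $\{R=0\}$. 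Your Arzel\`a--Ascoli step is also unsupported: the envelope bound controls $|\delta_\ell|$ pointwise but says nothing about the moduli of continuity of $\Phi_\ell$, so equicontinuity of $\{R_\ell\}$ is not established. And even granting a limit $R^\star$, your lemma gives only \emph{strict} sup-norm decrease, not a quantitative one, so you cannot conclude that the already-chosen $\delta_{\ell_k+1}$ beats $\mu$ by a definite amount.

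The paper circumvents all of this by \emph{not} attempting a per-grade sup-norm contraction. Instead, each grade adds a single localized cutoff $h_k=\varepsilon M\,\Gamma_{Q_k}$, a trapezoidal bump supported on a small dilated cube $rQ_k$ that meets the super-level set $\{g\ge(1-\varepsilon)\|g^+\|_\infty\}$. The cube side $\delta_{g,\varepsilon}$ is chosen via the modulus of continuity of $g^+$ so that $g\ge 2^d\varepsilon M$ on every $rQ_k$; since at most $2^d$ dilated grid cubes overlap at a point (this is where $\varepsilon<1/(1+2^d)$ enters), the sum of all bumps is pointwise $\le g^+$. Thus the pointwise domination $|f_{k+1}|\le|f_k|$ comes for free from \emph{geometry of the support}, not from clipping against an $R$-dependent envelope. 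Only after all $n=n_{f,\varepsilon}+n_{-f,\varepsilon}$ bumps (positive and negative sides) have been laid down does one obtain the quantitative $(1-\varepsilon)$ sup-norm contraction; iterating this block gives geometric decay directly, with no compactness argument needed.
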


% \red{by Dini’s Theorem, For every $\bmx \in [0,1]^d$,  
% $|R_k(\bmx)| \searrow 0$  as $k \to \infty$   implies  
% $\|R_k\|_{L^\infty([0,1]^d)} \searrow 0$.}

The proof of Theorem~\ref{thm:main:mgdl} is presented in
Section~\ref{sec:proof:thm:main:mgdl}. We emphasize that the proof is fully
constructive. More precisely, after a finite number of grades, determined by
the complexity of the target function and a prescribed accuracy parameter \blue{$\eps\in \big(0,\tfrac{1}{1+2^{d}}\big)$,}
the residual admits an $L^\infty$ contraction by a factor of $1-\varepsilon$.

\begin{corollary}\label{coro:mgdl:residual}
Given $f\in C([0,1]^d)$ and \blue{$\eps\in \big(0,\tfrac{1}{1+2^{d}}\big)$,} there exists a subsequence $\{k_j\}_{j=1}^\infty$, constructed explicitly from $f$ and $\eps$,
such that the residuals $R_k$ in Theorem~\ref{thm:main:mgdl} satisfy
\begin{equation*}
    \|R_{k_{j+1}}\|_{L^\infty([0,1]^d)}
    \le (1-\eps)\, \|R_{k_j}\|_{L^\infty([0,1]^d)}\quad \tn{for all }
     j\in \N^+.
\end{equation*}
\end{corollary}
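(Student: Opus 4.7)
The plan is to exploit the fully constructive proof of Theorem~\ref{thm:main:mgdl}: as the authors note, that construction produces an $L^\infty$ contraction by a factor $1-\varepsilon$ after a finite and explicitly computable number of grades, so the corollary should follow by iterating this contraction on successive residuals.

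First I would extract from the proof of Theorem~\ref{thm:main:mgdl} a quantitative \emph{single-stage contraction lemma}: for every $g \in C([0,1]^d)$ and every $\varepsilon \in (0,1)$, there exist an integer $N = N(g,\varepsilon) \in \mathbb{N}^+$ and affine maps $\{\mathcal{A}_i, \mathcal{A}_i^{\mathrm{out}}\}_{i=1}^N \subset \aff_{\le 5d}$ such that the associated grade-wise MGDL approximant $\widetilde{\Phi}_N$ of $g$ satisfies
\[
\|g - \widetilde{\Phi}_N\|_{L^\infty([0,1]^d)} \le (1-\varepsilon)\,\|g\|_{L^\infty([0,1]^d)}.
\]
This is precisely the finite-grade contraction step embedded in the uniform-convergence argument of Theorem~\ref{thm:main:mgdl}.

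I would then iterate the lemma to build the MGDL scheme promised by Theorem~\ref{thm:main:mgdl}. Set $k_1 := 0$ and $R_0 := f$; given $k_j$ and the continuous residual $R_{k_j}$, apply the lemma to $R_{k_j}$ with the fixed parameter $\varepsilon$ to obtain $N_j := N(R_{k_j},\varepsilon)$ new affine maps, and set $k_{j+1} := k_j + N_j$. Concatenating these affine maps across all stages yields a single sequence $\{\mathcal{A}_i,\mathcal{A}_i^{\mathrm{out}}\}_{i=1}^\infty \subset \aff_{\le 5d}$ that satisfies Theorem~\ref{thm:main:mgdl}, and by design the residuals obey
\[
\|R_{k_{j+1}}\|_{L^\infty([0,1]^d)} \le (1-\varepsilon)\,\|R_{k_j}\|_{L^\infty([0,1]^d)}
\]
for every $j \in \mathbb{N}^+$. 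The subsequence $\{k_j\}$ is determined explicitly from $f$ and $\varepsilon$ via the integers $N(R_{k_j},\varepsilon)$.

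The main technical obstacle is the \emph{gluing step}: each restart of the lemma must be consistent with the cumulative template
\[
\Phi_k = \sum_{\ell=1}^k \mathcal{A}_\ell^{\mathrm{out}} \circ \ocomp_{i=1}^\ell(\ReLU \circ \mathcal{A}_i),
\]
in which the inner composition grows with every grade, while the width bound $\aff_{\le 5d}$ must always be respected. To handle this I would rely on a standard identity-propagation device: a constant number of coordinates, within the $5d$ budget, are reserved to carry a copy of the raw input $\bmx \in [0,1]^d$ through the successive $\ReLU$ blocks (using, for instance, the identity $x = \ReLU(x)$ valid on $[0,1]$, or the pair representation $x = \ReLU(x) - \ReLU(-x)$ if signed coordinates are needed), so that any later stage can recover $\bmx$ from the accumulated inner composition and then act as if it were being trained from scratch on the current residual $R_{k_j}$. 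Verifying that this channel can be maintained throughout all grades, which is precisely the architectural device already needed in the proof of Theorem~\ref{thm:main:mgdl}, is what allows the recursive contraction construction to be expressed as a single MGDL scheme of the required form.
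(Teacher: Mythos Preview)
Your proposal is correct and matches the paper's approach: the single-stage contraction lemma you extract is precisely Theorem~\ref{thm:main:mgdl:one:op}, the subsequence is $k_j = \sum_{i=1}^j n_i$ obtained by iterating that theorem on successive residuals (this is stated explicitly as a remark inside the proof of Theorem~\ref{thm:main:mgdl}), and the gluing via identity propagation is exactly the device built into the width-$5d$ architecture shown in Figure~\ref{fig:h:k}. The only minor wrinkle is that Theorem~\ref{thm:main:mgdl:one:op} gives the contraction only for $\varepsilon \in (0, 1/(1+2^d))$, so for arbitrary $\varepsilon \in (0,1)$ one must bundle several applications into each stage; this is immediate and does not affect your argument.
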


We stress that the subsequence
$\{k_j\}_{j=1}^\infty$ in Corollary~\ref{coro:mgdl:residual} is obtained through an explicit construction depending on $f$ and $\eps$, rather
than by a purely existential argument, as will be evident from the
proof of Theorem~\ref{thm:main:mgdl}.
To the best of our knowledge,
these results provide the first rigorous
theoretical foundation for MGDL as a principled
framework for structured, grade-wise error refinement.
They demonstrate that deep networks can be constructed incrementally through a
sequence of residual corrections whose approximation errors decrease
monotonically across successive grades, with strict decrease in every
$L^p$ norm for $1 \le p < \infty$.
This establishes a precise mechanism by which depth enables
progressive refinement of approximation accuracy and offers a concrete connection
between MGDL training and classical approximation theory.

We emphasize that our analysis is carried out in the setting where the output map
$g$ is affine and each grade block $T_i$ is given by the composition of an affine
transformation with a fixed activation function. This formulation already
covers a broad and practically relevant class of neural network architectures.
The underlying ideas, however, are not intrinsically tied to this specific
configuration. 
To illustrate this point, consider a simple extension in which
each grade block $T_i$ is implemented as a two-hidden-layer module of the form
$T_i = \ReLU \circ \mathcal{A}_{2i} \circ \ReLU \circ \mathcal{A}_{2i-1}$, rather
than a one-hidden-layer block $\ReLU \circ \mathcal{A}_i$. The same
analytical strategy can be applied in this setting. In particular, if
approximation efficiency is not the primary concern, one may effectively render
the intermediate component $\ReLU \circ \mathcal{A}_{2i}$ inactive by choosing
it to implement an identity map. This is possible, for instance, using the
identity $\ReLU(x)-\ReLU(-x)=x$, which allows the additional layer to be absorbed
without altering the overall mapping.
More generally, extensions to richer choices of output maps and block structures
are expected to follow similar lines of reasoning, although they would require
handling a substantially larger class of architectures. A systematic treatment
of such generalizations is beyond the scope of the present work and is therefore
left for future investigation.

\begin{colorenv}[blue]
Finally, we clarify the scope and interpretation of our theoretical result. Theorem~\ref{thm:main:mgdl} is an approximation-theoretic
construction result. It establishes the existence of fixed-width multigrade
\ReLU{} corrections whose residuals decrease monotonically and converge
uniformly to zero, with the required affine and output maps constructed through
localized cutoff functions and contraction arguments. This should not be
interpreted as a convergence theorem for gradient descent, Adam, or other
practical optimizers. The numerical experiments in Section~3 only illustrate
that the proposed grade-wise training can effectively realize this
residual-refinement mechanism in practice, while its optimization dynamics are
left for future study.
To further elucidate the implications of
Theorem~\ref{thm:main:mgdl}, we next interpret the MGDL framework from the
perspective of dictionary-based approximation.
\end{colorenv}

% We emphasize that our analysis focuses on the setting where the output map $g$
% is affine and each block $T_i$ is given by the composition of an affine map with
% a fixed activation function.
% While this covers a broad and practically relevant class of neural network
% architectures, the underlying ideas are not limited to this specific configuration.
% Extensions to more general choices of output maps and block structures are
% expected to follow similar lines of reasoning, but would require treating a substantially larger class of architectures.
% A systematic treatment of such generalizations is beyond the scope of the
% present paper and is therefore left for future work.
% To further elucidate the implications of Theorem~\ref{thm:main:mgdl}, we next interpret the
% MGDL framework from the perspective of dictionary-based approximation.

%\begin{mycolor}[red]

%%%%%%%%%%%%%%%%%%%%%%%%%%%%%%%%%%%%%
%%%%%%%%%%%%%%%%%%%%%%%%%%%%%%%%%%%%%
\subsection{Understanding MGDL from dictionary-based approximation}
\label{sec:mgdl-dictionary}

From the viewpoint of dictionary-based approximation, the MGDL refinement
process admits a natural interpretation in terms of basis expansions and sparse approximation.
These paradigms provide a conceptual framework for understanding the mechanism
of progressive error reduction that underlies grade-wise training.

%%%%%%%%%%%%%%%%%%%%%%%%%%%%%%%%%%%%%
%%%%%%%%%%%%%%%%%%%%%%%%%%%%%%%%%%%%%
\subsubsection*{Basis expansions}
Classical basis expansions provide a conceptually straightforward framework
for function approximation, with Fourier-type expansions serving as a canonical
example.
Given a prescribed and fixed collection of basis functions, approximation is achieved
by selecting basis elements, often in a predetermined or hierarchical order, and
forming linear combinations to approximate the target function.
Typically, the approximation accuracy improves, as additional basis elements are included. 
However, because the basis is fixed a priori and does not adapt to the specific
structure of the target function, such approaches can be inefficient when the target
exhibits complex, anisotropic, or highly localized features.

%%%%%%%%%%%%%%%%%%%%%%%%%%%%%%%%%%%%%
%%%%%%%%%%%%%%%%%%%%%%%%%%%%%%%%%%%%%
\subsubsection*{Sparse approximation}
Sparse approximation adopts a more flexible, but technically more involved, approach. Instead of relying on a basis, it employs a \emph{dictionary}, typically a redundant system, such as multiresolution-analysis-based wavelet tight frames in $L^2(\mathbb{R}^d)$ \cite{RON1997408}.
% Sparse approximation follows a more flexible but technically more involved
% paradigm, exemplified by affine systems in  $L^2(\R^d)$ \cite{RON1997408}.
Given a dictionary $\mathcal{D} = \{\bme_i\}_{i \in \mathcal{I}}$, the goal is to
approximate a target function using only a small number of dictionary elements.
A standard greedy strategy proceeds by iteratively reducing the residual.
Starting from $R_0 = f$, one selects at each iteration an atom-coefficient pair
by solving
\[
(\alpha_{k+1}, \bme_{i_{k+1}})
\;\in\;
\argmin_{\alpha \in \mathbb{R},\, \bme \in \mathcal{D}}
\| R_k - \alpha \bme \|
\]
with respect to a chosen norm, and updates the residual via
$R_{k+1} = R_k - \alpha_{k+1} \bme_{i_{k+1}}$.
\begin{colorenv}
    Here and below, the notation ``\(\argmin\)"  is used in a schematic sense in the optimization formulations. Since an exact minimizer may not exist without additional assumptions, it should be understood, when necessary, as an approximate minimizer or a near-minimizing choice for the displayed objective.
\end{colorenv}
% \blue{We remark that the ``\(\argmin\)" notation in this dictionary-based interpretation is used with a
% slight abuse of notation. An exact minimizer may not exist without additional
% assumptions. Here it should be understood as an approximate minimizer, or a
% near-minimizing choice, of the displayed objective.}
% Here \(\nearargmin\) denotes a near-minimizing choice and does not require the
% existence of an exact minimizer.
Each iteration extracts the dictionary element that best explains the current
residual, leading to a monotone refinement of the approximation.
While sparse approximation is highly effective, its performance typically depends
on the availability of a sufficiently rich dictionary, often requiring a large
number of atoms to achieve high accuracy.

%%%%%%%%%%%%%%%%%%%%%%%%%%%%%%%%%%%%%
%%%%%%%%%%%%%%%%%%%%%%%%%%%%%%%%%%%%%
\subsubsection*{Neural networks with end-to-end training}
From the sparse approximation viewpoint, standard end-to-end neural network
training can be interpreted as a single-step approximation problem over a
highly expressive dictionary.
Let $\calT$ be the class of admissible network blocks, and let $\mathcal G$
be the prescribed class of output maps.
Specifically, one may regard the neural network dictionary as
\[
\mathcal{D}_{\mathrm{NN}}
=
\bigl\{
T_m \circ \cdots \circ T_1
:\;
T_1, \dots, T_m \in \calT
\bigr\},
\]
for a chosen sufficiently large $m \in \mathbb{N}^+$.
This dictionary consists of a large family of parameterized compositional maps.
End-to-end training then amounts to selecting a single element from this
dictionary together with an output map $g \in \mathcal{G}$ by solving
\[
(g^\ast, \bme^\ast)
\;\in\;
\argmin_{g \in \mathcal{G},\, \bme \in \mathcal{D}_{\mathrm{NN}}}
\| f - g \circ \bme \|.
\]
The resulting approximation takes the form
\[
f \approx g^\ast \circ \bme^\ast
= g^\ast \circ T_m^\ast \circ \cdots \circ T_1^\ast .
\]
Although the dictionary $\mathcal{D}_{\mathrm{NN}}$ is, in principle, highly expressive, this formulation highlights the intrinsic difficulty of
end-to-end learning.
Selecting a single, highly structured element from an extremely large and
nonconvex dictionary is inherently challenging, as approximation
errors must be coordinated across all layers simultaneously.
This coupling often leads to severe optimization difficulties and unstable training
dynamics, despite the strong representational capacity of the underlying
dictionary.

%%%%%%%%%%%%%%%%%%%%%%%%%%%%%%%%%%%%%
%%%%%%%%%%%%%%%%%%%%%%%%%%%%%%%%%%%%%
\subsubsection*{MGDL viewpoint}
MGDL combines key advantages of sparse approximation and neural network-based
representations.
On the one hand, it retains a highly expressive dictionary generated by deep
compositions.
On the other hand, it replaces the single-shot selection of an entire network
with a sequence of simple, structured greedy steps.

\begin{colorenv}[blue]

Starting from $R_0=f$, let $\bme_0$ be the identity map. MGDL constructs an
adaptively expanding compositional dictionary in a grade-wise manner. Let
$\calT$ denote the class of admissible network blocks and let $\mathcal G$
denote the chosen class of output maps. Given the feature representation
$\bme_k$ learned up to grade $k$, the dictionary available at the next grade is
defined by
\[
    \mathcal D_{\mathrm{MGDL}}(k+1)
    :=
    \bigl\{
        T\circ\bme_k : T\in\calT
    \bigr\}.
\]
In particular, since $\bme_0$ is the identity map, the initial dictionary is
\[
    \mathcal D_{\mathrm{MGDL}}(1)
    =
    \bigl\{
        T_1 : T_1\in\calT
    \bigr\}.
\]

At grade $k+1$, MGDL selects a new block $T_{k+1}\in\calT$ together with an
output map $g_{k+1}\in\mathcal G$ by solving
\[
    (g_{k+1},T_{k+1})
    \in
    \operatorname*{arg\,min}_{g\in\mathcal G,\ T\in\calT}
    \bigl\|
        R_k-g\circ T\circ\bme_k
    \bigr\|.
\]
Equivalently, the selected atom is
\[
    \bme_{k+1}
    :=
    T_{k+1}\circ\bme_k
    \in
    \mathcal D_{\mathrm{MGDL}}(k+1),
\]
and the residual is updated by
\[
    R_{k+1}
    :=
    R_k-g_{k+1}\circ\bme_{k+1}.
\]
Thus, at each grade, only the newly introduced block $T_{k+1}$ and the
corresponding output map $g_{k+1}$ are optimized, while all previously learned
blocks remain fixed. Consequently, each step involves a relatively simple
optimization problem, whereas the overall approximation benefits from a
progressively enlarged and highly expressive compositional dictionary. This
stage-wise structure substantially reduces the optimization burden compared
with standard end-to-end training.

Finally, it is important to note that after $k$ grades have been constructed, the learned feature representation is
\[
    \bme_k
    =
    T_k\circ T_{k-1}\circ\cdots\circ T_1 .
\]
If this representation is kept fixed, then further optimization can only be
performed within the output space
\[
    \mathcal H_k^{\mathrm{out}}
    :=
    \bigl\{
        g\circ\bme_k : g\in\mathcal G
    \bigr\}.
\]
Since the $k$-th grade has already been trained to approximate the residual
$R_{k-1}$ using this fixed feature representation, merely re-optimizing the
output map is expected to provide only limited additional improvement. MGDL
overcomes this limitation by appending a new admissible block. More precisely,
the next grade enlarges the approximation space to
\[
    \mathcal H_{k+1}(\bme_k)
    :=
    \bigl\{
        g\circ T_{k+1}\circ\bme_k :
        T_{k+1}\in\calT,\ g\in\mathcal G
    \bigr\}.
\]
Hence each new grade expands the compositional dictionary and creates new
directions along which the residual can be reduced. In the idealized case of
exact optimization, provided that the zero map belongs to $\mathcal G$, the
residual norms form a non-increasing sequence. This mechanism explains why each
new grade may lead to a substantial error decrease despite the greedy nature of
the construction.

The dictionary viewpoint above clarifies the structural role of MGDL: each
new grade enlarges the effective approximation space by generating a new
adaptive compositional atom. We next compare this perspective with existing
work on neural network expressivity, frequency-dependent learnability, and
hierarchical residual training.
\end{colorenv}

\subsection{Related work}
\label{sec:related-work}

The present work is closely related to several active research directions in
neural network approximation theory and learning theory.
Below, we briefly review representative developments on expressive power,
frequency-based perspectives on learnability, and hierarchical or multigrade
training strategies, and clarify how our results differ from and complement
existing approaches.

%%%%%%%%%%%%%%%%%%%%%%%%%%%%%%%%%%%%%
%%%%%%%%%%%%%%%%%%%%%%%%%%%%%%%%%%%%%
\subsubsection*{Expressive power of neural networks}
The expressive capacity of neural networks has been a central topic in approximation theory and learning theory.
Classical universal approximation results
\cite{Cybenko1989ApproximationBS,HORNIK1989359,HORNIK1991251}
establish that even shallow neural networks can approximate arbitrary continuous functions on compact domains.
While foundational, these results are qualitative in nature and do not quantify how network size, depth, or architectural constraints affect approximation efficiency.
Subsequent research has significantly refined this theory by deriving sharp approximation rates for specific function classes and by elucidating the roles of depth, width, and compositional structure in determining representational efficiency
\cite{yarotsky18a,yarotsky2017,doi:10.1137/18M118709X,
2019arXiv190501208G,2019arXiv190207896G,MO,
shijun:Characterized:by:Numer:Neurons,shijun:smooth:functions,
shijun:arbitrary:error:with:fixed:size,shijun:2023:beyond:ReLU:to:diverse:actfun,
yarotsky:2019:06,SIEGEL20221,2019arXiv191210382L,
Bao2019ApproximationAO,ZHOU2019,shijun:optimal:rate:in:width:and:depth}. 

More recently, deep learning in reproducing kernel Banach spaces (RKBS) was introduced in~\cite{Wang_Xu_Yan2025}, where deep neural networks are interpreted as asymmetric kernels with respect to the input and parameter variables. This viewpoint recasts deep learning as a form of kernel-based learning in RKBS, extending the classical reproducing kernel Hilbert space framework, and yields representer theorems for deep architectures.
In parallel, a growing body of work has explored novel architectures and activation functions aimed at enhancing approximation power or reducing model complexity
\cite{shijun:intrinsic:parameters,shijun:RCNet,WANG2025107258,
shijun:floor:relu,shijun:three:layers,ZZZZ-25-FMMNN,
shijun:net:arc:beyond:width:depth}.
Despite this substantial progress, most of the existing literature focuses primarily on representational capacity.
Fundamental questions concerning \emph{learnability}, optimization dynamics, and the gap between expressivity and effective training remain largely unresolved
\cite{pmlr-v70-shalev-shwartz17a,pmlr-v80-safran18a,JMLR:v20:18-674,ZZZZ-23}.

%%%%%%%%%%%%%%%%%%%%%%%%%%%%%%%%%%%%%
%%%%%%%%%%%%%%%%%%%%%%%%%%%%%%%%%%%%%
\subsubsection*{Frequency-based perspectives on learnability}
\begin{colorenv}[blue]
    Motivated by the gap between expressivity and practical trainability, a
complementary line of research examines neural network learning from a
frequency-based perspective
\cite{ZZZZ-23,ZZZZ-24-MMNN,xu2019frequency,luo2019theory,pmlr-v97-rahaman19a,pmlr-v119-basri20a}.
A central concept in this direction is the \emph{frequency principle}
\cite{luo2019theory,xu2019frequency}, which posits that neural networks tend to
fit low-frequency components of a target function earlier than high-frequency
ones during training. This phenomenon has been empirically observed across a
wide range of architectures and learning tasks, and has inspired theoretical
studies linking optimization dynamics, spectral bias, and generalization.

This viewpoint is closely related to the grade-wise residual refinement
mechanism studied in this paper. Early grades often capture dominant
coarse-scale structures, while later grades may focus more on finer-scale or
oscillatory residual features. However, MGDL does not rely on a strict monotone
frequency-separation assumption, since later residuals may still contain
low-frequency components that were not fully learned earlier. Moreover,
high-frequency structures in the original input variables may become simpler,
or effectively lower-frequency, after suitable feature transformations or
compositions; this phenomenon has also been observed in the context of
multi-component and multi-layer neural networks \cite{ZZZZ-24-MMNN}. In MGDL, later grades are
built upon the previously learned and frozen transformations, rather than being
trained from the raw input alone. Thus, later blocks can exploit these learned
feature coordinates to approximate finer-scale residuals, without necessarily
requiring larger widths at later grades.

Thus, MGDL should be viewed as an adaptive coarse-to-fine residual refinement
mechanism rather than a rigid low-to-high frequency decomposition. Nevertheless,
existing results are largely qualitative or problem-dependent, and a rigorous,
architecture-level theory that systematically characterizes frequency-dependent
learnability is still lacking.
\end{colorenv}

%%%%%%%%%%%%%%%%%%%%%%%%%%%%%%%%%%%%%
%%%%%%%%%%%%%%%%%%%%%%%%%%%%%%%%%%%%%
\subsubsection*{Hierarchical, residual, and multigrade training strategies}

A closely related line of work studies architectural and algorithmic mechanisms that promote stable optimization and progressive error refinement.
Residual learning
\cite{7780459},
layerwise or greedy training schemes
\cite{6796673,Bengio2007},
and hierarchical forecasting or refinement architectures
\cite{oreshkin2020nbeats,NEURIPS2023_1d5a9286,9614997}
have all demonstrated empirical success in improving convergence behavior and mitigating optimization difficulties.
Within this context, the MGDL framework was introduced empirically in \cite{Xu2023} to enhance training stability, accelerate convergence, and alleviate optimization pathologies such as vanishing gradients. 
A specialized variant of MGDL was later proposed in~\cite{Xu2025} in the form of the successive affine learning model, where each affine transformation is first learned by solving a quadratic or convex optimization problem, and the activation function is applied only after the weight matrix and bias vector of the current layer are determined.
While related coarse-to-fine and multilevel strategies have shown strong performance across regression, imaging, integral equation and PDE-solving tasks
\cite{FangXu2024,FangXu2025,JiangXu2024,Jiang-Xu2025,Xu2023}, existing analyses remain largely empirical. Consequently, they lack rigorous guarantees that grade-wise residuals will converge uniformly to zero within concrete neural network architectures.

% \subsubsection*{Our contribution.}
Despite promising empirical evidence for hierarchical and residual methods, none of the aforementioned approaches, including the original MGDL study \cite{Xu2023}, establishes a rigorous approximation theory that guarantees
\emph{uniform convergence} in an explicit neural network construction.
% This work fills that gap. 
\blue{This work addresses this gap in the setting of fixed-width multigrade \ReLU{} residual-sum architectures.}
We develop an operator-theoretic formulation of MGDL and identify the structural conditions under which a fixed-width, multigrade \ReLU{} architecture admits a grade-wise optimization scheme. 
% We prove that this scheme produces strictly decreasing residuals that converge uniformly to zero.
\blue{We prove that this construction yields pointwise domination of residual magnitudes, strict \(L^p\)-norm decay at every nontrivial grade for all \(p\in [1,\infty)\), and uniform convergence of the residuals to zero.}
These results provide the first theoretical foundation for MGDL as a principled framework for structured error refinement, bridging the gap between classical expressivity theory and modern hierarchical learning.

\section{MGDL algorithm and experiments}
\label{sec:mgdl-framework}

% In this section, we first describe the concrete realization of the MGDL framework, following \cite{Xu2023}, together with its grade-wise learning algorithm in Section~\ref{sec:mgdl-algorithm}. We then present numerical experiments illustrating its practical behavior in Section~\ref{sec:mgdl-experiments}.

This section connects the abstract residual-refinement theory developed in
Section~\ref{sec:mgdl-theory} with practical MGDL training. In
Section~\ref{sec:mgdl-algorithm}, we formulate the grade-wise optimization
procedure at both the function level and the data level, emphasizing how each
new block is trained on the current residual while previous blocks are frozen.
In Section~\ref{sec:mgdl-experiments}, we present one- and two-dimensional
experiments that illustrate the progressive residual decay predicted by
Theorem~\ref{thm:main:mgdl} and compare MGDL with standard end-to-end training.

%%%%%%%%%%%%%%%%%%%%%%%%%%%%%%%%%%%%%
%%%%%%%%%%%%%%%%%%%%%%%%%%%%%%%%%%%%%
\subsection{Grade-wise optimization}
\label{sec:mgdl-algorithm}

We begin by formulating the MGDL learning process at the function level. Let the target be a function \(f\), and let \(T_i\) denote a parameterized network block, for example \(T_i = \sigma \circ \mathcal{A}_i\) with an activation function \(\sigma\) and an affine map \(\mathcal{A}_i\). Let \(g_i\) denote a parameterized output map, such as an affine transformation that ensures dimensional consistency with the target.

As illustrated in Figure~\ref{fig:MGNN}, MGDL constructs the network grade by grade.
\begin{itemize}
    \item \textbf{Grade 1.}
    Train 
    \(\Phi_1 = g_1 \circ T_1\)
    to approximate \(f\). The first residual is
    \(R_1 = f - \Phi_1.\)

    \item \textbf{Grade 2.}
    Freeze \(T_1\) and train a new block \(T_2\) together with an output map \(g_2\) so that
    \(
        g_2 \circ T_2 \circ T_1
    \)
    approximates the residual \(R_1\). The updated approximation is
    \[
        \Phi_2 = \Phi_1 + g_2 \circ T_2 \circ T_1,
    \]
    and the new residual is given by \(R_2 = f - \Phi_2\).

    \item \textbf{Higher grades.}
    Proceeding inductively, at grade \(k \ge 3\), we freeze \(T_1,\dots,T_{k-1}\), introduce a new block \(T_k\) and an output map \(g_k\), and train
    \(
        g_k \circ T_k \circ T_{k-1} \circ \cdots \circ T_1
    \)
    to approximate the residual \(R_{k-1} = f - \Phi_{k-1}\). The grade \(k\) approximant becomes
    \[
        \Phi_k = \Phi_{k-1} + g_k \circ T_k \circ T_{k-1} \circ \cdots \circ T_1,
    \]
    and the residual is updated to \(R_k = f - \Phi_k\).
\end{itemize}

After \(k\) grades, the overall MGDL approximant can be written as
\begin{equation*}
    \Phi_k
    = \sum_{i=1}^{k}
      g_i \circ T_i \circ T_{i-1} \circ \cdots \circ T_1,
\end{equation*}
where \(T_1\) acts on the original input. As \(k\) increases, the approximation is refined recursively, and the central question is whether the residuals \(R_k\) can be guaranteed to decrease monotonically and converge to zero under suitable architectural choices. Theorem~\ref{thm:main:mgdl} provides an affirmative answer for a class of fixed-width multigrade \ReLU{} networks.

The same construction can be expressed directly in terms of the training data.
Suppose we are given samples \(\{(\bmx_i, f(\bmx_i))\}_{i=1}^N\). For convenience, set
\[
    \bmx_i^{(1)} \coloneqq \bmx_i,\quad  y_i^{(1)} \coloneqq f(\bmx_i^{(1)}).
\]
The first grade learns the first approximation
% a shallow network
$\Phi_1=g_1\circ T_1$ by solving
\[
    \min_{g_1, T_1}
    \frac{1}{N}\sum_{i=1}^N
    \calL\bigl(g_1 \circ T_1(\bmx_i^{(1)}),\, y_i^{(1)}\bigr),
\]
where \(\calL(\cdot,\cdot)\) denotes the loss function.
% which yields the first approximation \(\Phi_1 = g_1 \circ T_1\).
We then define the transformed inputs and residual outputs
\[
    \bmx_i^{(2)} \coloneqq T_1(\bmx_i^{(1)}),\quad 
    y_i^{(2)} \coloneqq y_i^{(1)} - g_1 \circ T_1(\bmx_i^{(1)}).
\]
At grade \(2\), the block \(T_1\) is fixed, and we solve
\[
    \min_{g_2, T_2}
    \frac{1}{N}\sum_{i=1}^N
    \calL\bigl(g_2 \circ T_2(\bmx_i^{(2)}),\, y_i^{(2)}\bigr),
\]
which yields the second approximation \(\Phi_2 = g_2 \circ T_2\circ T_1 + g_1\circ T_1\).
In general, at grade \(k\), suppose \(T_1, \dots, T_{k-1}\) are learned and we define
\[
    \bmx_i^{(k)} \coloneqq T_{k-1}\circ \cdots \circ T_1(\bmx_i),\quad 
    y_i^{(k)} \coloneqq y_i^{(k-1)} - g_{k-1} \circ T_{k-1}(\bmx_i^{(k-1)}),
\]
freeze \(T_1, \dots, T_{k-1}\), and solve the grade-wise problem
\[
    \min_{g_k, T_k}
    \frac{1}{N}\sum_{i=1}^N
    \calL\bigl(g_k \circ T_k(\bmx_i^{(k)}),\, y_i^{(k)}\bigr).
\]
Thus, at grade \(k\) the learning task focuses on capturing the current residual through \(g_k \circ T_k\) using the data \(\{(\bmx_i^{(k)}, y_i^{(k)})\}_{i=1}^N\), rather than reoptimizing all layers simultaneously as in end-to-end training. This grade-wise formulation makes the recursive refinement structure explicit and aligns directly with the operator-theoretic framework underlying Theorem~\ref{thm:main:mgdl}.

\begin{colorenv}[blue]
Finally, we clarify the notation used in the grade-wise training formulation. The data
\(\{(\bmx_i^{(k)},y_i^{(k)})\}_{i=1}^N\) introduced at grade \(k\) should not be
understood as a new or disjoint data set. It is only a convenient way to describe
the effect of the previously trained network blocks as a feature transformation of
the original samples, together with the corresponding update of the residual
labels. In the actual implementation, one does not need to explicitly construct a
new data set at each grade. It suffices to freeze the previously trained blocks and
train the next block on the same original samples, with the loss evaluated through
the updated residual.

    % In Figure~5, the shaded regions indicate grade intervals in training time, not different sample groups.
\end{colorenv}

%%%%%%%%%%%%%%%%%%%%%%%%%%%%%%%%%%%%%
%%%%%%%%%%%%%%%%%%%%%%%%%%%%%%%%%%%%%
\subsection{Numerical experiments}
\label{sec:mgdl-experiments}

% \subsubsection*{Purpose of the experiments}

The purpose of this subsection is to illustrate the main theoretical
results on MGDL presented in Section~\ref{sec:main-results}, in particular the
progressive decay of approximation residuals to zero across successive grades.
Through numerical experiments in one and two spatial dimensions, we examine
whether the residual error decreases consistently as the grade index increases,
and whether the activation of each new grade yields a substantial and stable
improvement in approximation accuracy.
These empirical observations directly reflect the grade-wise residual refinement
principle developed in the theoretical analysis.
As a supplementary reference, comparisons with standard end-to-end training are
reported at the end of this section.

\subsubsection*{Target functions}

We consider target functions with pronounced high-frequency components and
nonlinear couplings.
In one dimension, the target function is
\[
f_1(x)=\sin(32\pi x)-0.5\cos(16\pi x^2).
\]
In two dimensions, we use the coupled oscillatory function
\[
f_2(x_1,x_2)=\sum_{i=1}^2\sum_{j=1}^2 a_{i,j}
\sin(b_i x_i+c_{i,j}x_ix_j)\bigl|\cos(b_j x_j+d_{i,j}x_i^2)\bigr|,
\]
where
\[
(a_{i,j})=\begin{bmatrix*}0.3&0.2\\0.2&0.3\end{bmatrix*},\quad 
(b_i)=\begin{bmatrix*}12\pi\\8\pi\end{bmatrix*},\quad 
(c_{i,j})=\begin{bmatrix*}4\pi&12\pi\\6\pi&10\pi\end{bmatrix*},\quad 
(d_{i,j})=\begin{bmatrix*}14\pi&12\pi\\8\pi&10\pi\end{bmatrix*}.
\]
% Figures~\ref{fig:vsFCNN:f1D} and~\ref{fig:vsFCNN:f2D} visualize the target
% functions $f_1$ and $f_2$, respectively.
% \blue{The experimental domains is $[-1,1]$ and $[-1,1]^2$ can be affinely rescaled to $[0,1]^d$; the experiments are illustrative rather than a direct instantiation of the construction.}
% Figures~\ref{fig:vsFCNN:f1D} and~\ref{fig:vsFCNN:f2D} show the target
% functions \(f_1\) and \(f_2\), respectively.
% \blue{Although the experiments are conducted on the domains \([-1,1]\) and
% \([-1,1]^2\), these domains can be affinely rescaled to \([0,1]^d\).
% Thus, the experiments should be viewed as illustrative numerical examples
% rather than direct implementations of the constructive proof.}
Figures~\ref{fig:vsFCNN:f1D} and~\ref{fig:vsFCNN:f2D} show the target
functions \(f_1\) and \(f_2\), respectively.
\blue{The experimental domains \([-1,1]\) and \([-1,1]^2\) can be affinely
rescaled to \([0,1]^d\). Hence, the experiments are intended as illustrative
examples rather than direct instantiations of the constructive proof.}

\begin{figure}[htbp!]
\centering
\begin{minipage}[b]{0.88905\linewidth}
\begin{minipage}[b]{0.375\linewidth}
    \centering	
    \includegraphics[width=0.925\textwidth]{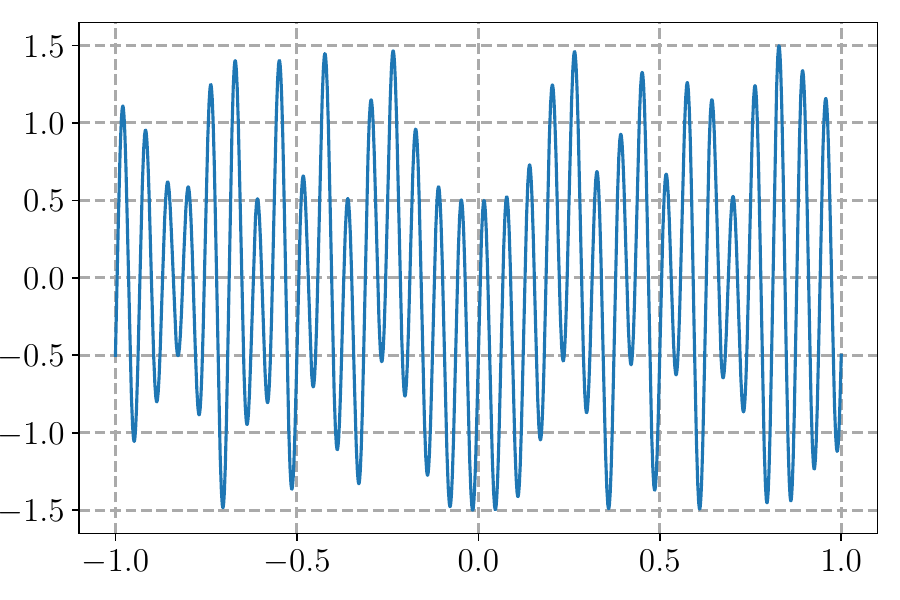}
    \vspace{-4pt}
    \caption{1D target function $f_1$.}
    \label{fig:vsFCNN:f1D}
\end{minipage}
\hfill
\begin{minipage}[b]{0.5519\linewidth}
    \centering	
    \begin{subfigure}[c]{0.48\textwidth}
        \includegraphics[height=0.84\textwidth]{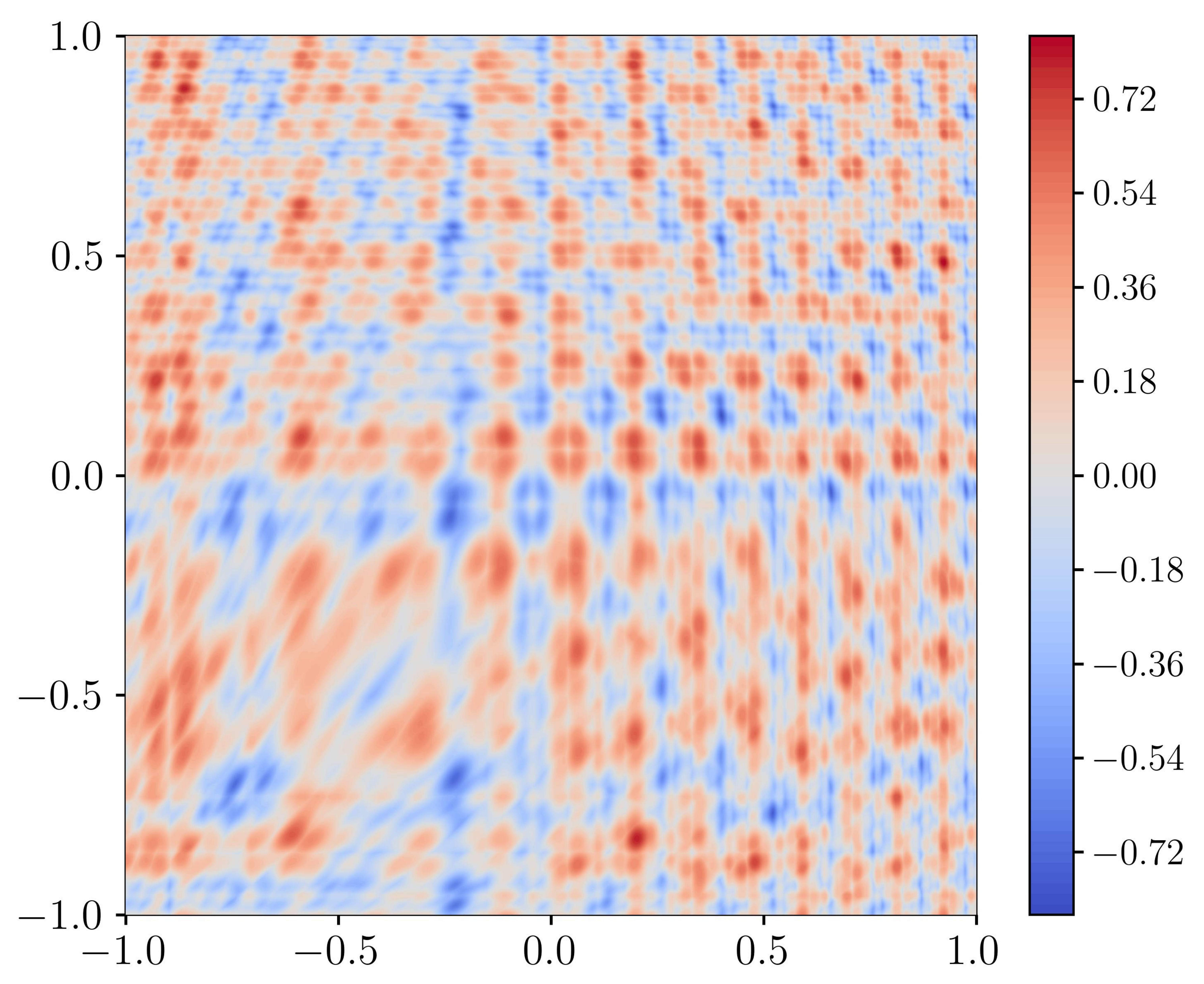}
    \end{subfigure}
    \hfill
    \begin{subfigure}[c]{0.48\textwidth}
        \includegraphics[height=0.84\textwidth]{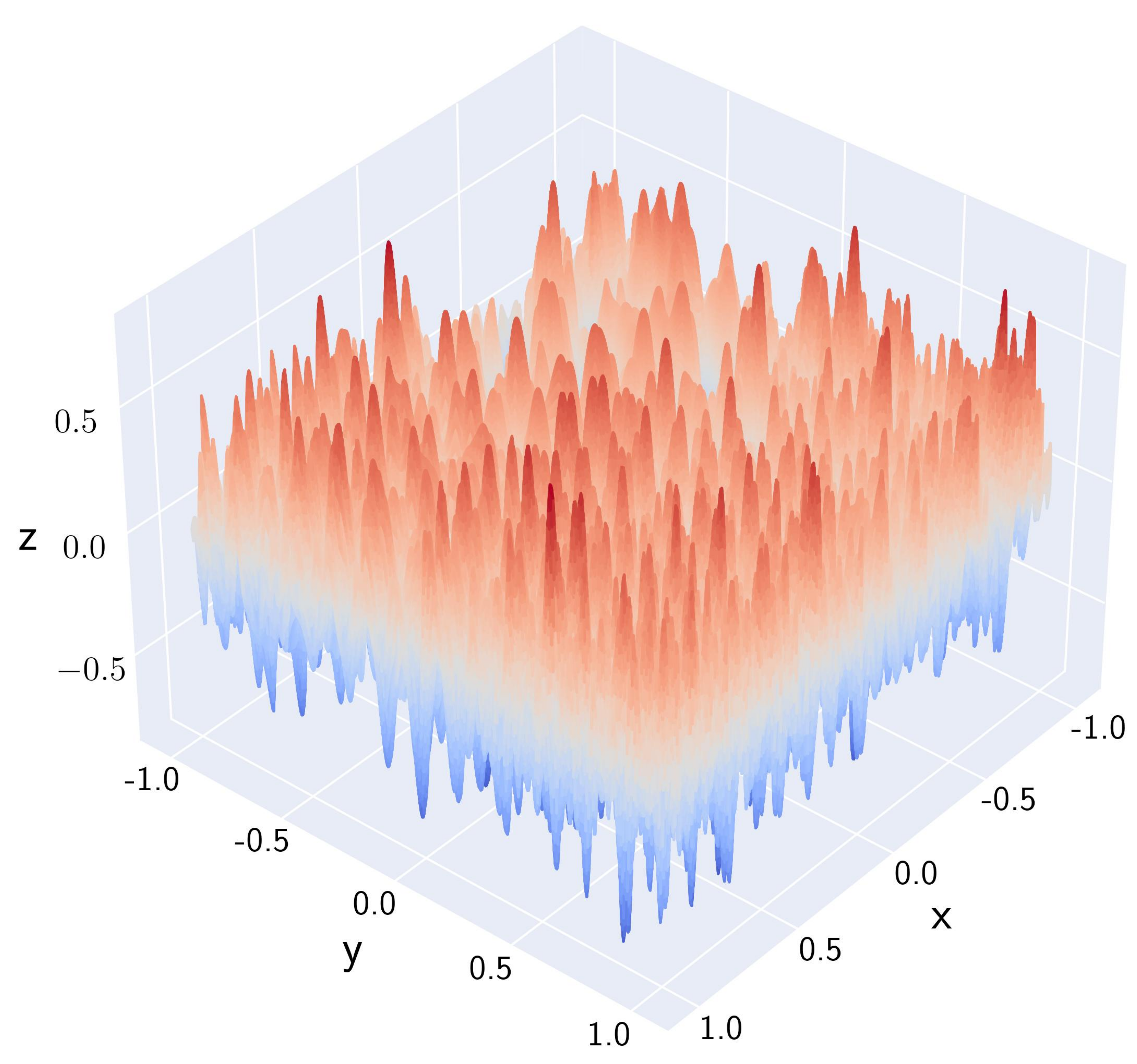}
    \end{subfigure}
    \caption{2D target function $f_2$.}
    \label{fig:vsFCNN:f2D}
\end{minipage}
\end{minipage}
\end{figure}

\subsubsection*{Training protocol, data, and MGDL architectures}

All experiments are implemented in PyTorch and optimized using the Adam optimizer
\cite{DBLP:journals/corr/KingmaB14}.
Training is conducted in single precision by minimizing the mean-squared error
(MSE).
Network parameters are initialized using PyTorch’s default uniform initialization
$\mathcal{U}(-\sqrt{\kappa},\sqrt{\kappa})$ with 
$\kappa$ being set as the reciprocal of the input feature dimension, applied to both
weights and biases.
\blue{Each experiment is repeated over \(32\) independent random-seed trials.
For each trial, we first take the base-10 logarithm of each recorded error,
and then compute the mean and standard deviation across trials.
In Figures~\ref{fig:error:MGDL} and~\ref{fig:error:MGDL:vs:base},
the curves show the mean and the translucent envelopes indicate one
standard deviation. Table~\ref{tab:final:error:MGDL:vs:base} reports the
final errors as mean \(\pm\) standard deviation.}
% \blue{Each experiment is repeated over $32$ independent random-seed trials. All reported statistics are computed after taking the base-10 logarithm of the errors. In Figures~\ref{fig:error:MGDL} and~\ref{fig:error:MGDL:vs:base}, curves show the mean and translucent envelopes indicate one standard deviation. Table~\ref{tab:final:error:MGDL:vs:base} reports the final errors as mean $\pm$ standard deviation.}

For the one-dimensional experiment, $10{,}000$ training samples are drawn
independently from the uniform distribution on $[-1,1]$ and trained with
mini-batch size $400$.
Test errors are evaluated using $3{,}000$ fresh samples drawn from the same
distribution.
For the two-dimensional experiment, training is performed on a uniform Cartesian
grid of size $500\times500$ over $[-1,1]^2$ with mini-batch size $1{,}000$.
Test errors are computed using $90{,}000$ points sampled independently and
uniformly from $[-1,1]^2$.

We now describe the MGDL architectures and their grade-wise training strategy,
with emphasis on how residual errors evolve across successive grades.

\begin{itemize}
    \item  
% \emph{One-dimensional MGDL.}
The one-dimensional MGDL model takes the form
\[
\sum_{\ell=1}^{4}
\mathcal{A}_{\ell}^{\mathrm{out}}
\circ
\ocomp_{i=1}^{\ell}
(\ReLU\circ\mathcal{A}_{2i}\circ\ReLU\circ\mathcal{A}_{2i-1}).
\]
% with split layers $(2,4,6,8)$. 
\blue{Here, every affine map appearing in the model, including the output maps, is taken from \(\aff_{\le 200}\).}
At grade~$\ell$, only the newly introduced affine maps
$\mathcal{A}_{2\ell-1}$, $\mathcal{A}_{2\ell}$, $\mathcal{A}_{\ell}^{\mathrm{out}}$ are optimized, while all previously trained
components are kept frozen.
% \blue{A total of $20{,}000$ training epochs are distributed across grades as
% $(3000,4000,5000,8000)$.
% At each grade, the learning rate follows the schedule
% $\eta_k=\eta_0\,0.8^{\lfloor k/s\rfloor}$ with $\eta_0=0.001$ and $s=240$.}
A total of $12{,}000$ training epochs are distributed across grades as
$(750,1500,3000,6750)$.
At each grade, the learning rate follows the schedule
$\eta_k=\eta_0\,0.9^{\lfloor k/s\rfloor}$ with $\eta_0=0.001$ and $s=120$.

\item
% \emph{Two-dimensional MGDL.}
The two-dimensional MGDL model follows the same multigrade paradigm and is given
by
\[
\sum_{\ell=1}^{3}
\mathcal{A}_{\ell}^{\mathrm{out}}
\circ
\ocomp_{i=1}^{\ell}
(\ReLU\circ\mathcal{A}_{3i}\circ\ReLU\circ\mathcal{A}_{3i-1}\circ\ReLU\circ\mathcal{A}_{3i-2}).
\]
% with split layers $(3,6,9)$. 
\blue{Here, each affine map, including the output maps
\(\mathcal{A}_{\ell}^{\mathrm{out}}\), is chosen from \(\aff_{\le 200}\).}
% \blue{Training is performed for a total of $2{,}000$ epochs, allocated across grades as
% $(500,600,900)$.
% At each grade,  the learning rate at epoch $k$ is given by $\eta_k=\eta_0\,0.8^{\lfloor k/s\rfloor}$, where $\eta_0=0.001$ and $s=24$.}
Training is performed for a total of $1{,}200$ epochs, allocated across grades as
$(150,300,750)$.
At each grade,  the learning rate at epoch $k$ is given by $\eta_k=\eta_0\,0.9^{\lfloor k/s\rfloor}$, where $\eta_0=0.001$ and $s=12$.
\end{itemize}

\subsubsection*{Grade-wise residual decay}

Figure~\ref{fig:error:MGDL} displays the evolution of training and test
errors, with shaded regions indicating individual grades.
% \blue{For completeness, we also report 
% empirical test maximum  error (MaxE) for all test samples.}
\blue{For completeness, we also report the empirical test \(L^\infty\) error (MaxE), defined as the maximum absolute error over all test samples.}
In both dimensional settings, the error curves exhibit a
clear grade-wise structure.
At the onset of each new grade, the residual error undergoes a pronounced drop,
followed by a stable decay as training proceeds within that grade.
Across successive grades, the residual consistently decreases and approaches
zero.

Importantly, the most significant reductions occur immediately after activating
a new grade, indicating that each grade contributes a nontrivial refinement of
the approximation.
This behavior is highly consistent across dimensions and target functions, and
provides strong empirical support for viewing MGDL as a structured residual
correction process, in close agreement with the theoretical results.

\begin{figure}[!ht]
\centering    
    \includegraphics[width=0.95805\textwidth]{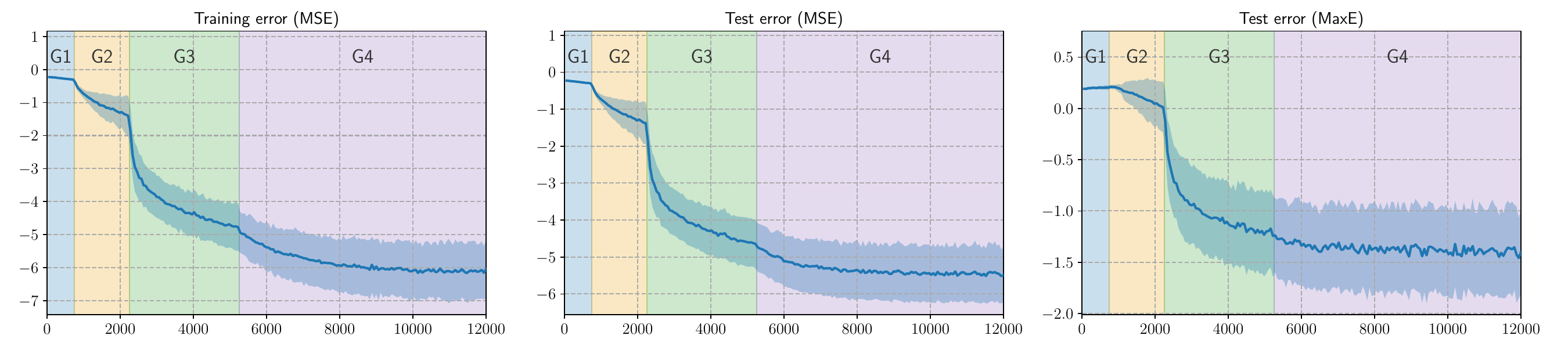}
    
    \vspace*{4pt}    
    \includegraphics[width=0.95805\textwidth]{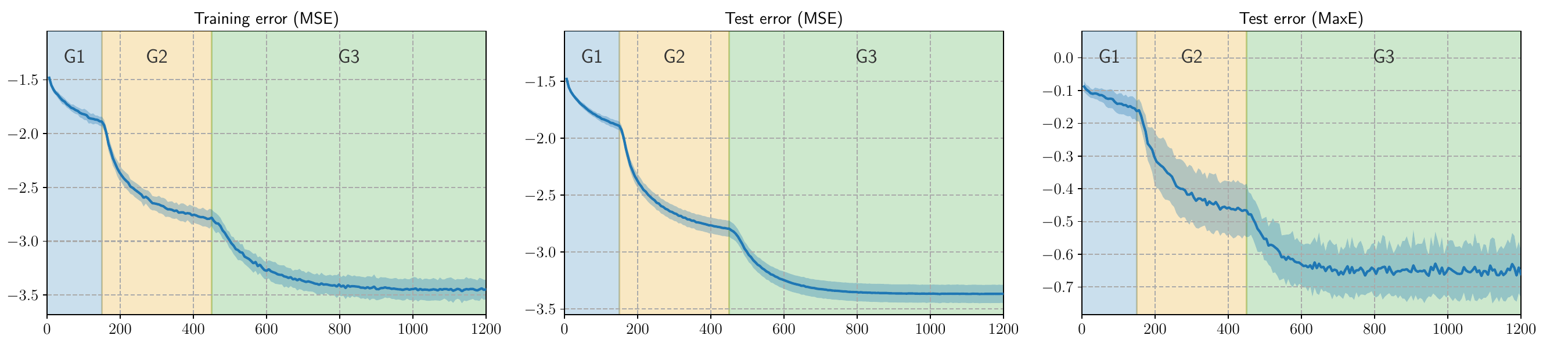}
% \caption{Error curves for $f_1$ (top row) and $f_2$ (bottom row).
% Shaded regions indicate individual grades (G1--G4 for $f_1$ and G1--G3 for $f_2$).  The horizontal axis denotes training epochs, and the vertical axis shows the base-10 logarithm of the error.}
\caption{Error curves for \(f_1\) (top row) and \(f_2\) (bottom row). 
\begin{colorenv}
    The curves and translucent envelopes denote the mean and one standard deviation over \(32\) independent trials, respectively. The shaded regions mark individual grades (\(\mathrm{G}1\)--\(\mathrm{G}4\) for \(f_1\) and \(\mathrm{G}1\)--\(\mathrm{G}3\) for \(f_2\)).
\end{colorenv}
The horizontal axis denotes training epochs, and the vertical axis reports the base-10 logarithm of the error.}
    \label{fig:error:MGDL}
\end{figure}
\vspace*{-21pt}

\subsubsection*{Supplementary comparison with end-to-end training}

For completeness, we also report results obtained from standard fully connected
neural networks (FCNNs) trained end to end.
% The FCNN baselines are designed to ensure a fair and controlled comparison: they
% use the same activation functions, network width, and overall depth as the MGDL
% trunk, and are trained for the same total number of epochs under identical
% optimization settings, including the optimizer, learning rate schedule, batch
% size, and data sampling strategy.
The FCNN baselines use the same activation functions, network width, overall depth, optimizer, data, batch size, and total training budget as the MGDL trunk. The learning-rate schedule has the same exponential-decay form, with step sizes chosen for the corresponding end-to-end training runs.
\blue{We note that MGDL introduces an output affine map for each grade, which leads to a slightly larger parameter count. The increase is negligible: 282,604 versus 282,001 in the one-dimensional experiment, and 322,803 versus 322,401 in the two-dimensional experiment, corresponding to about 0.21\% and 0.12\%, respectively.}
% \blue{More specifically, in the one-dimensional experiment, all FCNN parameters are
% optimized simultaneously for $20{,}000$ epochs.
% The learning rate at epoch $k$ follows the schedule
% $\eta_k=\eta_0\,0.8^{\lfloor k/s\rfloor}$ with $\eta_0=0.001$ and step size
% $s=400$.
% In the two-dimensional experiment, all parameters are optimized simultaneously
% for $2{,}000$ epochs, using the same learning-rate decay rule with
% $\eta_0=0.001$ and step size $s=40$.}
More specifically, in the one-dimensional experiment, all FCNN parameters are
optimized simultaneously for $12{,}000$ epochs.
The learning rate at epoch $k$ follows the schedule
$\eta_k=\eta_0\,0.9^{\lfloor k/s\rfloor}$ with $\eta_0=0.001$ and step size
$s=200$.
In the two-dimensional experiment, all parameters are optimized simultaneously
for $1{,}200$ epochs, using the same learning-rate decay rule with
$\eta_0=0.001$ and step size $s=20$.

\begin{table}[htbp!]
\centering
\setlength{\tabcolsep}{1.185em}
\renewcommand{\arraystretch}{1.05}
\caption{Final error comparison between the end-to-end FCNN baseline  and the MGDL model.
\begin{colorenv}
All entries are reported as mean \(\pm\) standard deviation over
\(32\) independent trials, computed after applying the base-10 logarithm
to each error.
% All entries report the mean \(\pm\) standard deviation over \(32\)
% independent trials after applying the base-10 logarithm to the errors.
% All entries are reported as the mean $\pm$ standard deviation over \(32\) independent trials after applying the base-$10$ logarithm to the errors.
% All values are averaged over $32$ independent trials.
\end{colorenv}
% We report training MSE, test MSE, and test max error. For the 2D case, values are tail averages over the last $m=3$ logged points.
}
\label{tab:final:error:MGDL:vs:base}

\resizebox{0.985\linewidth}{!}{
\begin{tabular}{cc c c c c c}
\toprule
% \multirow{2}{*}{case}
& \multicolumn{2}{c}{Training error (MSE)}
& \multicolumn{2}{c}{Test error (MSE)}
& \multicolumn{2}{c}{Test error (MaxE)} \\
\cmidrule(lr){2-3}\cmidrule(lr){4-5}\cmidrule(lr){6-7}
& FCNN & MGDL & FCNN & MGDL& FCNN & MGDL \\
% \midrule
% $f_1$
% & $5.54\times 10^{-5}$ & $1.25\times 10^{-6}$
% & $6.14\times 10^{-5}$ & $3.56\times 10^{-6}$
% & $1.03\times 10^{-1}$ & $3.41\times 10^{-2}$ \\
% $f_2$
% & $8.78\times 10^{-4}$ & $3.89\times 10^{-4}$
% & $8.14\times 10^{-4}$ & $4.47\times 10^{-4}$
% & $4.06\times 10^{-1}$ & $2.03\times 10^{-1}$ \\
% \bottomrule
\midrule
$f_1$ & $-4.35 \pm 0.49$ & $-6.05 \pm 0.88$ & $-4.28 \pm 0.47$ & $-5.48 \pm 0.74$ & $-1.10 \pm 0.31$ & $-1.40 \pm 0.42$ \\
$f_2$ & $-3.13 \pm 0.11$ & $-3.45 \pm 0.09$ & $-3.12 \pm 0.10$ & $-3.37 \pm 0.08$ & $-0.53 \pm 0.08$ & $-0.66 \pm 0.07$ \\
\bottomrule
\end{tabular}
}
\end{table}

Final training and test errors are summarized in
Table~\ref{tab:final:error:MGDL:vs:base}, while the comparison of the error curves
is shown in Figure~\ref{fig:error:MGDL:vs:base}.
\begin{colorenv}
Under these comparable experimental conditions, MGDL achieves lower training and test errors than the end-to-end FCNN baseline. These results support the practical effectiveness of the structured grade-wise residual refinement strategy, especially for the oscillatory examples considered here. 
We also remark that, for \(f_1\), the MGDL results show more noticeable
trial-to-trial variability than in the two-dimensional case. A systematic
study of optimal training settings is an interesting direction but lies
beyond the scope of this paper; the settings used here are chosen
empirically rather than through extensive tuning.
% We also remark that, for \(f_1\), the MGDL results exhibit relatively large standard deviations across trials. Exploring optimal training settings is an interesting direction but lies beyond the scope of this paper; the settings used here are chosen empirically rather than through extensive tuning.
\end{colorenv}
% These results demonstrate that the structured grade-wise residual refinement
% strategy of MGDL is markedly more effective at reducing approximation errors
% than standard end-to-end training, especially for functions with strong
% high-frequency content and nonlinear coupling.

\begin{figure}[!ht]
\centering    
    \includegraphics[width=0.9585\textwidth]{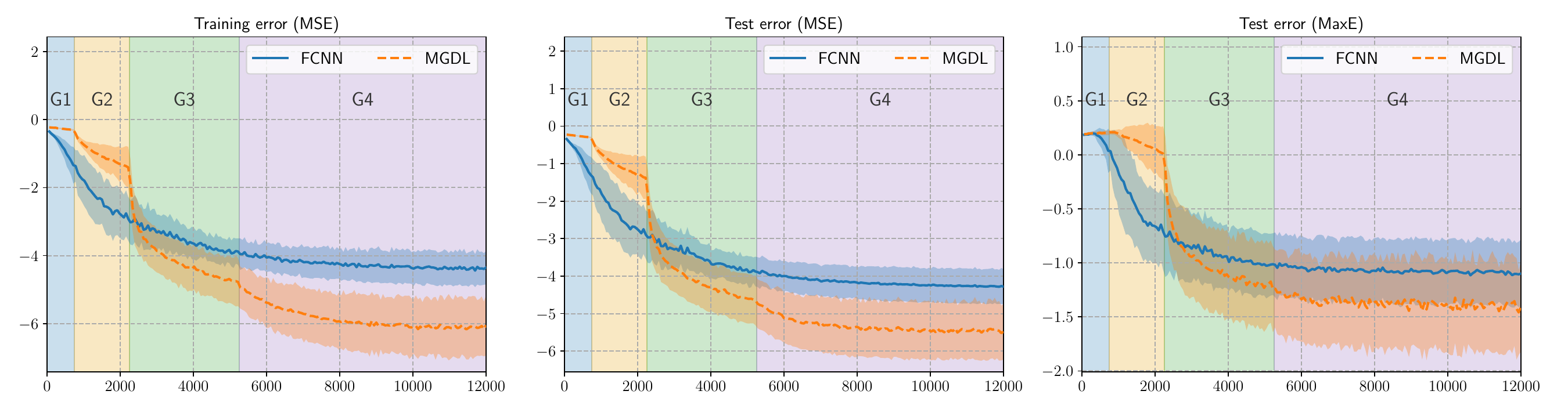}
    
    \vspace*{4pt}    
    \includegraphics[width=0.9585\textwidth]{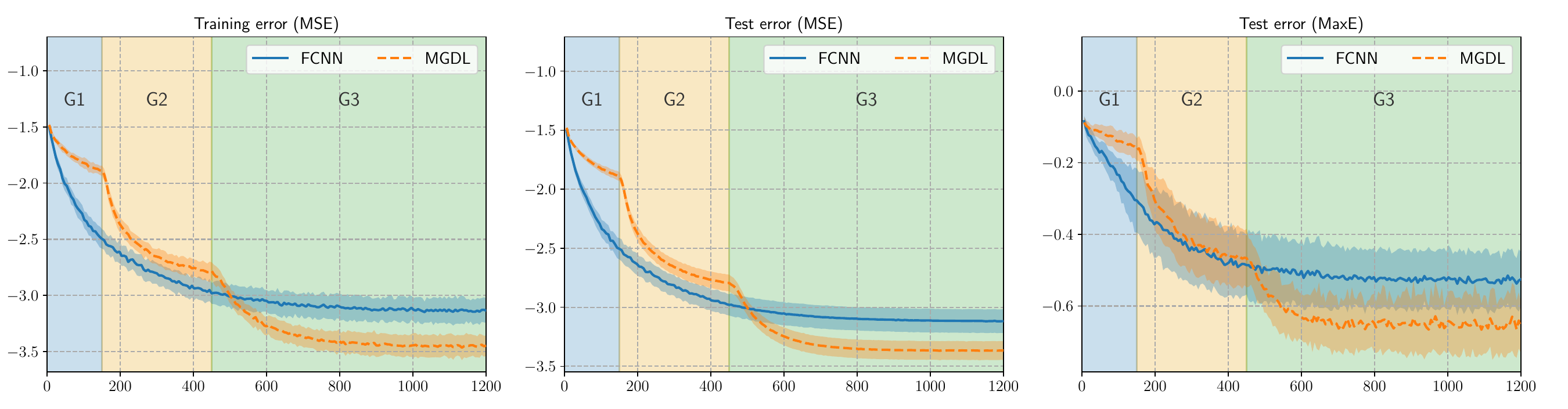}
\caption{Comparison of FCNN and MGDL via error curves for \(f_1\) (top row) and \(f_2\) (bottom row). 
\begin{colorenv}
    The curves show the mean over \(32\) independent trials, and the translucent envelopes around the curves indicate one standard deviation. 
    The background shaded regions indicate the MGDL grade schedule:
\(\mathrm{G}1\)--\(\mathrm{G}4\) for \(f_1\) and
\(\mathrm{G}1\)--\(\mathrm{G}3\) for \(f_2\).
    % The background shaded regions indicate individual grades (\(\mathrm{G}1\)--\(\mathrm{G}4\) for \(f_1\) and \(\mathrm{G}1\)--\(\mathrm{G}3\) for \(f_2\)).
\end{colorenv}
The horizontal axis represents training epochs, and the vertical axis displays the base-\(10\) logarithm of the error.}
% \caption{Comparison of FCNN and MGDL via error curves for $f_1$ (top row) and $f_2$ (bottom row). Shaded bands indicate individual grades (G1--G4 for $f_1$ and G1--G3 for $f_2$). The horizontal axis represents training epochs, and the vertical axis displays the base-10 logarithm of the error.}
    \label{fig:error:MGDL:vs:base}
\end{figure}
% \vspace*{-35pt}

% \begin{figure}[!ht]
% \centering    
%     \includegraphics[width=0.475\textwidth]{figures/errorsPlot1D_MGDLvsFCNN.pdf}  
%     \hspace{5pt}
%     \includegraphics[width=0.475\textwidth]{figures/errorsPlot2D_MGDLvsFCNN.pdf}
% \caption{Comparison of error curves for $f_1$ (left) and $f_2$ (right).
% Shaded regions indicate individual grades (G1-G4 for $f_1$ and G1-G3 for $f_2$).}
%     \label{fig:error:MGDL:vs:base}
% \end{figure}

\newpage
%%%%%%%%%%%%%%%%%%%%%%%%%%%%%%%%%%%%%
%%%%%%%%%%%%%%%%%%%%%%%%%%%%%%%%%%%%%
\section{Proof of Theorem~\ref{thm:main:mgdl}}
\label{sec:proof:thm:main:mgdl}

% The purpose of this section is to establish the main approximation result, Theorem~\ref{thm:main:mgdl}, stated in Section~\ref{sec:main-results}.  
% We begin in Section~\ref{sec:proof-strategy} with an overview of the proof strategy, highlighting the key ideas and the operator-theoretic structure underlying the argument.  
% The detailed construction and complete proof of Theorem~\ref{thm:main:mgdl} are then presented in Section~\ref{sec:detailed:proof:thm:main:mgdl}.

This section proves the main approximation theorem,
Theorem~\ref{thm:main:mgdl}. The proof is organized around the one-step
contraction result stated as Theorem~\ref{thm:main:mgdl:one:op}. In
Section~\ref{sec:proof-strategy}, we first explain the reduction from the
global MGDL convergence theorem to this one-step contraction mechanism. In
Section~\ref{sec:detailed:proof:thm:main:mgdl}, we then iterate the contraction
construction and concatenate the corresponding affine maps to obtain the full
multigrade architecture.

%%%%%%%%%%%%%%%%%%%%%%%%%%%%%%%%%%%%%
%%%%%%%%%%%%%%%%%%%%%%%%%%%%%%%%%%%%%
\subsection{Proof strategy}
\label{sec:proof-strategy}

The proof of Theorem~\ref{thm:main:mgdl} proceeds in two  steps.  
First, we construct a nonlinear operator \(S\) on \(C([0,1]^d)\), realizable by a \ReLU{} network, such that for every target function \(g\), we have a uniform contraction
\[
    \|g - Sg\|_{L^\infty([0,1]^d)} \le (1-\varepsilon)\,\|g\|_{L^\infty([0,1]^d)},
\]
\begin{colorenv}
    where \(\varepsilon\in\bigl(0,(1+2^d)^{-1}\bigr)\) is prescribed; the admissible
range depends only on the dimension \(d\).
\end{colorenv}
% where \(\varepsilon\in \big(0,\tfrac{1}{1+2^{d}}\big)\) depends only on the dimension~\(d\).  
Second, we apply this contraction iteratively 
% the residuals of the MGDL residuals,
to the successive MGDL residuals,
which yields geometric decay and thereby establishes Theorem~\ref{thm:main:mgdl}.
The first step constitutes the main technical contribution and is formalized in the following theorem.

\begin{colorenv}[blue]
\begin{theorem}\label{thm:main:mgdl:one:op}
Let $f \in C([0,1]^d)$ be a nonzero function and    let  $\varepsilon \in \bigl(0,\tfrac{1}{1+2^{d}}\bigr)$ be prescribed. There exist $n\in\N^+$ with $n\ge 2$ and affine maps 
% $\mathcal{A}_{i},\, \mathcal{A}_{i}^{\mathrm{out}} \in \aff_{\le 5d}$ for $i=1,\cdots,n+1$ 
% $\mathcal{A}_{1}\in\aff(1,5d+1)$, 
% $\calA_2,\cdots,\calA_{n+1}\in\aff(5d+1,5d+1)$,
% $\mathcal{A}_{2}^{\mathrm{out}},\cdots, \mathcal{A}_{n+1}^{\mathrm{out}}\in\aff(5d+1, 1)$
\[
\mathcal{A}_{1}\in\aff(d,5d+1), \quad
\calA_2,\cdots,\calA_{n}\in\aff(5d+1,5d+1),\quad \calA_{n+1}\in\aff(5d+1,1),
\]
$\mathcal{A}_{2}^{\mathrm{out}},\cdots, \mathcal{A}_{n}^{\mathrm{out}}\in\aff(5d+1, 1)$,  $\calA_{n+1}^{\mathrm{out}}\in\aff(1, 1)$ 
such that, defining $R_1 \coloneqq f$ and
\[
R_{k+1} \coloneqq  R_{k} - \mathcal{A}_{k+1}^{\mathrm{out}} \circ 
   \ocomp_{i=1}^{k+1} (\ReLU \circ \mathcal{A}_i)\quad   \tn{for}\   k=1,\dots,n,
\]
the following properties hold:
\begin{enumerate}[label=(\roman*)]
\item 
\emph{Structural properties of the affine maps:}
$\mathcal{A}_2$  is independent of the  input coordinate  with index $5d+1$. Moreover, for   all  $\bmx\in[0,1]^d$,
\[
\left(
\ocomp_{i=1}^n (\ReLU\circ\mathcal{A}_i)(\bmx)
\right)[1:d]
=
\bmx .
\]
% \item  $\calA_2\aff(5d+1,5d+1)$ independent of $n+1$-th entry of the input
% \item $\left(\ocomp_{i=1}^k (\ReLU \circ \mathcal{A}_i)(\bmx)\right)[1:d]=\bmx$ for any $\bmx \in [0,1]^d$

    \item \emph{Uniform accuracy:}  
    \[\|R_{n+1}\|_{L^\infty([0,1]^d)}\le (1-\eps)\,\|f\|_{L^\infty([0,1]^d)}.\] 
    % for any prescribed $\varepsilon \in \bigl(0,\tfrac{1}{1+2^{d}}\bigr)$.

    \item \emph{Pointwise domination:}  
    \[|R_{k+1}(\bmx)| \le |R_k(\bmx)|\quad \tn{for all}\   \bmx\in[0,1]^d\  \tn{and}  \ k=1,2,\ldots,n.\]

        \item \emph{Strict $L^p$ monotonicity:} for any \(p\in[1,\infty)\):  
    \[\|R_{k+1}\|_{L^p([0,1]^d)} < \|R_k\|_{L^p([0,1]^d)}\quad   \tn{for}\   k=1,2,\dots,n.\]
\end{enumerate}
\end{theorem}

The proof of Theorem~\ref{thm:main:mgdl:one:op} is presented in Section~\ref{sec:proof:thm:main:mgdl:one:op}.  
Within the MGDL framework, the operator \(S\) is realized by the multigrade \ReLU{} construction of Theorem~\ref{thm:main:mgdl:one:op}. 
Iterative application of this contraction to the MGDL residuals yields the grade-wise error decay claimed in Theorem~\ref{thm:main:mgdl}.

%%%%%%%%%%%%%%%%%%%%%%%%%%%%%%%%%%%%%
%%%%%%%%%%%%%%%%%%%%%%%%%%%%%%%%%%%%%
\subsection{Main argument}
\label{sec:detailed:proof:thm:main:mgdl}

With Theorem~\ref{thm:main:mgdl:one:op} established, we prove
Theorem~\ref{thm:main:mgdl} by iteratively applying the one-step contraction
to the successive residuals. This iteration follows the MGDL refinement
mechanism: the correction operators constructed in
Theorem~\ref{thm:main:mgdl:one:op} are concatenated and used as the
grade-wise subnetworks. Before carrying out the construction, we record a
minor indexing convention. The first output map is chosen to be the zero map
so that the first nonzero correction appears at the next index. After
relabeling, this correction corresponds to the first effective MGDL grade, and
the convention has no effect on the practical grade-wise formulation in
Section~\ref{sec:mgdl-algorithm}.

\begin{proof}[Proof of Theorem~\ref{thm:main:mgdl}]
We assume $f \not\equiv 0$, as the case $f \equiv 0$ is trivial. 
Fix
$\varepsilon\in \big(0,\tfrac{1}{1+2^{d}}\big)$
throughout the proof.
% Throughout the proof, we write $\ReLU$ for $\ReLU$.
Apply Theorem~\ref{thm:main:mgdl:one:op} to $R_1 \coloneqq f \in C([0,1]^d)$. 
There exist $n_1\in\N^+$ with $n_1\ge 2$ and affine maps 
\begin{gather*}
\mathcal{A}_{1}\in\aff(d,5d+1), \quad
\calA_2,\cdots,\calA_{n_1}\in\aff(5d+1,5d+1),\quad \tildecalA_{n_1+1}\in\aff(5d+1,1),
\\ 
\mathcal{A}_{2}^{\mathrm{out}},\cdots, \mathcal{A}_{n_1}^{\mathrm{out}}\in\aff(5d+1, 1),\quad   \tildecalA_{n_1+1}^{\mathrm{out}}\in\aff(1, 1)
\end{gather*}
such that, with 
\[
   R_{k+1} \coloneqq  R_{k} - \mathcal{A}_{k+1}^{\mathrm{out}} \circ    
   \ocomp_{i=1}^{k+1} (\ReLU \circ \mathcal{A}_i)\quad   \tn{for}\   k=1,\dots,n_1-1,
\]
and \[
   R_{n_1+1} \coloneqq  R_{n_1} - \caltildeA_{n_1+1}^{\mathrm{out}} \circ (\ReLU \circ \tildecalA_{n_1+1})\circ  
   \ocomp_{i=1}^{n_1} (\ReLU \circ \mathcal{A}_i),
\]
the following properties are satisfied:
\begin{enumerate}[label=(\roman*)]
\item 
\emph{Structural properties of the affine maps:}
$\mathcal{A}_2\in \aff(5d+1,5d+1)$  is independent of the  input coordinate  with index $5d+1$. Moreover, for 
% $k=1,\ldots,n_1$
% and 
all
$\bmx\in[0,1]^d$,
\begin{equation}
\label{eq:n1:id:x}
    \left(
\ocomp_{i=1}^{n_1}(\ReLU\circ\mathcal{A}_i)(\bmx)
\right)[1:d]
=
\bmx .
\end{equation}

    \item Uniform accuracy:
\begin{equation}
\label{eq:n1:Rn:unif:decay}
           \|R_{n_1+1}\|_{L^\infty([0,1]^d)}
       \le (1-\eps ) \|f\|_{L^\infty([0,1]^d)}.
\end{equation}

    \item Pointwise domination:
\begin{equation}
        \label{eq:n1:Rn:pointwise:decay}
       |R_{k+1}(\bmx)| \le |R_k(\bmx)|\quad   \tn{for all}\    \bmx\in[0,1]^d
       \tn{ and  } k=1,2,\dots,n_1.
\end{equation}

    \item Strict $L^p$ monotonicity for every $p\in[1,\infty)$:
\begin{equation}
\label{eq:n1:Rn:strict:Lp:decay}
           \|R_{k+1}\|_{L^p([0,1]^d)} < \|R_k\|_{L^p([0,1]^d)}\quad   \tn{for}\    k=1,2,\dots,n_1.
\end{equation}
\end{enumerate}
We remark that we will construct 
$\calA_{n_1+1}^{\mathrm{out}},  \calA_{n_1+1}$
based on $\caltildeA_{n_1+1}^{\mathrm{out}},  \tildecalA_{n_1+1}$.

Next, 
we assume $R_{n_1+1} \not\equiv 0$, since the case $R_{n_1+1} \equiv 0$ is trivial.
Applying Theorem~\ref{thm:main:mgdl:one:op} again to $R_{n_1+1}\in C([0,1]^d)$, 
there exist $n_2\in\N^+$ with $n_2\ge 2$ and affine maps 
\begin{gather*}
\hatcalA_{n_1+1}\in\aff(d,5d+1), \quad
\calA_{n_1+2},\cdots,\calA_{n_1+n_2}\in\aff(5d+1,5d+1),\quad \tildecalA_{n_1+n_2+1}\in\aff(5d+1, 1),
\\[3pt]  
\mathcal{A}_{n_1+2}^{\mathrm{out}},\cdots, \mathcal{A}_{n_1+n_2}^{\mathrm{out}}\in\aff(5d+1, 1),\quad   \tildecalA_{n_1+n_2+1}^{\mathrm{out}}\in\aff( 1, 1) 
\end{gather*}
such that, with 
\[
   R_{k+1} \coloneqq  R_{k} - \mathcal{A}_{k+1}^{\mathrm{out}} \circ    
   \left(\ocomp_{i=n_1+2}^{k+1} (\ReLU \circ \mathcal{A}_i)\right)\circ (\ReLU \circ \hatcalA_{n_1+1})  \quad   \tn{for}\   k=n_1+1,\dots,n_1+n_2-1
\]
and
\[
   R_{n_1+n_2+1} \coloneqq  R_{n_1+n_2} - \tildecalA_{n_1+n_2+1}^{\mathrm{out}} \circ (\ReLU \circ \tildecalA_{n_1+n_2+1})\circ  
   \left(\ocomp_{i=n_1+2}^{n_1+n_2} (\ReLU \circ \mathcal{A}_i)\right)\circ (\ReLU \circ \hatcalA_{n_1+1}),
\]
% $\mathcal{A}_{1}\in\aff(1,5d+1)$, 
% $\calA_2,\cdots,\calA_{n_1}\in\aff(5d+1,5d+1)$, $\tildecalA_{n_1+1}\in\aff(5d+1,5d+1),$
% $\mathcal{A}_{2}^{\mathrm{out}},\cdots, \mathcal{A}_{n_1+1}^{\mathrm{out}}\in\aff(5d+1, 1)$
% such that, with 
% \[
%    R_{k+1} \coloneqq  R_{k} - \mathcal{A}_{k+1}^{\mathrm{out}} \circ (\ReLU \circ \mathcal{A}_i)\circ  
%    \ocomp_{i=n_1+1}^{k+1} (\ReLU \circ \mathcal{A}_i)\quad   \tn{for}\   k=n_1+1,\dots,n_1+n_2,
% \]
the following properties hold:
\begin{enumerate}[label=(\roman*)]
\item 
\emph{Structural properties of the affine maps:}
$\mathcal{A}_{n_1+2}\in \aff(5d+1,5d+1)$  is independent of the  input coordinate  with index $5d+1$. Moreover, for all
% $k=n_1+1,\ldots,n_1+n_2$
% and 
$\bmx\in[0,1]^d$,
\[
\left(\left(
\ocomp_{i=n_1+2}^{n_1+n_2}(\ReLU\circ\mathcal{A}_i)\right)\circ (\ReLU \circ \hatcalA_{n_1+1})(\bmx)
\right)[1:d]
=
\bmx .
\]

    \item Uniform accuracy:
    \[
       \|R_{n_1+n_2+1}\|_{L^\infty([0,1]^d)}
       \le (1-\eps ) \|R_{n_1+1}\|_{L^\infty([0,1]^d)}
       \le (1-\eps )^2 \|f\|_{L^\infty([0,1]^d)},
    \]
    where the second inequality comes from \eqref{eq:n1:Rn:unif:decay}.

    \item Pointwise domination:
    \[
       |R_{k+1}(\bmx)| \le |R_k(\bmx)|\quad   \tn{for all}\   \bmx\in[0,1]^d \tn{  and   } k=n_1+1,\dots,n_1+n_2.
    \]
    Combined with \eqref{eq:n1:Rn:pointwise:decay}, we have
        \[
       |R_{k+1}(\bmx)| \le |R_k(\bmx)|\quad   \tn{for all}\   \bmx\in[0,1]^d \tn{  and   } k=1,\dots,n_1+n_2.
    \]

        \item Strict $L^p$ monotonicity:
    \[
       \|R_{k+1}\|_{L^p([0,1]^d)} < \|R_k\|_{L^p([0,1]^d)}\quad   \tn{for}\   k=n_1+1,\dots,n_1+n_2.
    \]
    Combined with \eqref{eq:n1:Rn:strict:Lp:decay}, we have
        \[
       \|R_{k+1}\|_{L^p([0,1]^d)} < \|R_k\|_{L^p([0,1]^d)}\quad   \tn{for}\   k= 1,\dots,n_1+n_2.
    \]
\end{enumerate}

Define 
$\calA_{n_1+1}^{\mathrm{out}}\in \aff(5d+1,1)$
via 
\[
\calA_{n_1+1}^{\mathrm{out}}(\bmx,\bmy,z)=\tildecalA_{n_1+1}^{\mathrm{out}}(z)
\]
and
$\calA_{n_1+1}\in \aff(5d+1,5d+1)$ via
\begin{equation*}
    \calA_{n_1+1}\left(
    \begin{bmatrix}
        \bmx\\ \bmy\\ z
    \end{bmatrix}
    \right)
    =\begin{bmatrix}
        \big(\hatcalA_{n_1+1}(\bmx)\big)[1:5d]\\[4pt]
        \caltildeA_{n_1+1}(\bmx,\bmy,z)
    \end{bmatrix}
    \quad \tn{for all $\bmx\in\R^d,\, \bmy\in \R^{4d},\, z\in\R$}.
\end{equation*}
It follows that
\[
\begin{split}
       R_{n_1+1}
       & = R_{n_1} - \caltildeA_{n_1+1}^{\mathrm{out}} \circ (\ReLU \circ \tildecalA_{n_1+1})\circ  
   \ocomp_{i=1}^{n_1} (\ReLU \circ \mathcal{A}_i)
   \\
   & = R_{n_1} - \calA_{n_1+1}^{\mathrm{out}} \circ (\ReLU \circ \calA_{n_1+1})\circ  
   \ocomp_{i=1}^{n_1} (\ReLU \circ \mathcal{A}_i)
   = R_{n_1} - \calA_{n_1+1}^{\mathrm{out}} \circ  
   \ocomp_{i=1}^{n_1+1} (\ReLU \circ \mathcal{A}_i),
\end{split}
\]

Since $\mathcal{A}_{n_1+2}\in \aff(5d+1,5d+1)$ is independent of the input coordinate with index $5d+1$,
by \eqref{eq:n1:id:x}, for any $\bmx\in [0,1]^d$,
\[
\begin{split}
      \calA_{n_1+2}\circ (\ReLU \circ \hatcalA_{n_1+1})(\bmx)
& =
\calA_{n_1+2}\circ (\ReLU \circ \hatcalA_{n_1+1})
\left(\left(\ocomp_{i=1}^{n_1}(\ReLU\circ\mathcal{A}_i)(\bmx)\right)[1:d]\right)
\\ &  =\calA_{n_1+2}\circ (\ReLU \circ \calA_{n_1+1})
\left( \ocomp_{i=1}^{n_1}(\ReLU\circ\mathcal{A}_i)(\bmx)\right)
\\ &   =\calA_{n_1+2}\circ 
\ocomp_{i=1}^{n_1+1}(\ReLU\circ\mathcal{A}_i)(\bmx)
.
\end{split}
\]
Then, for any $\bmx\in[0,1]^d$ and $k=n_1+1,\ldots,n_1+n_2-1$, we have
\begin{equation*}
    \begin{split}
           R_{k+1}(\bmx)
           & = R_{k}(\bmx) - \mathcal{A}_{k+1}^{\mathrm{out}} \circ    
   \left(\ocomp_{i=n_1+2}^{k+1} (\ReLU \circ \mathcal{A}_i)\right)\circ (\ReLU \circ \hatcalA_{n_1+1})(\bmx) 
   % \\ & = R_{k}(\bmx) - \mathcal{A}_{k+1}^{\mathrm{out}} \circ    
   % \left(\ocomp_{i=n_1+2}^{k+1} (\ReLU \circ \mathcal{A}_i)\right)\circ (\ReLU \circ \hatcalA_{n_1+1})
   % \circ \ocomp_{i=1}^{n_1}(\ReLU\circ\mathcal{A}_i) (\bmx)
   %    \\ & = R_{k}(\bmx) - \mathcal{A}_{k+1}^{\mathrm{out}} \circ    
   % \left(\ocomp_{i=n_1+2}^{k+1} (\ReLU \circ \mathcal{A}_i)\right)\circ (\ReLU \circ \calA_{n_1+1})
   % \circ \ocomp_{i=1}^{n_1}(\ReLU\circ\mathcal{A}_i) (\bmx)
 \\ &   = R_{k}(\bmx) - \mathcal{A}_{k+1}^{\mathrm{out}} \circ    
  \ocomp_{i=1}^{k+1} (\ReLU \circ \mathcal{A}_i)(\bmx).
    \end{split}
\end{equation*}
Therefore, for any $\bmx\in[0,1]^d$ and $k=1,\ldots,n_1+n_2-1$, we have
\begin{equation*}
    \begin{split}
           R_{k+1}(\bmx)
            = R_{k}(\bmx)  - \mathcal{A}_{k+1}^{\mathrm{out}} \circ    
  \ocomp_{i=1}^{k+1} (\ReLU \circ \mathcal{A}_i)(\bmx).
    \end{split}
\end{equation*}
In addition, for any $\bmx\in[0,1]^d$, we have
\[
\begin{split}
       R_{n_1+n_2+1}(\bmx)
   %     & = R_{n_1+n_2}(\bmx) - \tildecalA_{n_1+n_2+1}^{\mathrm{out}} \circ (\ReLU \circ \tildecalA_{n_1+n_2+1})\circ  
   % \left(\ocomp_{i=1}^{n_1+n_2} (\ReLU \circ \mathcal{A}_i)\right)\circ (\ReLU \circ \hatcalA_{n_1+1})(\bmx)
  & = R_{n_1+n_2}(\bmx) - \tildecalA_{n_1+n_2+1}^{\mathrm{out}} \circ (\ReLU \circ \tildecalA_{n_1+n_2+1})\circ  
  \ocomp_{i=1}^{n_1+n_2} (\ReLU \circ \mathcal{A}_i)(\bmx)
  \\ & 
  % = R_{n_1+n_2} - \calA_{n_1+n_2+1}^{\mathrm{out}} \circ (\ReLU \circ \calA_{n_1+n_2+1})\circ  
  % \ocomp_{i=n_1+2}^{n_1+n_2} (\ReLU \circ \mathcal{A}_i)
  =R_{n_1+n_2}(\bmx) - \calA_{n_1+n_2+1}^{\mathrm{out}} \circ 
  \ocomp_{i=1}^{n_1+n_2+1} (\ReLU \circ \mathcal{A}_i)(\bmx),
\end{split}
\]
where
$\mathcal{A}_{n_1+n_2+1}^{\mathrm{out}}$ and
$\mathcal{A}_{n_1+n_2+1}$ are constructed in the same way from
\[
\tildecalA_{n_1+n_2+1}^{\mathrm{out}},
\quad
\tildecalA_{n_1+n_2+1},
% \quad
% \hatcalA_{n_1+n_2+1}^{\mathrm{out}},
\quad
\hatcalA_{n_1+n_2+1}.
\]
Here,  $\hatcalA_{n_1+n_2+1}$ is obtained by applying
Theorem~\ref{thm:main:mgdl:one:op} to $R_{n_1+n_2+1}$.

Proceeding recursively, the same argument can be applied to 
$R_{n_1+n_2+1}, R_{n_1+n_2+n_3+1},$ and so on.  
By this iterative construction, affine maps 
\[
\mathcal{A}_{i}\in \aff_{\le 5d+1}
\quad \text{for } i=1,2,\ldots
\quad \tn{and}\quad
\mathcal{A}_{k}^{\mathrm{out}}\in \aff_{\le 5d+1}
\quad \text{for } k=2,3,\ldots 
\]
can be obtained such that, with
\[
   R_{k+1} = R_k - \mathcal{A}_{k+1}^{\mathrm{out}} \circ 
   \ocomp_{i=1}^{k+1} (\ReLU \circ \mathcal{A}_i)\quad \tn{on  } [0,1]^d  \quad \tn{for}\   k=1,2,\dots,
\]
the following properties are satisfied:
\begin{enumerate}[label=(\roman*)]
    \item Uniform accuracy:
    \[
       \|R_k\|_{L^\infty([0,1]^d)} \to 0
       \quad \text{as } k\to\infty.
    \]

        \item Pointwise domination:
    \[
       |R_{k+1}(\bmx)| \le |R_k(\bmx)|\quad   \tn{for all}\   \bmx\in[0,1]^d \tn{  and  } k=1,2,\dots.
    \]
    
    \item Strict $L^p$ monotonicity for every $p\in[1,\infty)$:
    \[
       \|R_{k+1}\|_{L^p([0,1]^d)} < \|R_k\|_{L^p([0,1]^d)}\quad \tn{if  }     \|R_k\|_{L^p([0,1]^d)}>0  \quad \tn{for}\   k=1,2,\dots.
    \]
\end{enumerate}
It follows from these properties that, for every $\bmx\in[0,1]^d$,
\[
    |R_k(\bmx)| \searrow 0
\quad \tn{and}\quad
    \|R_k\|_{L^\infty([0,1]^d)} \searrow 0 \quad \text{as } k\to\infty .
\]
Moreover, for every $p\in[1,\infty)$,
\[
    \|R_k\|_{L^p([0,1]^d)} \searrow 0
    \quad \text{as } k\to\infty,
\]
strictly until the residual becomes identically zero. More precisely, for each
$k$, either
\[
    R_{k+1}\equiv R_k\equiv 0
    \quad \text{on } [0,1]^d 
\quad \tn{or}\quad 
    \|R_{k+1}\|_{L^p([0,1]^d)}
    <
    \|R_k\|_{L^p([0,1]^d)} .
\]

Finally, 
we define the multigrade approximants
% define the residual sequence
\[
% R_k
% \coloneqq
% f- \Phi_k\quad \tn{and}\quad 
\Phi_k\coloneqq
\sum_{\ell=1}^k
\mathcal{A}_{\ell}^{\mathrm{out}}
\circ
\ocomp_{i=1}^\ell (\ReLU \circ \mathcal{A}_i)\quad   \tn{for all}\   k\in\mathbb{N}^+,
\]
where $\mathcal{A}_1^{\mathrm{out}}\in\aff(5d+1,1)$ is chosen to be the zero affine map. 
This is only an indexing convention used to fit the two-hidden-layer cutoff realization
into the residual-sum form; the first nonzero correction appears at the next
index and the multigrade approximation property is unchanged.

Thus $\Phi_1\equiv 0$ and $R_1=f=f-\Phi_1$. Moreover, for every $k=1,2,\cdots$,
\[
\begin{split}
    R_{k+1}
    &=R_k-\mathcal{A}_{k+1}^{\mathrm{out}} \circ 
   \ocomp_{i=1}^{k+1} (\ReLU \circ \mathcal{A}_i)
   =\cdots
   =R_1- \sum_{\ell=2}^{k+1}\mathcal{A}_{\ell}^{\mathrm{out}} \circ 
   \ocomp_{i=1}^{\ell} (\ReLU \circ \mathcal{A}_i)
   \\  &  =f- \sum_{\ell=1}^{k+1}\mathcal{A}_{\ell}^{\mathrm{out}} \circ 
   \ocomp_{i=1}^{\ell} (\ReLU \circ \mathcal{A}_i).
\end{split}
\]
Therefore,
\[
\begin{split}
    R_{k}
     =f- \sum_{\ell=1}^{k}\mathcal{A}_{\ell}^{\mathrm{out}} \circ 
   \ocomp_{i=1}^{\ell} (\ReLU \circ \mathcal{A}_i)\quad 
   \tn{for  } k=1,2,\cdots.
\end{split}
\]
We also note that the construction may be terminated once the residual vanishes.
Indeed, if \(R_{k_0}\equiv 0\) for some \(k_0\), then one may choose
\(\calA_k^{\tn{out}}\) to be the zero affine map for all \(k\ge k_0+1\).
Consequently, \(\Phi_k=\Phi_{k_0}\) for all \(k\ge k_0+1\), and hence
\(R_k\equiv 0\) for all \(k\ge k_0\).
This completes the proof.
\end{proof}
% By construction and the recursive definition of $\{f_k\}$, one readily verifies that
% \[
% R_k = f_{k+1}\quad   \tn{for all}\   k\in\mathbb{N}^+.
% \]

% Clearly, all properties established for $\{f_k\}_{k=1}^{\infty}$ immediately carry over to
% the residual sequence $\{R_k\}_{k=1}^{\infty}$.
% In particular, we have
% \[
% \|R_k\|_{L^\infty([0,1]^d)} \to 0
% \quad \text{as } k\to\infty,
% \]
% together with the pointwise monotonicity
% \[
% |R_{k+1}(\bmx)| \le |R_k(\bmx)|\quad   \tn{for all}\   \bmx\in[0,1]^d\tn{  and  } k\in\mathbb{N}^+,
% \]
% and the strict monotonic decrease in every $L^p$ norm for $p\in[1,\infty)$,
% \[
% \|R_{k+1}\|_{L^p([0,1]^d)} < \|R_k\|_{L^p([0,1]^d)}\quad   \tn{for all}\   k\in\mathbb{N}^+.
% \]

% Since $|R_k(\bmx)| \le \|R_k\|_{L^\infty([0,1]^d)}$ for all $\bmx\in[0,1]^d$,
% the uniform convergence implies
% \[
% |R_k(\bmx)| \searrow 0
% \quad \text{as $k\to \infty$ for every } \bmx\in[0,1]^d.
% \]

We remark that, in Corollary~\ref{coro:mgdl:residual}, the subsequence
$\{k_j\}_{j=1}^\infty$ can be constructed in the form
\[
k_j = 1+\sum_{i=1}^{j-1} n_i,
\]
where each $n_i$ depends on 
% the target function $f$ 
the prescribed
$\varepsilon$ and the current residual \(R_{k_i}\). The explicit construction will be clarified in the proof of
Theorem~\ref{thm:main:mgdl:one:op}.
% Moreover, the inequality
% \[
% \|R_k\|_{L^p([0,1]^d)} \le \|R_k\|_{L^\infty([0,1]^d)}
% \]
% combined with the strict $L^p$ monotonicity yields
% \[
% \|R_k\|_{L^p([0,1]^d)} \searrow 0
% \  \text{strictly \quad  as } k\to\infty.
% % ,
% % \quad    p\in[1,\infty).
% \]

\end{colorenv}

%%%%%%%%%%%%%%%%%%%%%%%%%%%%%%%%%%%%%
%%%%%%%%%%%%%%%%%%%%%%%%%%%%%%%%%%%%%
\section{Proof of Theorem~\ref{thm:main:mgdl:one:op}}
\label{sec:proof:thm:main:mgdl:one:op}

This section establishes the one-step contraction theorem,
Theorem~\ref{thm:main:mgdl:one:op}, which is the technical core of the paper.
Section~\ref{sec:ideas:proof:thm:main:mgdl:one:op} gives the high-level proof
strategy: localization of the near-maximum region, construction of cutoff
functions, and assembly of the contraction operator. Section~\ref{sec:preliminaries:proof:thm:main:mgdl:one:op}
develops the required contraction notions, geometric covering estimates, and
cutoff-function preliminaries. Finally,
Section~\ref{sec:detailed:proof:thm:main:mgdl:one:op} realizes the contraction
operator as a finite multigrade \ReLU{} architecture and proves the desired
uniform contraction, pointwise domination, and strict \(L^p\) monotonicity.
% In this section, we present a complete proof of
% Theorem~\ref{thm:main:mgdl:one:op}.
% We begin by outlining the proof strategy    in
% Section~\ref{sec:ideas:proof:thm:main:mgdl:one:op}.
% Next,  Section~\ref{sec:preliminaries:proof:thm:main:mgdl:one:op} introduces the necessary preliminaries, including the relevant contraction
% notions, auxiliary lemmas, and structural conditions that will be used
% throughout the proof.
% Finally, these components are assembled in Section~\ref{sec:detailed:proof:thm:main:mgdl:one:op} to establish the desired one-step
% contraction, thereby completing the proof.

%%%%%%%%%%%%%%%%%%%%%%%%%%%%%%%%%%%%%
%%%%%%%%%%%%%%%%%%%%%%%%%%%%%%%%%%%%%
\subsection{Proof strategy}
\label{sec:ideas:proof:thm:main:mgdl:one:op}

We  summarize here the proof strategy for Theorem~\ref{thm:main:mgdl:one:op} and outline the construction that leads
to the desired one-step contraction.
\begin{colorenv}[blue]
    Our construction relies on separating the positive and negative contributions of a function and controlling them via suitable operators. For this reason, we begin by fixing notation for the positive and negative parts of a function.

\begin{definition}
For any real-valued function $g$, we write $g = g^{+} - g^{-}$, where
\begin{equation*}
    g^{+}(\bmx)=\max\{g(\bmx),0\}\quad \tn{and}\quad 
    g^{-}(\bmx)=\max\{-g(\bmx),0\}.
\end{equation*}
The functions $g^{+}$ and $g^{-}$ denote the positive and negative parts of $g$, and this notation will be used throughout the paper.
\end{definition}

It is enough to describe the construction for the case \(g(\bmx)\ge 0\) on \([0,1]^d\). Indeed, for a general target function \(g\), we decompose \(g=g^+-g^-\), apply the same construction separately to \(g^+\) and \(g^-\), and then combine the two resulting corrections with opposite signs. Let \(\varepsilon\in(0,1)\) be a small parameter to be chosen later. The argument proceeds through the following three conceptual steps.

\begin{enumerate}[label=\emph{Step \arabic*:}, leftmargin=4.5em]

\item \emph{Localization of the near-maximum region.}  

We first identify the subset on which \(g\) is close to its supremum, namely,
\[
    E_{g,\eps}
     \coloneqq
    \bigl\{\bmx\in[0,1]^d:
    g(\bmx) >  (1-\eps)\|g\|_{L^\infty([0,1]^d)}\bigr\}.
    \]
%     and
%     \[
%     \widetilde E_{g,\eps}
%      \coloneqq
%     \bigl\{\bmx\in[0,1]^d:
%     g^-(\bmx) > (1-\eps)\|g^-\|_{L^\infty([0,1]^d)}\bigr\}.
% \]
% for a proper \(\eta\in (0,1)\), e.g., $\eta=1/4$.
% We remark that our proof relies on the 
% $1-2^d\eps> \eta\ge \eps>0$, as we shall see later in Section~\ref{sec:proof:thm:main:mgdl:one:op}.
Figure~\ref{fig:cutoff:functions} illustrates the region $E_{g,\eps}$.
% and $\tildeE_{g,\eps}$.
This set is then covered by finitely many essentially disjoint hypercubes, which allows the near-maximum region to be localized in geometrically simple sets.
% This set is then covered by finitely many essentially disjoint hypercubes, ensuring that a strictly positive proportion of 
% the near-maximum region is localized in finitely many geometrically simple hypercubes.
% the residual mass is localized within geometrically simple, well-structured regions. 

\item \emph{Construction of cutoff functions.}  

For each hypercube \(Q\) in the above cover, we can construct a continuous cutoff function that

(i) attains the constant value \(\eps \|g\|_{L^\infty([0,1]^d)}\)  on \(Q\), 

(ii) decays continuously to zero on a slightly enlarged cube, and  

(iii) vanishes identically outside this enlargement. 

Each such cutoff function can be implemented exactly by a simple \ReLU{} network.
% of width at most \(5d\).
Figure~\ref{fig:cutoff:functions} schematically illustrates this construction. 

\item \emph{Assembly of the contraction operator.}  

We define the operator \(Tg\) as the sum of all cutoff functions, with the amplitudes scaled so that \(Tg\) 
% is sign-aligned with \(g\) and 
satisfies the pointwise bound
\[
    |Tg(\bmx)| \le |g(\bmx)| \quad  \text{for all }\bmx.
\]
To guarantee this inequality, we carefully account for potential overlaps among the supports of the cutoff functions as illustrated in Figure~\ref{fig:cutoff:functions} and choose the contraction parameter $\varepsilon$ sufficiently small so that the aggregate contribution of all cutoffs remains dominated by \(g\).  
Under our construction, the number of overlapping cutoff supports at any point is uniformly bounded by \(2^d\), as  established in Lemma~\ref{lem:bounded:overlap:scaling}. Consequently, it suffices to impose the condition
\[
\varepsilon < \frac{1}{1+2^d},
\]
which will be used repeatedly in the detailed proof. 
With such a choice of \(\varepsilon\),
we obtain the uniform estimate
\[
    \|g - Tg\|_{L^\infty([0,1]^d)} 
    \le (1-\varepsilon)\|g\|_{L^\infty([0,1]^d)} ,
    % \quad  \text{for all } \bmx\in[0,1]^d,
\]
% \[
%     |g(\bmx) - Sg(\bmx)| 
%     \le (1-\varepsilon)\,|g(\bmx)|
%     \quad  \text{for all } \bmx\in[0,1]^d,
% \]
yielding a uniform contraction of the residual. The remaining conclusions of Theorem~\ref{thm:main:mgdl:one:op}, including strict \(L^p\) monotonicity and pointwise domination, emerge naturally from the preceding construction.
\end{enumerate}

\begin{figure}[!ht]
\centering
\includegraphics[width=0.84027\textwidth]{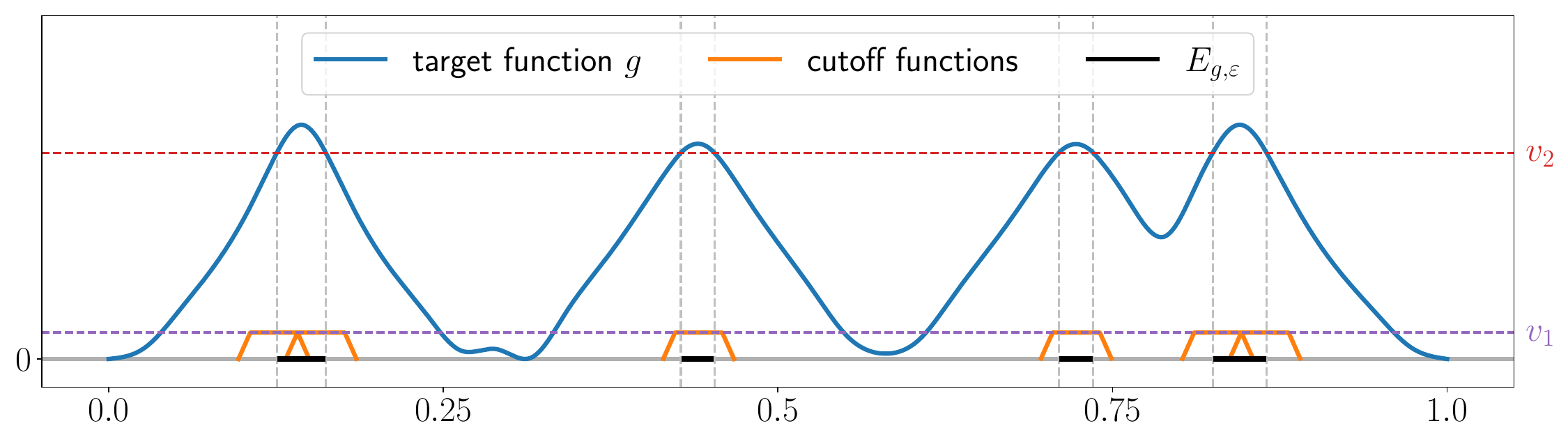}
\caption{An illustration of  cutoff functions associated with a target function $g$, where $v_1=\eps \|g\|_{L^\infty([0,1]^d)}$ and $v_2= (1-\eps) \|g\|_{L^\infty([0,1]^d)}$.}
\label{fig:cutoff:functions}
\end{figure}
\end{colorenv}

%%%%%%%%%%%%%%%%%%%%%%%%%%%%%%%%%%%%%
%%%%%%%%%%%%%%%%%%%%%%%%%%%%%%%%%%%%%
\subsection{Preliminaries}
\label{sec:preliminaries:proof:thm:main:mgdl:one:op}

We now present the technical preliminaries underlying the proof of
Theorem~\ref{thm:main:mgdl:one:op}.
As outlined in Section~\ref{sec:ideas:proof:thm:main:mgdl:one:op}, the proof
follows a three-step strategy centered on constructing a suitable
contraction mechanism.
To formalize this approach, we first introduce the basic notions and
structural assumptions on operators that will be used throughout the
analysis, and establish several auxiliary definitions and lemmas that
support the subsequent construction.

Throughout this section, we fix a domain $\Omega \subset \mathbb{R}^n$ and a
small parameter $\varepsilon \in (0,1)$.
All definitions and results below are formulated with respect to these
standing assumptions.
% Our construction relies on separating the positive and negative contributions of a function and controlling them via suitable operators. For this reason, we begin by fixing notation for the positive and negative parts of a function.
% \begin{definition}
% For any real-valued function $g$, we write $g = g^{+} - g^{-}$, where
% \begin{equation*}
%     g^{+}(\bmx)=\max\{g(\bmx),0\}\quad \tn{and}\quad 
%     g^{-}(\bmx)=\max\{-g(\bmx),0\}.
% \end{equation*}
% The functions $g^{+}$ and $g^{-}$ denote the positive and negative parts of $g$, and this notation will be used throughout the paper.
% \end{definition}
We first introduce two notions of contractivity, which quantify the extent to which an operator drives a function toward its positive or balanced components.

\begin{definition}
An operator $T:C(\Omega)\to C(\Omega)$ is called a \emph{positive $\varepsilon$-contraction} if for any $g\in C(\Omega)$,
\[
    0\le g^{+}(\bm{x})-Tg(\bm{x})
    \le (1-\varepsilon)\|g^{+}\|_{L^\infty(\Omega)}\quad \tn{for all }\bm{x}\in\Omega.
\]
% \[
%     \|g^{+}-Tg\|_{L^\infty(\Omega)}
%     \le (1-\varepsilon)\|g^{+}\|_{L^\infty(\Omega)}.
% \]
% \begin{equation*}
%     0 \le (g^+ -Tg)(\bm{x})
%     \le (1-\varepsilon)g^{+}(\bm{x}),
%     \quad  \forall\bm{x}\in\Omega.
% \end{equation*}
Similarly, $T$ is called a \emph{balanced $\varepsilon$-contraction} if for any $g\in C(\Omega)$,
\[
    \|g -Tg\|_{L^\infty(\Omega)}
    \le (1-\varepsilon)\|g \|_{L^\infty(\Omega)}.
\]
% \begin{equation*}
%     -(1-\varepsilon)g^{-}(\bm{x})
%     \le (g-Tg)(\bm{x})
%     \le (1-\varepsilon)g^{+}(\bm{x}),
%     \quad  \forall\bm{x}\in\Omega.
% \end{equation*}
\end{definition}
The positive $\varepsilon$-contraction controls only the positive part $g^{+}$, while the balanced $\varepsilon$-contraction acts on the full function $g$. In the proof of Theorem~\ref{thm:main:mgdl:one:op}, we will first construct a positive contraction and then convert it into a balanced one by symmetrization.

To verify these contraction properties for a given operator, it is useful to isolate two basic structural conditions. 
The first provides a pointwise upper bound on $Tg$, expressed in terms of the positive part $g^{+}$. 
The second asserts that, whenever $g$ is sufficiently close to the maximum of $g^{+}$, the operator $T$ preserves a fixed proportion of this maximal value.

\begin{enumerate}[label=(C\arabic*)]
\item \label{item:T:C1}  {Pointwise bounds:} for all $g\in C(\Omega)$ and $\bm{x}\in\Omega$,
\[
0 \le Tg(\bm{x}) \le g^{+}(\bm{x}).
\]

\item \label{item:T:C2}  {Stability near the maximum:} for all $g\in C(\Omega)$ and $\bm{x}\in\Omega$,
\[
g(\bm{x}) >  (1-\varepsilon)\|g^{+}\|_{L^\infty(\Omega)}
\quad\Longrightarrow\quad
Tg(\bm{x}) \ge \varepsilon\|g^{+}\|_{L^\infty(\Omega)}.
\]
\end{enumerate}

Intuitively, condition~\ref{item:T:C1} enforces that $T$ never overshoots the positive part of $g$, while condition~\ref{item:T:C2} guarantees that $T$ retains a fixed fraction of the peak value whenever $g$ is close to its maximum. Together, they are precisely tailored to yield a positive contraction in the sense defined above.
% These two conditions ensure that $T$ behaves like a one-sided contraction in the sense introduced above.
The next lemma shows that these two simple structural conditions are already sufficient to imply a quantitative one-sided contraction.

\begin{lemma}\label{lem:P1P2:to:positive:eps:constraction}
Suppose that an operator $T:C(\Omega)\to C(\Omega)$ satisfies conditions \ref{item:T:C1} and \ref{item:T:C2}.  
Then $T$ is a positive $\varepsilon$-contraction. 
% In particular,
% \[
%     \|g^{+}-Tg\|_{L^\infty(\Omega)}
%     \le (1-\varepsilon)\|g^{+}\|_{L^\infty(\Omega)}.
% \]
\end{lemma}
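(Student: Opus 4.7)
The plan is to reduce the $L^\infty$ contraction to a pointwise estimate and then dispatch it via a two-case argument determined by whether the point is near the peak of $g^{+}$ or not. Write $M \coloneqq \|g^{+}\|_{L^\infty(\Omega)}$. Since condition~\ref{item:T:C1} gives $0 \le Tg(\bm x) \le g^{+}(\bm x)$, the difference $g^{+}(\bm x) - Tg(\bm x)$ is automatically nonnegative, so it suffices to establish the uniform upper bound $g^{+}(\bm x) - Tg(\bm x) \le (1-\varepsilon)M$ for every $\bm x \in \Omega$; taking the supremum then yields the desired positive $\varepsilon$-contraction.

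To prove this pointwise bound, I would split $\Omega$ into two regions depending on the size of $g^{+}(\bm x)$. In the \emph{off-peak} case where $g^{+}(\bm x) \le (1-\varepsilon)M$, the bound follows immediately from the nonnegativity of $Tg$ given in~\ref{item:T:C1}, since
\[
    g^{+}(\bm x) - Tg(\bm x) \;\le\; g^{+}(\bm x) \;\le\; (1-\varepsilon)M.
\]
In the \emph{near-peak} case where $g^{+}(\bm x) > (1-\varepsilon)M$, we in particular have $g^{+}(\bm x) > 0$, which by the definition of the positive part forces $g(\bm x) = g^{+}(\bm x) > (1-\varepsilon)M$. Hence the hypothesis of~\ref{item:T:C2} is satisfied at $\bm x$, and we conclude that $Tg(\bm x) \ge \varepsilon M$. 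Combined with $g^{+}(\bm x) \le M$ from the definition of $M$, this yields
\[
    g^{+}(\bm x) - Tg(\bm x) \;\le\; M - \varepsilon M \;=\; (1-\varepsilon)M,
\]
completing the second case.

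There is no real obstacle here: the only subtle point is the transition between~\ref{item:T:C2}, which is phrased in terms of $g(\bm x)$, and the quantity $g^{+}(\bm x)$ that appears naturally in the contraction estimate. This is resolved by the simple observation that whenever $g^{+}(\bm x) > 0$ one has $g(\bm x) = g^{+}(\bm x)$, so the near-peak condition on $g^{+}$ automatically implies the corresponding condition on $g$. Once the two cases are combined and the supremum is taken, the definition of positive $\varepsilon$-contraction is verified, and the lemma follows.
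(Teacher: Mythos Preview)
Your proof is correct and follows essentially the same two-case strategy as the paper: split according to whether the point lies in the near-peak region and apply \ref{item:T:C1} off-peak and \ref{item:T:C2} on-peak. The only cosmetic difference is that the paper phrases the case split in terms of $g(\bm x)$ via the level set $E_{g,\varepsilon}=\{\bm x: g(\bm x)\ge(1-\varepsilon)M\}$ rather than in terms of $g^{+}(\bm x)$, so it does not need your auxiliary observation that $g^{+}(\bm x)>0$ forces $g(\bm x)=g^{+}(\bm x)$; otherwise the arguments are identical.
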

\begin{proof}
Fix $g\in C(\Omega)$ and define the level set
\[
    E_{g,\varepsilon}
    \coloneqq    \big\{\bm{x}\in\Omega : g(\bm{x}) >  (1-\varepsilon)\|g^{+}\|_{L^\infty(\Omega)}\big\}.
\]
We consider separately the cases $\bm{x}\in\Omega\setminus E_{g,\varepsilon}$ and $\bm{x}\in E_{g,\varepsilon}$.

\medskip
\noindent\textbf{Case 1: $\bm{x}\in\Omega\setminus E_{g,\varepsilon}$.}

By definition of $E_{g,\varepsilon}$ we have
\[
    g(\bm{x}) \le (1-\varepsilon)\|g^{+}\|_{L^\infty(\Omega)}.
\]
In particular,
\[
    g^{+}(\bm{x})=\max\{g(\bm{x}),0\}
    \le (1-\varepsilon)\|g^{+}\|_{L^\infty(\Omega)}.
\]
Using the pointwise bound \ref{item:T:C1}, we obtain
\[
    0\le Tg(\bm{x})\le g^{+}(\bm{x}),
\]
and hence
\[
    0\le g^{+}(\bm{x})-Tg(\bm{x})
    \le g^{+}(\bm{x})
    \le (1-\varepsilon)\|g^{+}\|_{L^\infty(\Omega)}.
\]

\medskip
\noindent\textbf{Case 2: $\bm{x}\in E_{g,\varepsilon}$.}

In this case, the stability condition \ref{item:T:C2} yields
\[
    Tg(\bm{x})\ge\varepsilon\|g^{+}\|_{L^\infty(\Omega)}.
\]
On the other hand, by \ref{item:T:C1} we have $Tg(\bm{x})\le g^{+}(\bm{x})\le\|g^{+}\|_{L^\infty(\Omega)}$.  
Combining these two bounds, we obtain
\[
    0\le g^{+}(\bm{x})-Tg(\bm{x})
    \le \|g^{+}\|_{L^\infty(\Omega)}-\varepsilon\|g^{+}\|_{L^\infty(\Omega)}
    = (1-\varepsilon)\|g^{+}\|_{L^\infty(\Omega)}.
\]

Combining the two cases, we conclude that for every $\bm{x}\in\Omega$,
\[
    0\le g^{+}(\bm{x})-Tg(\bm{x})
    \le (1-\varepsilon)\|g^{+}\|_{L^\infty(\Omega)}.
\]
% Taking the supremum over $\bm{x}\in\Omega$ gives
% \[
%     \|g^{+}-Tg\|_{L^\infty(\Omega)}
%     \le (1-\varepsilon)\|g^{+}\|_{L^\infty(\Omega)}.
% \]
This is precisely the positive $\varepsilon$-contraction estimate for $T$ on the positive part of $g$.
\end{proof}

% Finally, we note that a positive contraction automatically induces a balanced contraction through a standard symmetrization procedure.

Having established a criterion for positive contractions, we now show how to extend such one-sided control to a fully balanced contraction via a standard symmetrization argument.

\begin{lemma}\label{lem:positive:to:balanced:eps:contraction}
If $T$ is a positive $\varepsilon$-contraction, then the symmetrized operator
\[
Sg \coloneqq    Tg - T(-g)
\]
is a balanced $\varepsilon$-contraction. 
% In particular,
% \[
%     \|g - Sg\|_{L^\infty(\Omega)}
%     \le (1-\varepsilon)\|g\|_{L^\infty(\Omega)}.
% \]
\end{lemma}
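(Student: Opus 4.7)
The plan is to reduce the balanced (two-sided) contraction for $S$ to the positive (one-sided) contraction for $T$, applied separately to $g$ and to $-g$ on complementary regions of $\Omega$. The key is to exploit the pointwise bound $0\le T(\cdot)\le (\cdot)^{+}$ from condition~\ref{item:T:C1}, which is the natural structural setting in which the operators constructed in Section~\ref{sec:ideas:proof:thm:main:mgdl:one:op} satisfy the positive contraction hypothesis. Under this bound, $T(-g)$ vanishes wherever $g\ge 0$ (since $(-g)^{+}=0$ there), and $Tg$ vanishes wherever $g<0$ (since $g^{+}=0$ there); this sign-based disjointness is the whole engine of the argument.

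With this in hand, I would first split $\Omega$ by the sign of $g(\bm{x})$. On $\{g\ge 0\}$ one has $T(-g)(\bm{x})=0$, hence $Sg(\bm{x})=Tg(\bm{x})$ and
\[
g(\bm{x})-Sg(\bm{x}) \;=\; g^{+}(\bm{x})-Tg(\bm{x}),
\]
whose absolute value is controlled by applying positive $\varepsilon$-contraction to $g$. On $\{g< 0\}$, instead $Tg(\bm{x})=0$, so $Sg(\bm{x})=-T(-g)(\bm{x})$, and using $g^{-}(\bm{x})=(-g)^{+}(\bm{x})$,
\[
g(\bm{x})-Sg(\bm{x}) \;=\; -\bigl((-g)^{+}(\bm{x})-T(-g)(\bm{x})\bigr),
\]
whose absolute value is controlled by applying positive $\varepsilon$-contraction to $-g$.

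Combining the two cases and noting that $\max\{\|g^{+}\|_{L^\infty(\Omega)},\|g^{-}\|_{L^\infty(\Omega)}\}=\|g\|_{L^\infty(\Omega)}$ yields $|g(\bm{x})-Sg(\bm{x})|\le(1-\varepsilon)\|g\|_{L^\infty(\Omega)}$ pointwise, and taking the supremum over $\bm{x}\in\Omega$ gives the balanced contraction estimate. The main subtlety I foresee is not computational but conceptual: the argument hinges on the pointwise bound~\ref{item:T:C1}, without which $T(-g)$ could carry mass on $\{g\ge 0\}$, breaking the clean cancellation and in fact allowing counterexamples in which symmetrization fails to deliver a balanced contraction. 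Since this structural positivity is precisely what the cutoff construction underlying Lemma~\ref{lem:P1P2:to:positive:eps:constraction} guarantees, no machinery beyond the positive contraction hypothesis and the sign-based case analysis is needed to finish the proof.
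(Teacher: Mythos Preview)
Your proof is correct, and your explicit flagging of condition~\ref{item:T:C1} is on point: the lemma, read literally with only the positive $\varepsilon$-contraction hypothesis, fails (e.g.\ $Tg\equiv\tfrac12\|g^{+}\|_{L^\infty(\Omega)}$ is a positive $\tfrac12$-contraction whose symmetrization is identically zero). The paper's proof also uses~\ref{item:T:C1}, though tacitly: it asserts the pointwise lower bound $0\le g^{+}(\bm{x})-Tg(\bm{x})$, which is exactly $Tg\le g^{+}$.

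The organization differs. The paper writes
\[
g-Sg=\bigl(g^{+}-Tg\bigr)-\bigl(g^{-}-T(-g)\bigr)=:a-b,
\]
observes that both $a(\bm{x})$ and $b(\bm{x})$ lie in $[0,(1-\varepsilon)\|g\|_{L^\infty(\Omega)}]$, and concludes $|a-b|\le(1-\varepsilon)\|g\|_{L^\infty(\Omega)}$ directly. You instead split $\Omega$ by the sign of $g$ and use the full squeeze $0\le Tg\le g^{+}$ to make one of $Tg,\,T(-g)$ vanish on each piece, reducing to the one-sided estimate. The paper's argument is marginally slicker and uses only the upper half of~\ref{item:T:C1}; your case analysis uses both inequalities in~\ref{item:T:C1} but makes the mechanism (disjointness of positive and negative corrections) completely transparent, which is what actually drives the later $L^p$ strict-monotonicity arguments in Section~\ref{sec:detailed:proof:thm:main:mgdl:one:op}.
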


\begin{proof}
Fix $g\in C(\Omega)$ and decompose $g=g^{+}-g^{-}$ with $g^{+},g^{-}\ge0$.  Note that
\[
    (-g)^{+}(\bm{x})
    = \max\{-g(\bm{x}),0\}
    = g^{-}(\bm{x}).
\]
Applying the positive $\varepsilon$-contraction property to $g$ and $-g$ gives, for all $\bm{x}\in\Omega$,
\[
    0\le g^{+}(\bm{x})-Tg(\bm{x})
    \le (1-\varepsilon)\|g^{+}\|_{L^\infty(\Omega)}\le (1-\varepsilon)\|g\|_{L^\infty(\Omega)},
\]
and
\[
    0\le g^{-}(\bm{x})-T(-g)(\bm{x})
    \le (1-\varepsilon)\|g^{-}\|_{L^\infty(\Omega)}\le (1-\varepsilon)\|g\|_{L^\infty(\Omega)}.
\]

By defining
\[
    a(\bm{x}) \coloneqq    g^{+}(\bm{x})-Tg(\bm{x}) \quad \tn{and}\quad  
    b(\bm{x}) \coloneqq    g^{-}(\bm{x})-T(-g)(\bm{x}).
\]
% Then $0\le a(\bm{x}),b(\bm{x})\le (1-\varepsilon)\|g\|_{L^\infty(\Omega)}$, and
we have
\[
\begin{aligned}
    g(\bm{x}) - Sg(\bm{x})
    &= \big(g^+(\bm{x}) - g^-(\bm{x})\big) - \big(Tg(\bm{x}) - T(-g)(\bm{x})\big) \\
    &= \big(g^{+}(\bm{x}) - Tg(\bm{x})\big)
       - \big(g^{-}(\bm{x}) - T(-g)(\bm{x})\big) 
       = a(\bm{x}) - b(\bm{x}).
\end{aligned}
\]
Since both $a(\bm{x})$ and $b(\bm{x})$ lie between $0$ and $(1-\varepsilon)\|g\|_{L^\infty(\Omega)}$, their difference satisfies
\[
-(1-\varepsilon)\|g\|_{L^\infty(\Omega)}
\le a(\bm{x}) - b(\bm{x})
\le (1-\varepsilon)\|g\|_{L^\infty(\Omega)}.
\]

Hence, for all $\bm{x}\in\Omega$,
\[
    |g(\bm{x}) - Sg(\bm{x})|
    \le (1-\varepsilon)\|g\|_{L^\infty(\Omega)},
\]
which yields
\[
    \|g - Sg\|_{L^\infty(\Omega)}
    \le (1-\varepsilon)\|g\|_{L^\infty(\Omega)}.
\]
Thus $S$ is a balanced $\varepsilon$-contraction.
\end{proof}

Having established the structural framework and the overall contraction mechanism, we now turn to the concrete operator constructions used in our main theorem.
To construct an operator satisfying conditions~\ref{item:T:C1} and \ref{item:T:C2}, we rely on a geometric decomposition of the domain into small hypercubes and associate to each cube a localized cutoff function. These cutoffs will later be implemented by two-hidden-layer \ReLU{} subnetworks.

\begin{definition}[Dilated hypercubes and continuous cutoff functions]
\label{def:dilated-cube-cutoff}
Let $Q \subset \R^d$ be a cube with center $\bmc(Q)$ and side length $\ell(Q)$.  
For any $r>1$, the \emph{$r$-dilate} of $Q$ is defined by
\[
  rQ \coloneqq \bigl\{\, \bmc(Q) + r(\bmx - \bmc(Q)) : \bmx \in Q \,\bigr\},
\]
so that $rQ$ is the cube concentric with $Q$ and of side length $r\,\ell(Q)$.

A \emph{(continuous) cutoff function} associated with $Q$ is any map $\Gamma_Q : \R^d \to [0,1]$ satisfying
\[
\Gamma_Q(\bmx)=1 \ \text{for } \bmx\in Q,\quad
0\le \Gamma_Q(\bmx)\le 1 \ \text{for } \bmx\in rQ\setminus Q,\quad\tn{and}\quad 
\Gamma_Q(\bmx)=0 \ \text{for } \bmx\notin rQ.
\]
% \[
% \Gamma_Q(\bmx)=
% \begin{cases}
% 1, & \bmx \in Q, \\[2pt]
% 0 \le \Gamma_Q(\bmx) \le 1, & \bmx \in rQ \setminus Q, \\[2pt]
% 0, & \bmx \notin rQ .
% \end{cases}
% \]
% \[
% \Gamma_Q(\bmx)=
% \begin{cases}
% 1, & \bmx \in Q, \\[2pt]
% \in [0,1], & \bmx \in rQ \setminus Q, \\[2pt]
% 0, & \bmx \notin rQ .
% \end{cases}
% \]
Throughout this paper, we fix a dilation parameter $r\in(1,2)$, for example
$r=\frac{3}{2}$.  
To facilitate the implementation of cutoff functions using \ReLU{} networks,
we 
% define $\ReLU(t) = \ReLU(t) \coloneqq \max\{t,0\}$ and 
introduce the one-dimensional
function
\[
\psi(x)
\coloneqq
\frac{1}{r-1}\Bigl(\ReLU(x+r)-\ReLU(x+1)
+\ReLU(-x+r)-\ReLU(-x+1)\Bigr)-1.
\]
Figure~\ref{fig:psi:TrapezoidFunc} illustrates the function $\psi$ for
$r=\frac{3}{2}$.
\begin{colorenv}[blue]

\begin{figure}[!ht]
\centering
\includegraphics[width=0.5602795\textwidth]{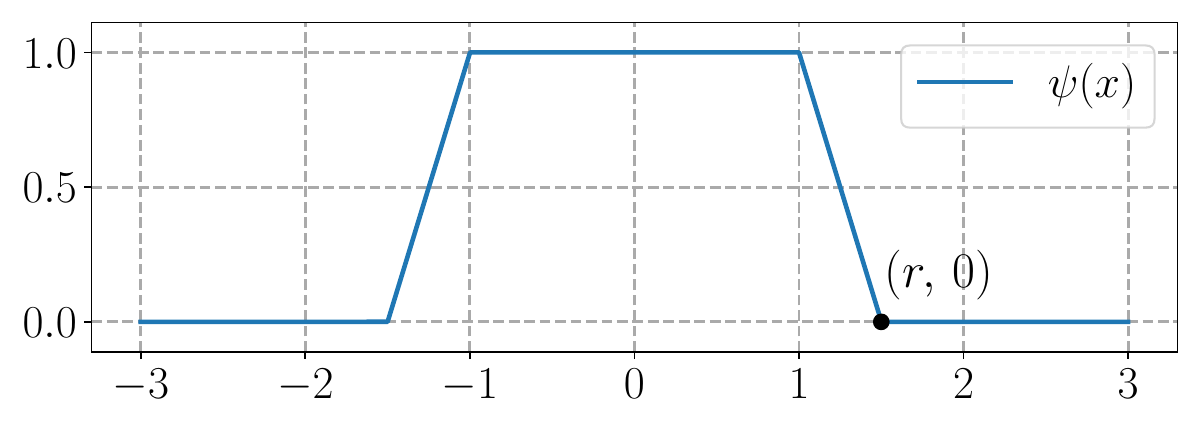}
\caption{An illustration of $\psi$.}
\label{fig:psi:TrapezoidFunc}
\end{figure}

Based on $\psi$, we define the $d$-dimensional function
\[
\Psi=\ReLU\circ \tildePsi,\quad \tn{where }
\tildePsi(\bmx)
\coloneqq
1-d+ \sum_{i=1}^d \psi(x_i).
\]
Since each coordinate function $\psi(x_i)$ is a linear combination of four
\ReLU{} functions, the map $\Psi$ can be realized by a two-hidden-layer \ReLU\ network:
there exist affine maps $\calA_1\in \aff(d,4d)$ and $\calA_2\in \aff(4d,1)$ such that
\[
\Psi=\ReLU\circ \calA_2\circ \ReLU\circ \calA_1.
\]
The cutoff function associated with $Q$ can be defined by the rescaled map
\[
\Gamma_Q(\bmx)
\coloneqq
\Psi\!\left(\tfrac{2}{\ell(Q)}\bigl(\bmx-\bmc(Q)\bigr)\right),
\]
where $\ell(Q)$ and $\bmc(Q)$ denote the side length and center of $Q$,
respectively. 
Equivalently, define the affine rescaling map
\[
\widetilde{\calA}_Q(\bmx)
\coloneqq
\tfrac{2}{\ell(Q)}\bigl(\bmx-\bmc(Q)\bigr).
\]
Then the cutoff function can be written as
\[
\Gamma_Q
=
\ReLU\circ \calA_2\circ \ReLU\circ \calA_1\circ \widetilde{\calA}_Q=\ReLU\circ \calA_2\circ \ReLU\circ \tildecalA_1,
\]
where $\tildecalA_1\coloneqq \calA_1\circ \widetilde{\calA}_Q$ is still an affine map in $\aff(d,4d)$.
% , $\Gamma_Q$ admits the same
% two-layer \ReLU{} representation after absorbing the initial rescaling into the first affine map.
It is straightforward to verify that \(\Psi=1\) on \([-1,1]^d\),
\(0\le \Psi\le1\) on \([-r,r]^d\), and \(\Psi=0\) outside
\([-r,r]^d\). Consequently, \(\Gamma_Q=1\) on \(Q\), \(0\le\Gamma_Q\le1\)
on \(rQ\), and \(\Gamma_Q=0\) outside \(rQ\).
\end{colorenv}

\end{definition}

\medskip
% Below we present an explicit construction of a positive $\varepsilon$-contraction.
Below we present an explicit construction of an operator $T_\epsilon$ on $C([0,1]^d)$ that satisfies conditions~\ref{item:T:C1} and \ref{item:T:C2} and therefore defines a positive $\varepsilon$-contraction. The construction is based on the dilated cube coverings and cutoff functions introduced above.
%\begin{definition}
%\label{def:T:eg:by:NN}
Given $\eps\in \big(0,\tfrac{1}{1+2^d}\big)$, we 
define the operator $T_\eps:C([0,1]^d)\to C([0,1]^d)$ as follows.  
For any $g\in C([0,1]^d)$, set
\[
   M_g \coloneqq  \|g^{+}\|_{L^\infty([0,1]^d)}.
\]
% \[
%    M_g \coloneqq \max_{\bmx \in [0,1]^d} g^{+}(\bmx).
% \]
If $M_g=0$, we simply define $T_\eps g\equiv 0$.  
In the nontrivial case $M_g>0$, the construction proceeds in four steps below.

\begin{enumerate}[label={(\roman*)}]

\item Introduce the superlevel set
\[
   E_{g,\eps} \coloneqq \{\bmx \in [0,1]^d : g(\bmx) 
   > (1-\eps) M_g\}.
\]

\item Define
\[
   \delta_{g,\eps} \coloneqq \sup\Bigl\{\, t \in (0,1) : 
      \omega_{g^+}(2\sqrt{d}\,t) \le (1-\eps-2^d\eps)M_g \Bigr\},
\]
where the positive part \(g^{+}\) is used to ensure uniform control over regions in which \(g \ge 0\).
Here, for a general function \(f \in C([0,1]^d)\), the modulus of continuity \(\omega_f\) is defined by
\[
\omega_f(t)
\coloneqq
\sup\Bigl\{
|f(\bmx)-f(\bmy)| :
\|\bmx-\bmy\|_2 \le t,\;
\bmx,\bmy \in [0,1]^d
\Bigr\} 
\quad \tn{for all }  t \ge 0.
\]
% where \(g^{+}\) is employed to ensure uniform control over regions in which \(g \ge 0\), and
% \(\omega_f(t)\coloneqq \sup\{ |f(\bmx)-f(\bmy)| : \|\bmx-\bmy\|_2 \le t,\ \bmx,\bmy \in [0,1]^d \}\) denotes the modulus of continuity of a general function \(f \in C([0,1]^d)\) for any \(t \ge 0\).

We note that \(\varepsilon\) is chosen sufficiently small so that the factor
\(1-\varepsilon-2^{d}\varepsilon\) remains positive.
Since $g^+\in C([0,1]^d)$ and $\omega_{g^+}(t)\to 0$ as $t\to 0$,
the admissible set in the definition of $\delta_{g,\epsilon}$ is nonempty, and hence $\delta_{g,\epsilon}>0$.

\item 
\begin{colorenv}[blue]
    We then choose a smaller number
\(\tildedelta_{g,\eps}\in(0,\delta_{g,\eps})\), to be specified below, and
consider the uniform grid of cubes
\[
   \widetilde Q_{g,\eps,\bm{\beta}}
   = \prod_{j=1}^d [\beta_j \tildedelta_{g,\eps},\, (\beta_j+1)\tildedelta_{g,\eps}]\quad \tn{for all }  \bm{\beta} \in \mathbb{Z}^d,
\]
which form a cover for $\R^d$.
Since $E_{g,\eps} \subseteq [0,1]^d$ is bounded, only finitely many such cubes intersect it.  
Let $\{Q_{g,\eps,i}\}_{i=1}^{n_{g,\eps}}$ denote those cubes satisfying 
$Q_{g,\eps,i} \cap E_{g,\eps} \neq \emptyset$, so that
\[
   \bigcup_{i=1}^{n_{g,\eps}} Q_{g,\eps,i} \supseteq E_{g,\eps}.
\]
Moreover, since \(M_g>0\), there exists \(\bmx_0\in[0,1]^d\) such that
\(g(\bmx_0)=M_g\). By continuity, \(E_{g,\eps}\) contains a nontrivial relative
neighborhood of \(\bmx_0\) in \([0,1]^d\). Therefore, by choosing
\(\tildedelta_{g,\eps}\in(0,\delta_{g,\eps})\) sufficiently small, we may ensure that
\(E_{g,\eps}\) intersects at least two distinct grid cubes, and hence
\(n_{g,\eps}\ge 2 .\)
After possibly decreasing \(\tildedelta_{g,\eps}\), the
monotonicity of the modulus of continuity also gives
\[
\omega_{g^+}(2\sqrt d\,\widetilde\delta_{g,\varepsilon})
\le (1-\varepsilon-2^d\varepsilon)M_g .
\]
% By the monotonicity of the modulus of continuity, \(\widetilde\delta_{g,\varepsilon}\)
% may be chosen so that the defining modulus estimate remains valid with
% \(t=\widetilde\delta_{g,\varepsilon}\).
\end{colorenv}

\item Define
\[
   T_\eps g 
   \coloneqq 
   \eps M_g \sum_{i=1}^{n_{g,\eps}} \Gamma_{Q_{g,\eps,i}},
\]
where $\Gamma_{Q_{g,\eps,i}}$ is the cutoff function associated with 
$Q_{g,\eps,i}$, as introduced in Definition~\ref{def:dilated-cube-cutoff}.
\end{enumerate}
%\end{definition}

We remark that the operator \(T_\varepsilon\) constructed above satisfies the defining properties of a positive \(\varepsilon\)-contraction, as will be established in Lemma~\ref{lem:T:positive:eps:contraction}. 
Before proving this result, we introduce an auxiliary geometric lemma that plays a crucial role in controlling the overlap of the supports arising in the construction of $T_\epsilon$.

\begin{remark}
The dilation factor $r\in(1,2)$ is introduced to balance two competing requirements. On the one hand, it ensures that any point sufficiently close to a cube intersecting $E_{g,\varepsilon}$ lies within a uniformly bounded neighborhood, which is essential for deriving lower bounds on 
$g$ via continuity estimates. On the other hand, restricting $r<2$
guarantees a uniform upper bound on the number of overlapping dilated cubes at any point. This balance is fundamental to establishing both the positivity and the contractivity properties of $T_\epsilon$.
\end{remark}

\begin{lemma}[Bounded overlap under small centered dilation]
\label{lem:bounded:overlap:scaling}
Let $\delta>0$ and consider the axis-aligned grid of cubes
\[
     Q_{\bm{\beta}}
     := \prod_{j=1}^d [\beta_j\delta,\;(\beta_j+1)\delta]\quad \tn{for all }  \bm{\beta}\in\mathbb{Z}^d.
\]
Then, for any $r\in(1,2)$ and every $\bmx\in\mathbb{R}^d$,
\[
  \bigl|\{\bm{\beta}\in\mathbb{Z}^d : \bmx\in rQ_{\bm{\beta}}\}\bigr|
  \le 2^d .
\]
\end{lemma}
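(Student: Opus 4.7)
The plan is to reduce the problem to a one-dimensional counting argument by exploiting the product structure of the cubes $Q_{\bm{\beta}}$ and their centered dilates.

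First I would make the dilates explicit. Each cube $Q_{\bm{\beta}}$ has center $\bmc(Q_{\bm{\beta}}) = \bigl((\beta_1+\tfrac12)\delta,\dots,(\beta_d+\tfrac12)\delta\bigr)$ and side length $\delta$, so the centered dilate is
\[
rQ_{\bm{\beta}}
= \prod_{j=1}^{d}
\Bigl[\bigl(\beta_j+\tfrac{1-r}{2}\bigr)\delta,\ \bigl(\beta_j+\tfrac{1+r}{2}\bigr)\delta\Bigr].
\]
For a fixed point $\bmx = (x_1,\dots,x_d)\in\mathbb{R}^d$, the condition $\bmx\in rQ_{\bm{\beta}}$ decouples across coordinates into the $d$ one-dimensional conditions
\[
\tfrac{x_j}{\delta} - \tfrac{1+r}{2}\ \le\ \beta_j\ \le\ \tfrac{x_j}{\delta} - \tfrac{1-r}{2},
\qquad j=1,\dots,d.
\]
Each of these confines $\beta_j$ to a closed interval $I_j \subset \mathbb{R}$ of length exactly $r$.

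Next I would bound the number of integers in each $I_j$. Since $1 < r < 2$, any closed interval of length $r$ contains at most two integers: if it contained three consecutive integers $k, k+1, k+2$, its length would be at least $2$, contradicting $r < 2$. Hence $|I_j \cap \mathbb{Z}| \le 2$ for each coordinate $j$.

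Finally, since the set of admissible multi-indices is the product $(I_1 \cap \mathbb{Z}) \times \cdots \times (I_d \cap \mathbb{Z})$, its cardinality is bounded by $\prod_{j=1}^{d} |I_j \cap \mathbb{Z}| \le 2^d$, which gives the claim. There is no genuine obstacle here; the only subtlety is keeping the dilation centered at $\bmc(Q_{\bm{\beta}})$ so that the product structure survives the dilation, which is precisely the reason the centered (rather than corner-based) dilation is used in Definition~\ref{def:dilated-cube-cutoff}.
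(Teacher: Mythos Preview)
Your argument is correct and follows essentially the same route as the paper: both exploit the product structure of the centered dilates to reduce to a one-dimensional count, rewrite the membership condition as $\beta_j$ lying in an interval of length $r<2$, and conclude that each coordinate admits at most two integer values, yielding the bound $2^d$.
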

\begin{proof}
Each cube can be written in centered form as
\[
  Q_{\bm{\beta}}
  = \bm{c}_{\bm{\beta}} + \Big[-\frac{\delta}{2},\,\frac{\delta}{2}\Big]^d 
  \quad \tn{and}\quad 
  \bm{c}_{\bm{\beta}} := \delta\bigl(\bm{\beta}+\tfrac12\mathbf{1}\bigr)\quad \tn{for all } \bm{\beta}\in\mathbb{Z}^d.
\]
Fix $\bmx\in\mathbb{R}^d$.  
The condition $\bmx\in rQ_{\bm{\beta}}$ is equivalent,  coordinatewise, to
\[
  |x_j - c_{\bm{\beta},j}| \le \frac{r\delta}{2}
  \quad  \tn{for  } j=1,2,\dots,d.
\]
Substituting $c_{\bm{\beta},j}:=\delta(\beta_j+\tfrac12)$, this inequality becomes
\[
  \Bigl|\frac{x_j}{\delta} - \bigl(\beta_j+\tfrac12\bigr)\Bigr|
  \le \frac{r}{2}.
\]
Thus, for each coordinate 
$j$, the index $\beta_j$ must lie in an interval of length $r<2$ on the real line.  
Any interval of length strictly less than $2$ contains at most two integers, and hence there are at most two admissible values of $\beta_j$ for each coordinate.

Since the coordinates are independent, the total number of multi-indices 
$\bm{\beta}:=(\beta_1,\dots,\beta_d)$ satisfying $\bmx\in rQ_{\bm{\beta}}$ is bounded above by $2^d$.
\end{proof}

\begin{remark}
The bound in Lemma~\ref{lem:bounded:overlap:scaling} is sharp. When $\bmx$ is a vertex of the grid, exactly $2^d$ dilated cubes contain $\bmx$ for any $r>1$.  The
overlap can exceed $2^d$ only when $r\delta/2 \ge \delta$, that is, when $r\ge 2$, corresponding to a dilation that at least doubles the side length of the cubes.
\end{remark}

With Lemma~\ref{lem:bounded:overlap:scaling} in hand, we are prepared to verify that
the operator \(T_\varepsilon\) satisfies conditions~\ref{item:T:C1}
and~\ref{item:T:C2}. By Lemma~\ref{lem:P1P2:to:positive:eps:constraction},
these two conditions together imply that \(T_\varepsilon\) is a positive \(\varepsilon\)-contraction,
as stated in Lemma~\ref{lem:T:positive:eps:contraction} below.

\begin{lemma}\label{lem:T:positive:eps:contraction}
Let $T_\varepsilon$ be the operator defined above. 
Then $T_\varepsilon$ satisfies conditions~\ref{item:T:C1} and~\ref{item:T:C2}. 
Consequently, $T_\varepsilon$ is a positive $\varepsilon$-contraction.
\end{lemma}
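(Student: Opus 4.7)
The plan is to verify the two structural conditions \ref{item:T:C1} and \ref{item:T:C2} for the operator $T_\varepsilon$; the positive $\varepsilon$-contraction conclusion then follows automatically from Lemma~\ref{lem:P1P2:to:positive:eps:constraction}. The trivial case $M_g=0$ gives $T_\varepsilon g\equiv 0$, for which both conditions hold vacuously, so we focus on $M_g>0$.

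First I would settle the easy half. The lower bound $T_\varepsilon g(\bmx)\ge 0$ in \ref{item:T:C1} is immediate since each cutoff $\Gamma_{Q_{g,\varepsilon,i}}$ is nonnegative and the scalar $\varepsilon M_g$ is nonnegative. For \ref{item:T:C2}, suppose $g(\bmx)\ge (1-\varepsilon)M_g$, so $\bmx\in E_{g,\varepsilon}$. Since the grid cubes $\widetilde Q_{g,\varepsilon,\bm{\beta}}$ cover $\mathbb{R}^d$, at least one of them contains $\bmx$; this cube necessarily meets $E_{g,\varepsilon}$ and therefore appears among the $Q_{g,\varepsilon,i}$. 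On that cube the cutoff equals $1$, so $T_\varepsilon g(\bmx)\ge \varepsilon M_g$, giving \ref{item:T:C2}.

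The main obstacle is the upper bound $T_\varepsilon g(\bmx)\le g^{+}(\bmx)$ in \ref{item:T:C1}, and this is where the quantitative choice of $\delta_{g,\varepsilon}$ and the overlap lemma play a decisive role. If $\bmx$ lies outside every dilate $rQ_{g,\varepsilon,i}$, all cutoffs vanish and there is nothing to prove. Otherwise, fix some index $i_0$ with $\bmx\in rQ_{g,\varepsilon,i_0}$. By construction $Q_{g,\varepsilon,i_0}$ meets $E_{g,\varepsilon}$, so there is some $\bmy\in Q_{g,\varepsilon,i_0}$ with $g(\bmy)\ge (1-\varepsilon)M_g$, in particular $g^{+}(\bmy)=g(\bmy)\ge (1-\varepsilon)M_g$. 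Because both $\bmx$ and $\bmy$ belong to $rQ_{g,\varepsilon,i_0}$, which is a cube of side $r\,\delta_{g,\varepsilon}<2\delta_{g,\varepsilon}$, a coordinatewise bound gives $\|\bmx-\bmy\|_2\le 2\sqrt{d}\,\delta_{g,\varepsilon}$. The defining property of $\delta_{g,\varepsilon}$, combined with the continuity of $\omega_{g^{+}}$, then yields
\[
|g^{+}(\bmx)-g^{+}(\bmy)|\le \omega_{g^{+}}(2\sqrt{d}\,\delta_{g,\varepsilon})\le (1-\varepsilon-2^{d}\varepsilon)M_g,
\]
and hence $g^{+}(\bmx)\ge (1-\varepsilon)M_g-(1-\varepsilon-2^{d}\varepsilon)M_g=2^{d}\varepsilon\,M_g$.

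It remains to control the sum of cutoff values at $\bmx$ from above. Since the cubes $Q_{g,\varepsilon,i}$ are distinct members of the uniform grid of side $\delta_{g,\varepsilon}$, Lemma~\ref{lem:bounded:overlap:scaling} (applicable because $r\in(1,2)$) bounds the number of indices $i$ with $\bmx\in rQ_{g,\varepsilon,i}$ by $2^{d}$. As each cutoff lies in $[0,1]$, we conclude
\[
T_\varepsilon g(\bmx)=\varepsilon M_g\sum_{i=1}^{n_{g,\varepsilon}}\Gamma_{Q_{g,\varepsilon,i}}(\bmx)\le \varepsilon M_g\cdot 2^{d}\le g^{+}(\bmx),
\]
which completes \ref{item:T:C1}. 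Invoking Lemma~\ref{lem:P1P2:to:positive:eps:constraction} then identifies $T_\varepsilon$ as a positive $\varepsilon$-contraction. The delicate part of this plan is the matching of two seemingly unrelated estimates—the $2^{d}$ geometric overlap bound and the continuity-based lower bound $g^{+}(\bmx)\ge 2^{d}\varepsilon M_g$—which is exactly what motivates the threshold $(1-\varepsilon-2^{d}\varepsilon)M_g$ in the definition of $\delta_{g,\varepsilon}$ and the standing assumption $\varepsilon<1/(1+2^{d})$.
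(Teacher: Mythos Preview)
Your proposal is correct and follows essentially the same route as the paper: verify \ref{item:T:C2} by noting that any $\bmx\in E_{g,\varepsilon}$ lies in some selected cube, and verify the upper bound in \ref{item:T:C1} by combining the $2^d$ overlap bound from Lemma~\ref{lem:bounded:overlap:scaling} with the modulus-of-continuity lower bound $g^{+}(\bmx)\ge 2^d\varepsilon M_g$ on the union of dilates. Your use of $g^{+}$ rather than $g$ in the continuity estimate is slightly cleaner than the paper's write-up, but the underlying argument is identical.
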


\begin{proof}
We verify conditions~\ref{item:T:C1} and~\ref{item:T:C2}.

\medskip
\noindent\textbf{Verification of condition~\ref{item:T:C1}.}

By the construction of $T_\epsilon$,
\[
T_\varepsilon g(\bmx)
= \varepsilon M_g \sum_{i=1}^{n_{g,\varepsilon}} \Gamma_{Q_{g,\varepsilon,i}}(\bmx)
\ge 0 ,
\]
since each cutoff function is nonnegative.  
We may assume $M_g>0$, as the case $M_g=0$ implies $T_\varepsilon g\equiv 0$ and is trivial.

Fix any
\[
\bmx \in  {U}_{g,\epsilon}:=\Bigl( \bigcup_{i=1}^{n_{g,\varepsilon}} rQ_{g,\varepsilon,i} \Bigr)\cap[0,1]^d.
\]
By Lemma~\ref{lem:bounded:overlap:scaling}, the point $\bmx$ belongs to at most $2^d$
dilated cubes. Therefore,
\[
0 \le \sum_{i=1}^{n_{g,\varepsilon}} \Gamma_{Q_{g,\varepsilon,i}}(\bmx) \le 2^d,
\]
which yields
\begin{equation}
\label{eq:Tg:upper}
0 \le T_\varepsilon g(\bmx)
\le 2^d \varepsilon M_g \quad \tn{for all }
\bmx\in {U}_{g,\epsilon}.
\end{equation}

Now choose an index $i$ such that $\bmx\in rQ_{g,\varepsilon,i}$.
Since $Q_{g,\varepsilon,i}\cap E_{g,\varepsilon}\neq\emptyset$ by construction,
there exists $\tilde{\bmx}\in Q_{g,\varepsilon,i}\cap E_{g,\varepsilon}$.
Because both $\bmx$ and $\tilde{\bmx}$ lie in the same dilated cube $rQ_{g,\varepsilon,i}$,
\[
\|\bmx-\tilde{\bmx}\|_2
\le r\sqrt{d}\,\tildedelta_{g,\varepsilon}
\le 2\sqrt{d}\,\tildedelta_{g,\varepsilon}.
\]
Recall that
\[
E_{g,\varepsilon}=\{\bmx: g(\bmx)> (1-\varepsilon)M_g\}
\]
and the definition of $\tildedelta_{g,\varepsilon}$, we obtain
\[
\begin{aligned}
% g(\bmx)=
g^+(\bmx)
&\ge g^+(\tilde{\bmx}) - \omega_{g^+}(2\sqrt{d}\,\tildedelta_{g,\varepsilon}) \\
&\ge (1-\varepsilon)M_g - (1-\varepsilon-2^d\varepsilon)M_g
= 2^d\varepsilon M_g>0,
\end{aligned}
\]
Thus,
\begin{equation}
\label{eq:g:lower}
2^d\varepsilon M_g \le g^+(\bmx)=g(\bmx) \le M_g\quad \tn{for all }
\bmx\in\mathcal{U}_{g,\epsilon}.
\end{equation}

If instead
\[
\bmx\notin \bigcup_{i=1}^{n_{g,\varepsilon}} rQ_{g,\varepsilon,i},
\]
then $\Gamma_{Q_{g,\varepsilon,i}}(\bmx)=0$ for all $i$, and hence
\begin{equation}
\label{eq:Tg:outside}
T_\varepsilon g(\bmx)=0 .
\end{equation}
Combining \eqref{eq:Tg:upper}, \eqref{eq:g:lower}, and \eqref{eq:Tg:outside}, we conclude that
\[
0 \le T_\varepsilon g(\bmx) \le g^+(\bmx)
\quad  \text{for all }\bmx\in[0,1]^d,
\]
which establishes condition~\ref{item:T:C1}.

\medskip
\noindent\textbf{Verification of condition~\ref{item:T:C2}.}

Let $\bmx\in[0,1]^d$ satisfy
\(
g(\bmx) >  (1-\varepsilon)M_g.
\)
Then $\bmx\in E_{g,\varepsilon}$ and hence belongs to some cube $Q_{g,\varepsilon,i}$.
By definition of the cutoff function,
\(
\Gamma_{Q_{g,\varepsilon,i}}(\bmx)=1.
\)
Therefore,
\[
T_\varepsilon g(\bmx)
= \varepsilon M_g \sum_{i=1}^{n_{g,\varepsilon}} \Gamma_{Q_{g,\varepsilon,i}}(\bmx)
\ge \varepsilon M_g ,
\]
which verifies condition~\ref{item:T:C2}.

\medskip
Both conditions are satisfied, and hence $T_\varepsilon$ is a positive
$\varepsilon$-contraction.
\end{proof}

%%%%%%%%%%%%%%%%%%%%%%%%%%%%%%%%%%%%%
%%%%%%%%%%%%%%%%%%%%%%%%%%%%%%%%%%%%%
\subsection{Main argument}
\label{sec:detailed:proof:thm:main:mgdl:one:op}

We now assemble the components developed above.
The operator $T_\varepsilon$ provides a one-sided contraction on the positive
part of a continuous function, while its symmetrization
$S_\varepsilon$ yields a balanced $\eps$-contraction on the entire function.
We now show that $S_\varepsilon$ can be realized by a finite
multigrade \ReLU{} architecture, thereby proving
Theorem~\ref{thm:main:mgdl:one:op}.

\begin{proof}[Proof of Theorem~\ref{thm:main:mgdl:one:op}]
The proof proceeds in two stages.

\medskip
\noindent\textbf{Stage I: Network realization of the contraction.}

Let $T_\varepsilon$ be the operator constructed in the previous section. Throughout the proof, we apply $T_\varepsilon$
to both $f$ and $-f$; hence all quantities
\[
M_g,\quad  \Gamma_{Q_{g,\varepsilon,i}},\quad  n_{g,\varepsilon}
\]
are understood with $g=f$ or $g=-f$.

By definition, for $\bmx\in[0,1]^d$,
\[
T_\varepsilon f(\bmx)
= \varepsilon M_f \sum_{i=1}^{n_{f,\varepsilon}}
\Gamma_{Q_{f,\varepsilon,i}}(\bmx),
\]
and likewise,
\[
T_\varepsilon(-f)(\bmx)
= \varepsilon M_{-f} \sum_{i=1}^{n_{-f,\varepsilon}}
\Gamma_{Q_{-f,\varepsilon,i}}(\bmx).
\]
We remark that $n_{f,\varepsilon}$ and $n_{-f,\varepsilon}$ may be zero. For instance, $n_{f,\varepsilon}=0$ if $f(\bm{x}) \le 0$ for all $\bm{x}\in [0,1]^d$. 
By construction, \(n_{g,\varepsilon}\ge2\) whenever \(M_g>0\). Since
\(f\not\equiv0\), at least one of \(M_f\) and \(M_{-f}\) is positive, and hence
\(n_{f,\varepsilon}+n_{-f,\varepsilon}\ge2 .\)

By Lemma~\ref{lem:T:positive:eps:contraction}, $T_\varepsilon$ is a positive
$\varepsilon$-contraction, and
Lemma~\ref{lem:positive:to:balanced:eps:contraction} implies that the
symmetrized operator
\[
S_\varepsilon f \coloneqq T_\varepsilon f - T_\varepsilon(-f)
\]
is a balanced $\varepsilon$-contraction.

For notational convenience, define
\[
n \coloneqq n_{f,\varepsilon} + n_{-f,\varepsilon} \ge2,
\]
and introduce unified sequence $\{\hat M_k\}_{k=1}^n$ by
\[
\hat M_k :=
\begin{cases}
M_f  & \tn{for } k = 1,\dots,n_{f,\varepsilon},\\[2pt]
- M_{-f}  & \tn{for } k = n_{f,\varepsilon}+1,\dots,n,
\end{cases}
\]
and
$\{\hat Q_k\}_{k=1}^n$ by
\[
\hat Q_k :=
\begin{cases}
Q_{f,\varepsilon,k}  & \tn{for }  k = 1,\dots,n_{f,\varepsilon},\\[2pt]
Q_{-f,\varepsilon,k-n_{f,\varepsilon}}  & \tn{for }  k = n_{f,\varepsilon}+1,\dots,n.
\end{cases}
\]
Then
\[
S_\varepsilon f
= \sum_{k=1}^n h_k,
\quad  \tn{where}\   
h_k \coloneqq \varepsilon \hat M_k\, \Gamma_{\hat Q_k}.
\]

We remark that, according to the definition of $T_\varepsilon$, the value of
$n$ depends primarily on the modulus of continuity $\omega_f$ (which reflects
the complexity of the target function $f$) and the prescribed $\varepsilon$.

\begin{colorenv}[blue]
    Let $\ell_k$ and $\bmc_k$ denote the side length and center of $\hat Q_k$.
By Definition~\ref{def:dilated-cube-cutoff},
\[
\Gamma_{\hat Q_k}(\bmx)
= \Psi\!\left(\tfrac{2}{\ell_k}(\bmx-\bmc_k)\right),
\]
and hence there exist affine maps $\caltildeA_{k,1}\in \aff(d,4d)$ and $\caltildeA_{k,2}\in \aff(4d,1)$ such that
\[
\Gamma_{\hat Q_k}=\ReLU\circ  \caltildeA_{k,2}\circ \ReLU\circ \caltildeA_{k,1}
\]

As illustrated in Figure~\ref{fig:h:k}, there exist affine maps
\begin{gather*}
\mathcal{A}_{1}\in\aff(d,5d+1), \quad
\calA_2,\cdots,\calA_{n}\in\aff(5d+1,5d+1),\quad \calA_{n+1}\in\aff(5d+1,1),
\\[3pt]
\mathcal{A}_{2}^{\mathrm{out}},\cdots, \mathcal{A}_{n}^{\mathrm{out}}\in\aff(5d+1, 1),\quad  \calA_{n+1}^{\mathrm{out}}\in\aff(1, 1)
\end{gather*}
such that
\[
h_k
= \mathcal{A}_{k+1}^{\mathrm{out}}\circ
\ocomp_{i=1}^{k+1} (\ReLU\circ\mathcal{A}_i).
\]

\begin{figure}[!ht]
\centering
\includegraphics[width=0.95\textwidth]{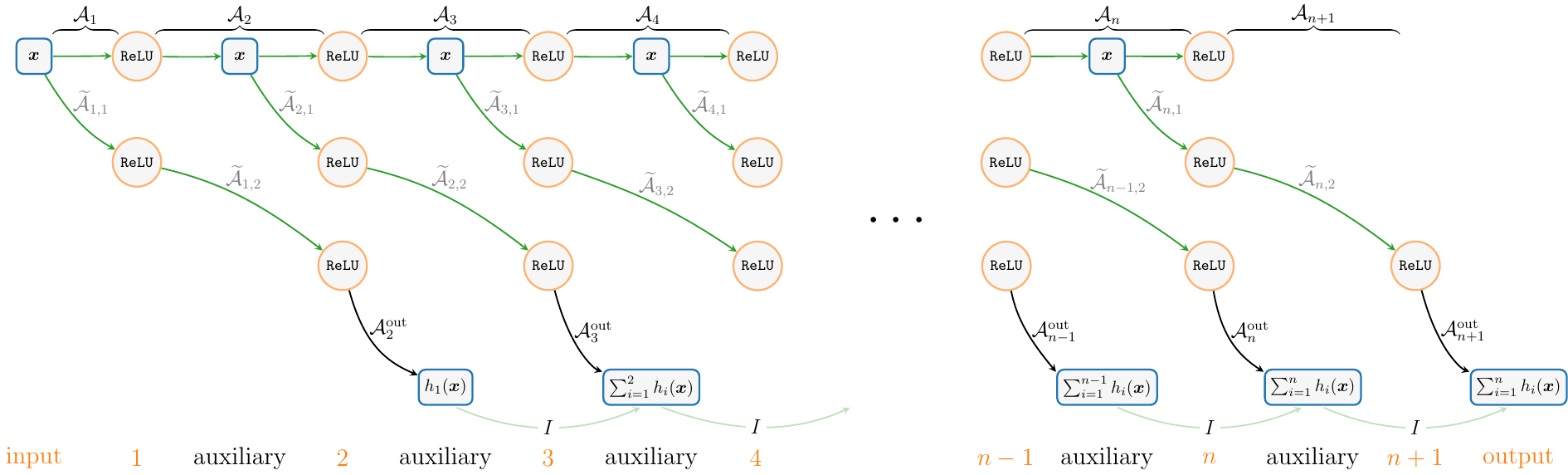}
\caption{An illustration of the network architecture used to realize $h_k$ for $\bmx\in[0,1]^d$.}
\label{fig:h:k}
\end{figure}

More precisely, for any $\bmx\in\mathbb{R}^d$, $\bmy\in\mathbb{R}^{4d}$, and
$z\in\mathbb{R}$, we can define
\begin{gather*}
    \calA_1(\bmx)\coloneqq \begin{bmatrix}
        \bmx\\[1pt]  \tildecalA_{1,1}(\bmx) \\[1pt] 0
    \end{bmatrix}
,\quad
    \calA_{n+1}\left(
    \begin{bmatrix}
        \bmx\\  \bmy \\ z
    \end{bmatrix}
    \right)\coloneqq   \tildecalA_{n,2}(\bmy),\quad 
    \calA_{n+1}^{\mathrm{out}}(z)\coloneqq  \varepsilon \hat M_n  z,
\\[3pt]
        \calA_k\left(
    \begin{bmatrix}
        \bmx\\   \bmy \\   z
    \end{bmatrix}
    \right)\coloneqq  \begin{bmatrix}
        \bmx\\[1pt]  \tildecalA_{k,1}(\bmx) \\[1pt]  \tildecalA_{k-1,2}(\bmy)
    \end{bmatrix},\quad 
    \calA_k^{\mathrm{out}}\left(
    \begin{bmatrix}
        \bmx\\  \bmy \\   z
    \end{bmatrix}
    \right)\coloneqq   \varepsilon \hat M_{k-1} z\quad  \tn{for $k=2,\cdots,n$.}
\end{gather*}
We note that the feature vector produced after \(k\ge 2\)  hidden layers takes
the form
\begin{equation*}
    \begin{bmatrix}
        \ReLU(\bmx)\\[2pt]  \ReLU\big(\tildecalA_{k,1}(\bmx)\big) \\[2pt]   \Gamma_{\hatQ_{k-1}}(\bmx)
    \end{bmatrix}
    =\begin{bmatrix}
        \bmx\\[2pt]  \ReLU\big(\tildecalA_{k,1}(\bmx)\big) \\[2pt]   \Gamma_{\hatQ_{k-1}}(\bmx)
    \end{bmatrix}
    \in \R^{5d+1} \quad \tn{for all } \bmx\in [0,1]^d,
\end{equation*}
from which we deduce
\[
\left(
\ocomp_{i=1}^n (\ReLU\circ\mathcal{A}_i)(\bmx)
\right)[1:d]
=
\bmx .
\]
The architecture in Figure~\ref{fig:h:k} realizes $S_\varepsilon$ as a residual
\ReLU{} network of width $5d+1$ and depth $n+1$, where $I$ denotes the identity
map. 
% The black arrow represents the affine output map
% $\mathcal{A}_{k}^{\mathrm{out}}$. 
The black arrow represents the affine output map
$\mathcal{A}_{k}^{\mathrm{out}}$, which provides the final scaling to obtain $h_k$.
Each intermediate map $\mathcal{A}_i$ is
formed by composing the affine maps adjacent to an ``auxiliary layer'', which
contains no activation and is introduced only to display the decomposition of
$h_k$. Summing these grade-wise blocks therefore gives exactly the recursive
structure required in Theorem~\ref{thm:main:mgdl:one:op}.

% The architecture in Figure~\ref{fig:h:k} can be viewed as a residual \ReLU{}
% network of width $5d+1$ and depth $n+1$, where $I$ denotes the identity map.
% The black arrow represents the affine output map
% $\mathcal{A}_{k}^{\mathrm{out}}$, which provides the final scaling to obtain
% $\Psi_k$. Each intermediate map $\mathcal{A}_i$ is obtained by composing the
% affine maps before and after the corresponding ``fake layer''. Such layers
% contain no activation and are introduced only to separate affine maps and make
% the decomposition of $h_k$ explicit. Hence, summing the grade-wise updates gives
% the recursive structure in Theorem~\ref{thm:main:mgdl:one:op}.
% Consequently, the entire operator $S_\varepsilon$ is realized by a residual
% \ReLU{} network of width $5d+1$ and depth $n+1$, with each grade contributing one
% update block, as required in
% Theorem~\ref{thm:main:mgdl:one:op}.

\medskip
\noindent\textbf{Stage II: Decay of the residual.}

Clearly, $R_1 = f$ and
\[
R_{k+1} =  R_{k} - \mathcal{A}_{k+1}^{\mathrm{out}} \circ 
   \ocomp_{i=1}^{k+1} (\ReLU \circ \mathcal{A}_i)
   =R_{k}-h_k
   \quad   \tn{for}\   k=1,\dots,n.
\]
By construction,
\[
R_{n+1} = f - \sum_{k=1}^n h_k = f - S_\varepsilon f.
\]
Since $S_\varepsilon$ is a balanced $\varepsilon$-contraction,
\[
\|R_{n+1}\|_{L^\infty([0,1]^d)}
= \|f - S_\varepsilon f\|_{L^\infty([0,1]^d)}
\le (1-\varepsilon)\,\|f\|_{L^\infty([0,1]^d)}.
\]

We now show that each update produces pointwise monotone decay and a strict
decrease in the $L^p$-norm.
By Lemma~\ref{lem:T:positive:eps:contraction},
\[
0 \le T_\varepsilon f(\bmx) \le f^+(\bmx),
\quad 
0 \le T_\varepsilon(-f)(\bmx) \le f^-(\bmx),
\]
which implies
\begin{equation}\label{eq:sum:hk:ub}
    \sum_{k=1}^{n_{f,\varepsilon}} h_k(\bmx) \le f^+(\bmx),
\quad 
-\sum_{k=n_{f,\varepsilon}+1}^{n} h_k(\bmx) \le f^-(\bmx).
\end{equation}

Let
\[
\Omega_1 \coloneqq \{\bmx\in [0,1]^d: f(\bmx)\ge 0\},
\quad 
\Omega_2 \coloneqq \{\bmx\in [0,1]^d: f(\bmx)<0\}
\]
and observe that $[0,1]^d=\Omega_1\cup\Omega_2$.

\medskip
\noindent\emph{Case 1: $k\le n_{f,\varepsilon}$ (positive-side update).}

We first consider the pointwise estimate.
For $\bmx\in\Omega_1$,
by \eqref{eq:sum:hk:ub},
\[
R_{k+1}(\bmx)
= f(\bmx)-\sum_{j=1}^k h_j(\bmx)
= f^{+}(\bmx)-\sum_{j=1}^k h_j(\bmx)
\ge 0 .
\]
Moreover, we have $h_k(\bmx)=\eps M_f \Gamma_{\hatQ_k}(\bmx) \ge0$ for $\bmx\in\Omega_1$ and hence
\[
0 \le R_{k+1}(\bmx) = R_k(\bmx)-h_k(\bmx)\le R_k(\bmx).
\]
For $\bmx\in\Omega_2$, $h_k(\bmx)=0$, so $R_{k+1}(\bmx)=R_k(\bmx)$.
Thus 
% $|f_{k+1}|\le |f_k|$ pointwise.
\[
|R_{k+1}(\bmx)| \le |R_k(\bmx)|\quad \tn{for }    \bmx \in \Omega_1 \cup \Omega_2 = [0,1]^d .
\]

We next  consider the $L^p$-norm  estimate for $p\in [1,\infty)$. 
Since \(\hat Q_k\cap [0,1]^d\neq \emptyset\) and \(h_k(\bm{x})>0\) on
\(\hat Q_k\), there exists \(\hat r\in(1,r)\), sufficiently close to \(1\),
such that
\[
    h_k(\bm{x})>0
    \quad \text{on } \hat  r\hat Q_k\cap[0,1]^d,
\]
where \(\hat r\hat Q_k\cap[0,1]^d\) has positive measure. By the construction
of the cutoff functions, we have
\[
    \hat r\hat Q_k\cap[0,1]^d \subseteq \Omega_1 .
\]
Moreover, \(0\le h_k(\bm{x})\le R_k(\bm{x})\) on \(\Omega_1\), while   \(h_k(\bm{x})=0\) on
\(\Omega_2\). Hence
\[
\begin{aligned}
    \int_{\Omega_1} |R_{k+1}(\bm{x})|^p \, d\bm{x}
    = \int_{\Omega_1} |R_k(\bm{x})-h_k(\bm{x})|^p \, d\bm{x}  < \int_{\Omega_1} |R_k(\bm{x})|^p \, d\bm{x},
\end{aligned}
\]
where the strict inequality follows from the fact that \(h_k>0\) on a subset
of \(\Omega_1\) with positive measure. On the other hand,
\[
\begin{aligned}
    \int_{\Omega_2} |R_{k+1}(\bm{x})|^p \, d\bm{x}
    = \int_{\Omega_2} |R_k(\bm{x})-h_k(\bm{x})|^p \, d\bm{x} = \int_{\Omega_2} |R_k(\bm{x})|^p \, d\bm{x}.
\end{aligned}
\]
Therefore,
\[
    \|R_{k+1}\|_{L^p([0,1]^d)}
    <
    \|R_k\|_{L^p([0,1]^d)} .
\]

\medskip
\noindent\emph{Case 2: $k>n_{f,\varepsilon}$ (negative-side update).}

We first consider the pointwise estimate.
For $\bmx \in \Omega_2$, we have
 $f^{+}(\bmx)=0$, implying
\[
h_j(\bmx)=0 \quad  \tn{for all }    j \le n_{f,\eps}.
\]
Then by \eqref{eq:sum:hk:ub},
\[
R_{k+1}(\bmx)
= f(\bmx)-\sum_{j=1}^k h_j(\bmx)
= -f^{-}(\bmx)-\sum_{j=n_{f,\eps}+1}^k h_j(\bmx)
\le 0 .
\]
Moreover,  \[
h_k(\bmx) = -\eps M_{-f}\,\Gamma_{\hat Q_k}(\bmx) \le 0 \quad  \tn{for all }   
\bmx\in\Omega_2,
\]
from which we deduce
\[
0 \ge R_{k+1}(\bmx)
= R_k(\bmx)-h_k(\bmx)
\ge R_k(\bmx) \quad  \tn{for all }    \bmx\in\Omega_2 .
\]
For $\bmx \in \Omega_1$, we have $h_k(\bmx)=0$, and hence
\[
R_{k+1}(\bmx)
= R_k(\bmx) - h_k(\bmx)
= R_k(\bmx).
\]
Consequently,
\[
|R_{k+1}(\bmx)| \le |R_k(\bmx)| \quad  \tn{for all }     \bmx \in \Omega_1 \cup \Omega_2 = [0,1]^d .
\]

We next  consider the $L^p$-norm  estimate for $p\in [1,\infty)$.
Since \(\hat Q_k\cap [0,1]^d\neq \emptyset\) and \(h_k(\bm{x})<0\) on
\(\hat Q_k\), there exists \(\tilde r\in(1,r)\), sufficiently close to \(1\),
such that
\[
    h_k(\bm{x})<0
    \quad \text{on } \tilde r\hat Q_k\cap[0,1]^d,
\]
where \(\tilde r\hat Q_k\cap[0,1]^d\) has positive measure. By the construction
of the cutoff functions, we have
\[
    \tilde r\hat Q_k\cap[0,1]^d \subseteq \Omega_2 .
\]
Moreover,  \(R_k(\bm{x})\le h_k(\bm{x})\le 0\) on \(\Omega_2\), while  \(h_k(\bm{x})=0\) on
\(\Omega_1\). Hence
\[
\begin{aligned}
    \int_{\Omega_2} |R_{k+1}(\bm{x})|^p \, d\bm{x}
    &= \int_{\Omega_2} |R_k(\bm{x})-h_k(\bm{x})|^p \, d\bm{x}  < \int_{\Omega_2} |R_k(\bm{x})|^p \, d\bm{x},
\end{aligned}
\]
where the strict inequality follows from the fact that \(h_k<0\) on a subset
of \(\Omega_2\) with positive measure. On the other hand,
\[
\begin{aligned}
    \int_{\Omega_1} |R_{k+1}(\bm{x})|^p \, d\bm{x}
    &= \int_{\Omega_1} |R_k(\bm{x})-h_k(\bm{x})|^p \, d\bm{x} = \int_{\Omega_1} |R_k(\bm{x})|^p \, d\bm{x}.
\end{aligned}
\]
Therefore,
\[
    \|R_{k+1}\|_{L^p([0,1]^d)}
    <
    \|R_k\|_{L^p([0,1]^d)} .
\]

Combining both cases, for $k=1,2,\dots,n$, we have
\[
|R_{k+1}(\bmx)| \le |R_k(\bmx)| \quad \tn{for all }  \bmx\in[0,1]^d,
\]
and for every $p\in [1,\infty)$,
\[
\|R_{k+1}\|_{L^p([0,1]^d)} < \|R_k\|_{L^p([0,1]^d)},
\]
which completes the proof.
\end{colorenv}
\end{proof}

\section{Conclusion}\label{sec:conclusion}

In this work, we establish a rigorous theoretical foundation for MGDL as a structured error refinement framework within function approximation theory.
By formulating the grade-wise training process in operator-theoretic terms, we identify explicit structural conditions under which a fixed-width multigrade \ReLU{} architecture 
% achieves strict residual contraction at every grade and converges uniformly to the target function.
\begin{colorenv}
    achieves pointwise residual domination, strict \(L^p\) residual decrease at
each nontrivial grade for $p\in [1,\infty)$, and uniform convergence to the target function.
\end{colorenv}
These results provide a constructive and transparent explanation of how approximation accuracy can be progressively enhanced with depth through controlled, hierarchical residual updates.

Our analysis positions MGDL as a principled alternative to conventional end-to-end training.
Rather than optimizing all layers simultaneously, MGDL resolves approximation error via a sequence of monotone refinements, thereby reducing adverse nonconvex interactions and elucidating the functional role of depth in deep neural networks.
The resulting theoretical guarantees not only complement existing empirical observations but also connect MGDL to classical concepts in approximation theory and hierarchical modeling.

Several directions for future research naturally follow from this work.
A primary extension is to broaden the analysis beyond fixed-width \ReLU{} networks to encompass other activation functions and architectural families.
Another open problem concerns the optimization landscape of practical MGDL implementations, in which network parameters are learned from data rather than constructed analytically, and how such training dynamics affect stability and generalization.
Finally, integrating multigrade principles into modern architectures, including transformers, neural operators, and physics-informed models, offers a promising pathway toward scalable learning systems that unify theoretical guarantees with practical effectiveness.

%%%%%%%%%%%%%%%%%%%%%%%%%%%%%%%%%%%%%
%%%%%%%%%%%%%%%%%%%%%%%%%%%%%%%%%%%%%
\section*{Acknowledgments}

Shijun Zhang was partially supported by
the start-up fund  P0053092  from 
 The 
Hong Kong Polytechnic University.
Zuowei Shen was partially supported under the Distinguished Professorship of National University of Singapore.
Yuesheng Xu was supported in part by the US National Science Foundation under Grant
DMS-2208386.

%%%%%%%%%%%%%%%%%%%%%%%%%%%%%%%%%%%%%%%%%%%%%%%%%%%%%%%%%%%%%%%
% %\clearpage
% \bibliographystyle{plainurl}
% % \vskip 0.2in
% \bibliography{references}

\hypersetup{
% colorlinks=true,
citecolor=black,linkcolor=black,urlcolor=black}
\renewcommand{\doi}[1]{\textnormal{\doitext}~\texttt{\href{https://doi.org/#1}{\detokenize{#1}}}}

%%%\clearpage
% \bibliographystyle{siam}
\bibliographystyle{plainnat}   
\bibliography{references}
% }
%%%%%%%%%%%%%%%%%%%%%%%%%%%%%%%%%%%
\end{document}